\documentclass{article}
\usepackage{enumitem}
\usepackage[preprint]{neurips_2026}

\usepackage[utf8]{inputenc} % allow utf-8 input
\usepackage[T1]{fontenc}    % use 8-bit T1 fonts
\usepackage{hyperref}       % hyperlinks
\usepackage{booktabs}       % professional-quality tables
\usepackage{amsfonts}       % blackboard math symbols
\usepackage{nicefrac}       % compact symbols for 1/2, etc.
\usepackage{microtype}      % microtypography
\usepackage{xcolor}         % colors
\usepackage{amsmath,amssymb,amsthm,mathtools}
\usepackage{mathrsfs}
\usepackage{textcomp}
\usepackage{subcaption}
\usepackage{dsfont}
\usepackage{booktabs}
\usepackage{hyperref}

\newtheorem{theorem}{Theorem}

\newtheorem{proposition}[theorem]{Proposition}
\newtheorem{lemma}[theorem]{Lemma}
\newtheorem{corollary}[theorem]{Corollary}
\theoremstyle{remark}
\newtheorem{remark}[theorem]{Remark}
\title{Born Discrete, Made Smooth:\\ Variational Formulation of Shallow Neural Networks }
\author{  
Matej Benko \\
  Institute of Mathematics, Faculty of Mechanical Engineering, Brno University of Technology \\
  Technická $2896/2$, $616\,69$, Brno, Czech Republic \\
  \texttt{Matej.Benko@vutbr.cz}
  \And
Pierre Bousquet\\
Université de Toulouse, INSA Toulouse, CNRS, IMT, F-31062 Toulouse Cedex 9, France\\
\texttt{pierre.bousquet@math.univ-toulouse.fr} 
   \And
 Iwona Chlebicka \\
  University of Warsaw \\
  ul. Banacha 2, 02-097 Warsaw, Poland\\
   \texttt{i.chlebicka@mimuw.edu.pl} \\
   \AND
B{\l}a\.zej Miasojedow \\
  University of Warsaw \\
  ul. Banacha 2, 02-097 Warsaw,  Poland\\\texttt{b.miasojedow@mimuw.edu.pl}  
}

\usepackage{dutchcal}

\newcommand{\R}{{\mathbb{R}}}

\newcommand{\wt}{\widetilde}

\newcommand{\ve}{{\varepsilon}}

\newcommand{\cF}{{\mathscr{F}}}

\newcommand{\cJ}{{\mathscr{J}}}

\newcommand{\cR}{{\mathscr{R}}}
\newcommand{\cM}{{\mathscr{M}}}
\newcommand{\cMat}{{\cM^{\rm at}}}
\newcommand{\cW}{{\mathcal{W}}}
\newcommand{\dx}{{\,{\rm d}x}}

\renewcommand{\d}{{\,{\rm d}}}
\newcommand{\dth}{{\,{\rm d}\theta}}
\newcommand{\dmu}{{\,{\rm d}\mu}}

\newcommand{\dy}{{\,{\rm d}y}}
\newcommand{\vt}{{\vartheta}}

\newcommand{\co}{c_\omega}

\def\d{{\,{\rm d}}}

\newcommand{\supp}{{\rm supp\,}}

\definecolor{darkgreen}{rgb}{0.00, 0.50, 0.00} 
\definecolor{airforceblue}{rgb}{0.36,0.54, 0.66}

\begin{document}

\maketitle

 \begin{abstract}
%Although neural networks are remarkably effective, their underlying principles remain elusive. By replacing the training problem with a well-posed variational surrogate, we show that the resulting solution inherits remarkable properties that reveal what neural networks are implicitly doing. We identify a family of $\lambda$-convex functionals  over parameter densities in weighted Sobolev spaces. We prove that these variational problems are globally well-posed, stable, and rapidly convergent, exhibiting unexpected almost $C^3$ regularity.

%Unlike the Wasserstein approaches commonly used in training descriptions, which face limited regularity and discretization challenges, our approach focuses on the properties of the solutions. This gives us direct access to convex analysis and elliptic regularity while remaining consistent with finite-width networks via atomic measures. As a practical consequence, the optimal parameter density is obtained by solving a \emph{single linear system} (no iterative optimization required) with generalization error explicitly controlled by the regularization parameter $\alpha$ at rate $1/\alpha$, and finite-width networks of width $N$ achieving the continuum optimum within $O(1/N)$. This perspective complements existing NTK and transport-driven analyses by revealing a higher-order smoothness and stable dynamics previously difficult to obtain. Our results suggest new regularization strategies for infinite-width training and provide a pathway toward multilayer extensions.

Although neural networks are remarkably effective, their underlying optimization principles remain theoretically elusive, often characterized by non-convex landscapes and stochastic heuristics. In this work, we propose a paradigm shift by replacing the discrete training problem of shallow neural networks with a well-posed \textit{continuum variational surrogate}. We identify a family of $\lambda$-convex functionals over parameter densities in weighted Sobolev spaces and prove that these variational problems are globally well-posed, stable, and exhibit unexpected almost $C^3$ regularity. 

Unlike existing Wasserstein-based or Mean-Field approaches, which often face limited regularity and discretization challenges, our formulation provides direct access to \textit{elliptic regularity and convex analysis}. This allows us to prove that the optimal parameter density can be obtained by solving a \textit{single linear system}, bypassing iterative optimization entirely. We establish explicit generalization error controls at a rate of $1/\alpha$ relative to the regularization parameter, and prove that finite-width networks of size $N$ achieve the continuum optimum at an $O(1/N)$ rate. This perspective bridges the gap between the Neural Tangent Kernel (NTK) and feature-learning regimes, providing a principled framework for understanding over-parameterization through the lens of variational calculus.
\end{abstract}

\section{Introduction}
%Understanding the surprising effectiveness of optimizing highly overparameterized neural networks remains a central question in deep learning theory. A powerful approach models training variationally, as minimizing a smooth convex loss functional over measures, with gradient descent acting on  a sparse linear combination of atoms from a large parameterized family. This represents functions as integrals over parameter distributions and focuses on the associated gradient flow. As width grows, parameters collapse into a compact form, revealing convergence, feature learning, and enabling rigorous guarantees. We introduce a new family of variational functionals for shallow networks that, in the limit, ensure not only existence, uniqueness, and stability of minimizers, but also a level of smoothness rare in mean-field problems.

Understanding the surprising empirical effectiveness of gradient-based optimization on 
highly overparameterized, nonconvex objectives remains one of the central 
challenges in deep learning theory. Explanation of non-overfitting in such problems is in a short supply. A widely studied approach 
focuses on representing the 
network output as an integral over a distribution of hidden units with
\begin{equation}
    \label{eq-def-fN-intro}
f_N(x):=\tfrac1N\textstyle\sum_{i=1}^N w_i h(\theta_i,x)\,, \quad \text{
where \((w_i,\theta_i)_{i=1}^N\) stand for the unknown parameters,}
\end{equation}
and studies the minimization of a risk functional over the space of measures, with gradient descent interpreted as a flow on that space. This 
mean-field perspective has revealed deep connections to interacting particle 
systems, optimal transport, and Wasserstein gradient flows 
\citep{mei2018mean, chizat2018global, FRF, sirignano2020mean, 
rotskoff2018trainability, E2018, E2020}, and has led to convergence 
guarantees, insights into feature learning, and PDE-based analyses of training 
dynamics. In parallel, the neural tangent kernel (NTK) framework 
\citep{jacot2018ntk} and related overparameterization results 
\citep{du2019gradient, allenzhu2019convergence} explain why gradient descent 
succeeds in a linearized infinite-width regime. More recent efforts extend 
mean-field PDE analyses to deeper, residual architectures 
\citep{chen2024resnet}, sharpen particle and Langevin approximations 
\citep{nitanda2024particle, mousavi2025multiindex}.\citet{dereich2024minimizers} 
study existence of minimizers in the finite-width landscape via a closure of 
the search space; \citet{gribonval2025shallow} establish nearly optimal 
approximation rates for shallow $\mathrm{ReLU}^k$ networks on Sobolev classes. Both works operate at finite networks, 
whereas we concentrate on the regularity of the continuum minimizer. Finally, \citet{oh2025sobolev} add a Sobolev penalty on the network output over the 
input domain $D$ to improve optimization conditioning. In contrast, our regularization acts on the parameter density over $\Omega$ to enforce variational well-posedness and regularity. This makes the two approaches complementary yet structurally distinct. 

%}and introduce 
%Sobolev training objectives to improve conditioning and convergence 
%\citep{oh2025sobolev}.

%Despite this progress, two principal frameworks dominate the field and each entails a characteristic trade-off. Mean-field approaches based on Wasserstein geometry  support a particle interpretation of gradient descent and describe the evolution of parameter distributions via continuity equations, but convexity holds only in the displacement sense and is delicate even then, minimizers enjoy at best Lipschitz regularity, and the passage from the continuum to finite-width models requires non-trivial approximation arguments. The NTK framework achieves a genuinely convex objective, but does so by linearizing the network around its initialization; the resulting guarantees are local in nature and do not extend to the nonlinear, feature-learning regime. These two perspectives leave partly open a different and equally fundamental set of questions about the \emph{variational problem itself}: Does the infinite-width objective admit a unique and stable minimizer? What regularity do optimal parameter distributions possess? Can one obtain global convexity guarantees for a nonlinear continuum model, without linearization? And what quantitative convergence properties follow from the intrinsic geometry of the continuum problem?

Despite this progress, two principal frameworks dominate the field, each with characteristic trade-offs. Mean-field/Wasserstein approaches support a particle interpretation of gradient descent but offer only displacement convexity and require non-trivial approximation to pass to finite-width models. The NTK framework achieves a truly convex objective but only by linearizing around initialization, limiting guarantees to the lazy-training regime. Both leave partly open a different set of questions about the variational problem itself: Does the infinite-width objective admit a unique, stable minimizer? What regularity do optimal parameter distributions possess? Can one obtain global convexity for a nonlinear continuum model without linearization?

In this paper we propose a third route that addresses these questions directly. 
We formulate shallow neural network training as a variational problem over 
\emph{parameter densities} in a weighted Sobolev space $\mathcal{W} = 
W^{1,2}(\Omega)\cap L^2_\omega(\Omega)$ forming a linear Hilbert-space setting 
that retains the infinite-width mean-field spirit while gaining direct access 
to convex analysis, elliptic PDE theory, and quantitative gradient flow 
estimates. The regularized objective on $\cW$ which is given by
\begin{equation*}%\label{Fab_intro}
\cF^{(f)}_{\alpha,\beta}(u)
=
\mathcal{R}(f, u\,\mathrm{d}\theta)
+ \alpha\|u\|_{L^2_\omega}^2
+ \beta\|\nabla u\|_{L^2}^2
\end{equation*}
combines a squared-risk term measuring data fidelity with $L^2_\omega$ and 
Sobolev regularization. The $L^2_\omega$ term induces coercivity and 
$\lambda$-convexity  for $\lambda>0$; the Sobolev term promotes smoothness and plays a role analogous to entropy regularization in the well-established, yet computationally hard, Wasserstein 
formulations \citep{JKO, ambrosio2008gradient, Santambrogio-survey}. In turn, the minimizer $u^*_f$ to $\cF^{(f)}_{\alpha,\beta}$ can be interpreted as the implicit target of NN in the overparameterized regime, while the regularization parameters $\alpha,\beta$ play the role of weight decay and smoothness bias.

Our Hilbert-space framework is built on the classical calculus of variations staying fully compatible with standard shallow network parametrizations. Although our analysis does not directly model stochastic gradient descent, the exponential convergence rate $e^{-\lambda t}$ of the gradient flow associated to our variational problem and the structure of the Euler--Lagrange equation suggest that the $L^2_\omega$ geometry may serve as a useful proxy for understanding the implicit bias in overparameterized networks. In particular, the $\lambda$-convex 
structure implies that any  discretization of the gradient flow inherits stability consistent with the 
empirical robustness of SGD near overparameterized minima 
\citep{mei2018mean, chizat2018global}. In our study,  the minimization of $\cF_{\alpha,\beta}^{(f)}$ can be reduced to the optimization of a quadratic functional depending on the coordinates of the competitors in a chosen  basis of $L^2_\omega(\Omega)$. Since the projection of the  minimizer onto a finite dimensional subspace of $L^2_\omega(\Omega)$  can be obtained by merely solving a linear system of equations, the optimization is a fast and easy procedure, as it does not require any iteration  or gradient descent arguments.

The variational formulation we propose yields a cluster of remarkable properties.

\textbf{No Lavrentiev gap.}
A key consistency result shows that the infimum of the risk is the same 
whether one optimizes over atomic measures (finite-width networks), Sobolev 
densities, or smooth compactly supported functions. No approximation error is 
introduced by passing to the continuum, and finite-width networks of width $N$ 
achieve the continuum optimum up to an $O(1/N)$ error. This places the 
continuum theory as an exact relaxation of the 
finite discrete training problem. The absence of Lavrentiev's phenomenon is a typical key step in inferring regularity of minimizers \citep{badisu,fomami,bcdfm,mira}. 

\textbf{$\lambda$-convexity.}
The functional $\cF_{\alpha,\beta}^{(f)}$ is $\lambda$-convex on 
$L^2_\omega$ with $\lambda = 2\alpha\geq 0$, which gives the exponential convergence of the gradient flow, and it  is $2\min(\alpha,\beta)$-convex on \(\cW\), which leads to the existence and the uniqueness of the minimizer, as well as its stability with respect to the data. These are global properties of the model, not consequences of  linearization, equipping with strong tools, which are typically not available in the Wasserstein approach,~\citep{chizat2025convergencedriftdiffusionpdesarising}. The $L^2_\omega$-gradient flow of $\mathcal{F}^{(f)}_{\alpha,\beta}$, the continuous-time analogue of gradient descent, converges exponentially fast to $u^*$, cf. \citep{AmBrSe1}. A~time-discrete implicit Euler scheme provides an approximation with explicit error, combining the exponential rate with a $\sqrt{\tau}$ term, where \(\tau\) is the discretization step.

\textbf{Near-$C^3$ smoothness.}
The minimizer $u^*$ of $\cF_{\alpha,\beta}^{(f)}$ solves an explicit Euler--Lagrange equation. Moreover, it is $C^{2,s}$ for every $s<1$, which gives a major structural advantage: it prevents 
pathological concentrations of mass in parameter space  and opens the door to higher-order numerical schemes via classical PDE methods. In contrast,  in the Wasserstein approach, regularity of minimizers is inherited from the activation function, whereas here we get surprising smoothness even in the case of merely locally Lipschitz activation functions, see \citep[Proposition~3]{mei2018mean}. This suggests that the predictions of the neural network concentrate around graphs of almost $C^3$-functions, which illustrates the non-overfitting phenomenon, robustness of training, and suggests that the neural networks are attracted to the lower-dimensional manifolds, cf.~\citep{Belkin2019Reconciling,Arora2019Implicit,NEURIPS2020_f21e255f}. The stability results we infer provide a concrete form of implicit bias toward smooth solutions and explain why the variational formulation yields stable and well-generalizing models.

\textbf{Contributions.} 
%Our framework is the theory of infinite-width  shallow neural networks and our goal is to provide a  formulation of it that is entirely variational.  We identify a class of regularized training objectives replacing the study on the infinite-width learning process with a surrogate that, falling into the realm of the classical calculus of variations, is globally well-posed, stable, and rapidly convergent. This brings a novel insight into what neural networks are implicitly doing. Let us list the details.
%
Our framework provides an entirely variational formulation for infinite-width shallow networks. Replacing the discrete training process with a regularized surrogate, we identify a class of globally well-posed and stable objectives. This approach follows three steps: (i) transforming the $N$-neuron optimization into a continuum problem over weighted Sobolev spaces; (ii) exploiting the structure to ensure a unique and almost $C^3$ regular minimizer; (iii) bridging to finite-width networks via atomic measures with $O(1/N)$ consistency. This offers novel insight into the implicit bias of neural networks, as detailed below.

\begin{itemize}[nosep, leftmargin=*]
    \item \textbf{A variational theory of shallow network training.}
    We formulate shallow neural network training as a Sobolev-space variational problem (Section~\ref{sec:Variational}) and concentrate on the properties of solutions. We prove that this continuum formulation is \emph{exact}: there is no Lavrentiev gap, so atomic, Sobolev, and smooth minimization 
    classes coincide (Theorem~\ref{theo:no-Lavr}). 
    We show that this infinite-width problem is quantitatively faithful to finite networks via an $O(1/N)$ approximation bound 
    (Proposition~\ref{prop:inf-equivalence}).

    \item \textbf{Global convexity and well-posedness beyond existing frameworks.}
    We introduce a regularized functional $\mathcal{F}_{\alpha,\beta}^{(f)}$ that is globally $2\min(\alpha, \beta)$-convex on $\cW$
    yielding existence, uniqueness, and stability of minimizers. 
      We establish that the associated $L^2_\omega$ gradient flow converges exponentially fast to equilibrium at rate $e^{-2\alpha t}$
    (Corollary~\ref{cor:exp-convergence}). In particular, this provides an explicit analytical mechanism for non-overfitting in the infinite-width regime. 

    \item \textbf{Implications to ML.} The minimizers to the regularized functional admit an Euler--Lagrange characterization and exhibit near-$C^3$ regularity via elliptic PDE arguments 
    (Theorem~\ref{thm:convexity}), a level of smoothness not accessible in standard infinite-width analyses. Propositions~\ref{prop:stability} and~\ref{prop:stabillity_L^infty}, together with Corollary~\ref{cor:generalization}, show quantitative dependence of the minimizer under perturbations providing a link between regularization, stability, and generalization.   

\end{itemize} 

\textbf{Shallowness.} Despite the dominance of deep architectures, the single-hidden-layer setting already captures the core variational complexity of neural network training: a nonlinear, non-convex objective over an infinite-dimensional space. Resolving these obstructions in this setting isolates the essential structure and provides a foundation for multilayer extensions \citep{chen2024resnet}.

\textbf{Limitations.} The current formulation is for shallow (one-hidden-layer) networks. Moving from one to more layers complicates the structure of the optimization problem, which becomes strongly nonlinear, see Section~\ref{sec:numerics}. In fact, for two hidden layers the parameter density lives on $\Omega_1 \times \Omega_2$ and the risk functional becomes a composition, making $\cR$ possibly nonconvex even after Sobolev regularization. The Euler--Lagrange system becomes a coupled nonlinear PDE rather than a linear equation, so existence of smooth minimizers requires additional structural assumptions such as small initial condition or a perturbative regime around a known solution. The gradient flow no longer reduces to a linear semigroup and convergence is elusive.

\textbf{Summary.}
Our approach is related to the mean-field literature 
\citep{mei2018mean, chizat2018global, FRF, nitanda2024particle, 
mousavi2025multiindex}, but operates in a different analytical regime.   The connection to 
Sobolev-type objectives recently explored in \citep{oh2025sobolev} is 
suggestive, though the focus there is on conditioning of the optimization 
problem rather than on variational well-posedness or regularity of minimizers. 
Our results also complement convex and function-space formulations of 
infinite-width networks \citep{bach2017breaking, ongie2019function}, which 
concentrate on description of the learning process; we focus instead 
on the variational structure of the objective and the regularity of its 
minimizers. Finally, while 
the NTK framework \citep{jacot2018ntk} achieves convexity through 
linearization and is well-suited to the lazy-training regime 
\citep{du2019gradient, allenzhu2019convergence}, our $\lambda$-convexity holds 
for the same nonlinear problem that arises in the infinite-width limit of 
shallow network training, without any linearization assumption.

\section{Variational formulation}\label{sec:Variational}

We develop a continuum approach to shallow networks based on parameter distributions. 

\subsection{Exact variational characterization of neural network training}
Let $D\subset \R^d$ be the input (a bounded Borel set equipped with the Lebesgue measure), let $f\in L^2(D)$ and let $\Omega\subset \R^{d+1}$ be the parameter space of hidden units, assumed to be open. 
%We also assume that $(1+|\cdot|^2)f(\cdot)\in L^1(D)$, which always holds when $D$ is bounded. 
Let  $\theta=(\theta_0,\theta')\in\Omega$ parametrize the hidden layer with $\theta_0\in \R$ being interpreted as  the bias, $\sigma$ be an activation function that we assume to be continuous with the typical choices of ReLU or sigmoid functions. We also consider the function  $h: \Omega\times D\to \R$ given by \begin{equation}
    \label{eq:h-def}h(\theta,x):=\sigma(\theta_0+\theta'\cdot x)\,
\end{equation}
and we require that there exists \(C_h>0\) such that for every \((\theta,x)\in \Omega\times D\),
\begin{equation}
\label{eq:h-ass}
|h(\theta,x)|\leq C_h(1+|x|)(1+|\theta|)\,.
\end{equation}
This condition holds true in the typical choices of ReLU and sigmoid functions. In this framework training a shallow neural network consists in approximating $f$ with functions $f_N$ that  are given by~\eqref{eq-def-fN-intro}.

The learning process amounts to the minimization of $\int_D (f(x)-f_N(x))^2\dx$ over $(w_i,\theta_i)_{i=1}^N$. 
Each function \(f_N\) can be rewritten as
\begin{equation}
\label{eq:fN-muN}
f_N(x)=\int_\Omega h(\theta,x)\dmu_N(\theta)\quad\text{for }\ \mu_N = \frac{1}{N} \sum_{i=1}^N w_i \delta_{\theta_i}\in\cMat(\Omega)\,.
\end{equation}
Observe that $\mu_N$ is a finite atomic signed measure  (with finite number of atoms). Denote by \(\cM_a(\Omega)\) the set of those  finite signed Borel measures $m$ such that \(\int_{\Omega}|\theta|^a\d |m|(\theta)<\infty.\) Then, we generalize \eqref{eq:fN-muN} for $m\in\cM_1(\Omega)$ by introducing the function
\begin{equation}
\label{eq:fm} 
f_m(x)=\int_\Omega h(\theta,x)\d m(\theta)\,.
\end{equation}  
In view of~\eqref{eq:h-ass} and the fact that \(m\in \cM_1(\Omega)\), the function \(f_m\) is well-defined and belongs to \(L^{2}(D)\). We can thus consider the population risk functional
\begin{equation}
    \label{eq:risk}
\cR(f,m):=\int_D (f(x)-f_m(x))^2\dx\,.
\end{equation}
The training problem becomes the minimization of $\cR(f,m)$ over $m\in\cM_1(\Omega)$, as suggested in~\cite[Section 2.3]{FRF}. To obtain a Hilbert-space formulation, we restrict attention to measures that admit a density, writing $\d m(\theta)=u(\theta)\dth$ for some Sobolev function $u$. Since $\Omega$ is typically unbounded, we work in a weighted space that controls integrability, so that all quantities under consideration are well-posed. Let $\omega:\Omega\to[1,\infty)$ be a smooth weight such that
\begin{equation}
\label{eq:weight-assumptions}
\co:=\textstyle\int_\Omega  {(1+|\theta|)^4}\tfrac{1}{\omega(\theta)}\dth <\infty\,.
%\qquad
%\int_\Omega \frac{1}{\omega(\theta)}\dth <\infty.
\end{equation}
%We denote $\bco: =\int_{\Omega}\frac{1}{\omega(\theta)}\dth$
%and $\hco:=\int_{\Omega}\frac{|\theta|^2}{\omega(\theta)}\dth$.
Moreover, we define
\[
L^2_\omega(\Omega)
=
\left\{
u:\textstyle\int_\Omega u(\theta)^2\omega(\theta)\dth<\infty
\right\}
\qquad\text{and}\qquad
\cW := W^{1,2}(\Omega)\cap L^2_\omega(\Omega)\,.
\]
Condition~\eqref{eq:weight-assumptions} and the Schwarz inequality imply that for every \(v\in L^{2}_\omega(\Omega)\), the measure \(v\dx\) belongs to \(\cM_1(\Omega)\).
Our main result establishes a striking equivalence: neural network training admits an exact variational characterization. In particular, the continuum formulation is not merely an approximation, but an exact relaxation of the finite discrete problem, which reveals its intrinsic variational structure.
\begin{theorem}
    \label{theo:no-Lavr} Under the conditions of this section, for every \(f\in L^2(D)\), one has:
\begin{equation}
    \label{eq:R-no-Lavr}
\inf_{\mu \in \cMat(\Omega)} \cR(f,\mu)=
\inf_{m \in \cM_1 (\Omega)} \cR(f,m)
=
\inf_{v \in \cW} \cR(f,v\dx)=
\inf_{v \in C_c^\infty(\Omega)} \cR(f,v\dx)\,.
\end{equation}
\end{theorem}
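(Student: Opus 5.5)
We compare all four infima with $I:=\inf_{m\in\cM_1(\Omega)}\cR(f,m)$. Three of the sets sit inside $\cM_1(\Omega)$: finite atomic measures (finitely many atoms, each with finite first moment), measures $v\dx$ with $v\in\cW\subset L^2_\omega(\Omega)$ (by~\eqref{eq:weight-assumptions} and Schwarz), and $C_c^\infty(\Omega)\subset\cW$. So each of the other three infima is $\ge I$, and the $C_c^\infty$ infimum is $\ge$ the $\cW$ infimum. It therefore suffices to show two inequalities: the atomic infimum is $\le I$, and the $C_c^\infty$ infimum is $\le I$. Both follow once we show that, for every $m\in\cM_1(\Omega)$ and $\ve>0$, there is a competitor of the required type whose output is close to $f_m$ in $L^2(D)$. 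This suffices because $\cR(f,\cdot)=\|f-f_{\cdot}\|_{L^2(D)}^2$ is continuous in $f_{\cdot}$ for the $L^2(D)$ norm.

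\textbf{Step 1 (truncation).} Let $K_n\Subset\Omega$ be compacts exhausting $\Omega$ and set $m_n=m\llcorner K_n$. By~\eqref{eq:h-ass},
\[
|f_m(x)-f_{m_n}(x)|\le C_h(1+|x|)\int_{\Omega\setminus K_n}(1+|\theta|)\d|m|(\theta).
\]
The right-hand side tends to $0$ by dominated convergence. Since $D$ is bounded, $f_{m_n}\to f_m$ in $L^2(D)$. We may thus assume $m$ is supported in a compact $K\subset\Omega$. On $K\times\overline D$ the map $h$ is uniformly continuous: $\sigma$ is continuous, and $\overline D$ is compact because $D$ is bounded (if $\sigma(\theta_0+\theta'\cdot x)$ is considered on $\overline D$, it is still continuous).

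\textbf{Step 2 (atomic approximation).} Partition $K$ into finitely many Borel pieces $A_j$ of diameter $<\delta$, pick $\theta_j\in A_j$, and set $\mu=\sum_j m(A_j)\delta_{\theta_j}$. To match the form $\frac1N\sum w_i\delta_{\theta_i}$, take $N$ equal to the number of pieces and $w_j=N m(A_j)$. Then
\[
|f_m(x)-f_\mu(x)|\le \sum_j\int_{A_j}|h(\theta,x)-h(\theta_j,x)|\d|m|\le \eta(\delta)\,|m|(K),
\]
where $\eta$ is the modulus of continuity of $h$ on $K\times\overline D$. This bound is uniform in $x$, so the atomic infimum is $\le I$.

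\textbf{Step 3 (smooth approximation).} Mollify: let $v_\ve=m*\rho_\ve$ with $\ve<\operatorname{dist}(K,\partial\Omega)/2$, so that $v_\ve\in C_c^\infty(\Omega)$. By Fubini,
\[
f_{v_\ve\dx}(x)=\int\!\!\int h(\theta+z,x)\rho_\ve(z)\,dz\,dm(\theta).
\]
Hence $|f_{v_\ve}(x)-f_m(x)|\le\eta(\ve)|m|(K)$, using the modulus on the compact $\ve$-neighbourhood of $K$. This gives the $C_c^\infty$ inequality and closes the chain.

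The main obstacle is ensuring the continuity and uniformity estimates hold without any regularity of $\sigma$ beyond continuity. This is why the truncation to compact support, and hence uniform continuity of $h$, is done first. The only place where unboundedness of $\Omega$ matters is Step 1, and there the tail is controlled purely by~\eqref{eq:h-ass} together with the first-moment assumption $m\in\cM_1$.
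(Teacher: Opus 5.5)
Your proof is correct, but it follows a different route from the paper's.

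\textbf{The paper's route.} It works through the quadratic decomposition \eqref{eq:quadratic-decomposition}. It first shows that $Q$ and the kernel $K(\theta,\vt)$ are continuous, by dominated convergence, using only continuity of $\theta\mapsto h(\theta,x)$ together with \eqref{eq:h-ass}. It then proves $\int Q\d m_j\to\int Q\d m$ and $\iint K\d m_j\d m_j\to\iint K\d m\d m$ along the approximating sequences. For mollification this is a Fubini argument with a product kernel. For the atomic approximation, the paper chains through the smooth case: it first reduces to a density $u\in C_c^\infty(\Omega)$, then replaces $u\d\theta$ by grid Riemann sums $\sum_j u(a_j)|\Sigma_j|\delta_{a_j}$. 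The double integral is handled by testing against tensor products and a Stone--Weierstrass density argument on $K\times K$.

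\textbf{Your route.} You bypass the quadratic structure entirely. You bound $\|f_m-f_{\tilde m}\|_{L^\infty(D)}$ by the modulus of continuity of $h$ on a compact set $K\times\overline D$. You then conclude with the continuity of $g\mapsto\|f-g\|^2_{L^2(D)}$. Your atomic step quantizes an arbitrary compactly supported measure directly, putting masses $m(A_j)$ on a fine Borel partition. So your atomic and smooth steps are independent and parallel rather than chained.

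\textbf{Comparison.} Your argument is shorter and gives an explicit uniform bound $\eta(\delta)\,|m|(K)$. It also avoids the convergence of product measures altogether.

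It does rely on joint continuity of $h$ in $(\theta,x)$. That is available here, because $h(\theta,x)=\sigma(\theta_0+\theta'\cdot x)$ with $\sigma$ continuous. The paper's argument needs only continuity in $\theta$ for each fixed $x$, so it covers more general feature maps.

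Your approach adapts to that weaker setting with little change. For fixed $x$, uniform continuity of $\theta\mapsto h(\theta,x)$ on a compact set still gives $f_{\tilde m}(x)\to f_m(x)$. Then dominated convergence in $x$ yields convergence in $L^2(D)$, with dominating function $2C_h(1+|x|)(1+\max_{K'}|\theta|)\,|m|(K)$ coming from \eqref{eq:h-ass}.

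\textbf{Cosmetic point.} In Step 3, take the modulus of continuity on a fixed compact neighbourhood $K'$ of $K$ that does not depend on $\ve$. For instance, use the closed $\ve_0$-neighbourhood with $\ve_0<\operatorname{dist}(K,\partial\Omega)$, and restrict to $\ve<\ve_0$.
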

From a variational perspective, this shows that no Lavrentiev gap occurs for \(\cR(f,\cdot)\) across  measures, Sobolev maps, and smooth functions. In particular, minimizing sequences can be transferred between these classes without loss of optimality, which is a fundamental structural property for regularity theory. This bridges the discrete and continuum regimes at a deeper level, ensuring that analytical and numerical approximations faithfully capture the true variational behavior of the problem, and places neural network training firmly within the scope of variational analysis.

Let us point out the rate of convergence of the risk with respect to the width of the neural network in the spirit of \citep{mei2018mean}. Observe that {$L^{2}_\omega(\Omega)$} is a natural space to consider $\cR$ and further regularity results will be provided for the problem on the right-hand side below. 
%We observe that by the Schwarz inequality and the condition \eqref{eq:weight-assumptions}, any function in \(\cW\) is the density of a measure in \(\cM_1(\Omega)\).
In the next proposition, we denote by \(\cM^{\rm at}_N(\Omega)\) the set of those purely atomic signed measures on \(\Omega\) that involve at most \(N\) atoms. Observe that such measures can be written as in \eqref{eq:fN-muN} for suitable coefficients \(w_i\).
\begin{proposition}
\label{prop:inf-equivalence}
Let $\cR$ be given by~\eqref{eq:risk}. Then, there exists  \(C>0\) depending only on \(D, h, \omega\) such that for every \(f\in L^{2}(D)\), for every \(N\geq 2\), 
\[
\inf_{\mu\in \cM_{N}^{\rm at}(\Omega)}\cR(f,\mu) \leq \inf_{v\in L^{2}_\omega(\Omega)} \left(\cR(f,v\dx)+ \tfrac{C}{N}\|v\|_{L^{2}_\omega (\Omega)}^2\right)\,.
\]
As a consequence, if there exists a bounded minimizing sequence for \(\cR(f,\cdot)\) in \(L^{2}_\omega(\Omega)\), then
\[
\inf_{\mu\in \cM_{N}^{\rm at}(\Omega)}\cR(f,\mu) \leq \inf_{v\in L^{2}_\omega(\Omega)}\cR(f,v\dx)+O\left(\tfrac{1}{N}\right) 
\,.
\]
\end{proposition}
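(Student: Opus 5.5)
We use the probabilistic (Maurey-type) sampling argument. Fix \(v\in L^2_\omega(\Omega)\) with \(v\neq 0\), and take the probability density \(p(\theta)=\omega(\theta)^{-1}/Z\) on \(\Omega\), where \(Z=\int_\Omega \omega^{-1}\dth\le \co<\infty\). We rewrite
\[
f_v(x)=\int_\Omega h(\theta,x)v(\theta)\dth=\mathbb{E}_{\theta\sim p}\big[g(\theta,x)\big],\qquad g(\theta,x):=h(\theta,x)\frac{v(\theta)}{p(\theta)}.
\]
Next, we draw \(\theta_1,\dots,\theta_N\) i.i.d. from \(p\) and set
\[
\mu_N=\frac1N\sum_{i=1}^N \frac{v(\theta_i)}{p(\theta_i)}\,\delta_{\theta_i}\in\cM^{\rm at}_N(\Omega).
\]
Then \(f_{\mu_N}(x)=\frac1N\sum_i g(\theta_i,x)\), and this is an unbiased estimator of \(f_v(x)\).

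By the bias–variance decomposition (the cross term vanishes by unbiasedness and Fubini),
\[
\mathbb{E}\,\cR(f,\mu_N)=\cR(f,v\dx)+\int_D \mathbb{E}\big(f_{\mu_N}(x)-f_v(x)\big)^2\dx\le \cR(f,v\dx)+\frac1N\int_D\mathbb{E}_p\big[g(\theta,x)^2\big]\dx .
\]
We compute the variance term. Since \(1/p=Z\omega\),
\[
\mathbb{E}_p[g^2]=\int_\Omega h(\theta,x)^2 v(\theta)^2\,Z\,\omega(\theta)\dth .
\]
By \eqref{eq:h-ass} this is bounded by
\[
C_h^2Z(1+|x|)^2\int_\Omega(1+|\theta|)^2v^2\omega\dth .
\]
The factor \((1+|\theta|)^2\) is the obstacle: \(L^2_\omega\) does not control it directly.

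To handle it, we modify the sampling density so the growth is absorbed into \(p\). Take
\[
p(\theta)=\frac{(1+|\theta|)^2\omega(\theta)^{-1}}{Z'},\qquad Z'=\int_\Omega(1+|\theta|)^2\omega^{-1}\dth\le\co .
\]
On \(\{v\neq 0\}\) we have \(p>0\), so the weights are well defined. Then
\[
\mathbb{E}_p[g^2]=\int_\Omega \frac{h^2v^2}{p}\dth\le C_h^2(1+|x|)^2Z'\int_\Omega v^2\omega\dth .
\]
Integrating over the bounded set \(D\) gives
\[
\int_D\mathbb{E}_p[g^2]\dx\le C\|v\|^2_{L^2_\omega},\qquad C:=C_h^2\,\co\int_D(1+|x|)^2\dx .
\]
This constant depends only on \(D\), \(h\) and \(\omega\).

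Since the expectation satisfies this bound, some realization \(\mu_N\) does too. Hence
\[
\inf_{\cM^{\rm at}_N}\cR(f,\cdot)\le\cR(f,v\dx)+\frac CN\|v\|^2_{L^2_\omega},
\]
and taking the infimum over \(v\) yields the first claim.

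For the consequence, let \((v_k)\) be a minimizing sequence with \(\|v_k\|_{L^2_\omega}\le M\). Applying the first claim to each \(v_k\) gives
\[
\inf_{\cM^{\rm at}_N}\cR\le \cR(f,v_k\dx)+CM^2/N,
\]
and letting \(k\to\infty\) gives the \(O(1/N)\) statement.

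The main step requiring care is the choice of importance density. It must make the second moment finite using only the weight assumption \eqref{eq:weight-assumptions}, which the choice \(p\propto(1+|\theta|)^2/\omega\) does. The measurability and Fubini justifications for exchanging expectation with the \(x\)-integral are routine, given the finiteness just established.
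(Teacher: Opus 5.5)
Your argument is correct, but it follows a different route from the paper's proof.

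\textbf{The paper's route.} It adapts Mei--Montanari--Nguyen:
\begin{enumerate}
\item It writes $m=v\,\mathrm{d}\theta=\alpha_+\nu_+-\alpha_-\nu_-$ via the Jordan decomposition.
\item It draws $N$ i.i.d.\ atoms from each of $\nu_+$ and $\nu_-$, with deterministic weights $\pm\alpha_\pm/N$, so $2N$ atoms in total.
\item It expands $\cR$ through $Q$ and $K$, and discards the negative diagonal correction using $\iint K\,\mathrm{d}\nu_\pm\,\mathrm{d}\nu_\pm\ge 0$.
\item This gives, for every $m\in\cM_2(\Omega)$,
\[
\mathbb{E}\,\cR(f,\bar\rho)\le \cR(f,m)+\frac{|m|(\Omega)}{N}\int_\Omega K(\tau,\tau)\,\mathrm{d}|m|(\tau).
\]
\item Two Cauchy--Schwarz estimates follow. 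Here the fourth power in \eqref{eq:weight-assumptions} is needed, to control $\int_\Omega(1+|\tau|)^2|v|\,\mathrm{d}\tau$.
\item A passage from $2\lfloor N/2\rfloor$ to $N$ atoms yields $C=4(C_h')^2c_\omega$.
\end{enumerate}

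\textbf{Your route.} You use importance sampling from a fixed, $v$-independent proposal $p\propto(1+|\theta|)^2/\omega$, together with a pointwise-in-$x$ bias--variance decomposition. This buys you:
\begin{enumerate}
\item a shorter proof, with no Jordan decomposition and no $2N\to N$ adjustment;
\item exactly $N$ atoms, valid for every $N\ge 1$;
\item the better constant $(C_h')^2c_\omega$;
\item reliance only on the weaker condition $\int_\Omega(1+|\theta|)^2\omega^{-1}\,\mathrm{d}\theta<\infty$, since both your variance bound and $v\,\mathrm{d}\theta\in\cM_1(\Omega)$ need only this second-moment integrability.
\end{enumerate}

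\textbf{What the paper's route buys.} Its intermediate estimate holds for arbitrary, possibly singular, measures in $\cM_2(\Omega)$, not just $L^2_\omega$ densities. Your scheme requires a density with finite $\int (\mathrm{d}m/\mathrm{d}p)^2\,\mathrm{d}p$. The paper's neurons are also sampled from the parameter distribution $|m|$ itself, with deterministic output weights $\pm\alpha_\pm$. This matches the mean-field picture of a network whose hidden units are drawn from the learned parameter density. In your construction the output weights $v(\theta_i)/p(\theta_i)$ are random and can be arbitrarily large.
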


\subsection{Towards justification of non-overfitting: stability of training}

By expanding the risk term and using the Fubini theorem, we obtain 
for \(f\in L^{2}(D)\) and \(m\in \cM_1(\Omega)\), that
\begin{equation}
\label{eq:quadratic-decomposition}
\cR(f,m)
=
\|f\|_{L^2(D)}^2
-2\int_\Omega Q(\theta)\d m(\theta)
+\int_\Omega\int_\Omega K(\theta,\vt)\d m(\theta)\d m(\vt)\,,
\end{equation}
where
\begin{equation}
\label{eq:bK}
Q(\theta):=\int_D f(x)h(\theta,x)\dx\qquad\text{and}\qquad
K(\theta,\vt):=\int_D h(\theta,x)h(\vt,x)\dx\,.
\end{equation}
In view of  \eqref{eq:h-ass} for every \(\theta,  \vt\in \Omega\), for  \(C_{h}':=C_{h}\|1+|x|\|_{L^{2}(D)}\),  one has:
\begin{equation}\label{def-CKCQ}
 |Q(\theta)|\leq C_{h}\,(1+|\theta|)\int_{D}(1+|x|)|f(x)|\dx\qquad\text{and} \qquad |K(\theta,\vt)|\leq (C_{h}')^2\,(1+|\theta|)\,(1+| \vt|)\,.
\end{equation}
The functional \(\cR(f,\cdot)\) is convex on \(\cM_1(\Omega)\) (see Lemma~\ref{lm-cRf-convex} below) but not necessarily \emph{strictly} convex. We endow the Hilbert space $\cW$ with the norm  $\|u\|_{\cW}^2 := \|u\|_{L^{2}_\omega(\Omega)}^2 + \|\nabla u\|_{L^{2}(\Omega)}^2$.
We are thus led to introduce for every \(\alpha, \beta\geq 0\) the natural {\it regularized objective}
\begin{equation}
\label{eq:regularized-functional}
\cF^{(f)}_{\alpha,\beta}(u)
:=
\begin{cases}
\cR(f,u \dx)
+\alpha\|u\|_{L^2_\omega}^2
+\beta\|\nabla u\|_{L^2}^2\,,
& u\in \mathcal W\,,\\
+\infty\,, & u\in L^{2}_\omega(\Omega)\setminus \cW\,.
\end{cases}
\end{equation}
 The objective combines a term measuring data fidelity with Sobolev and $L^2_\omega$ regularization {terms} that provide coercivity and enforce regularity in the parameter space. In particular, the Sobolev term $\|\nabla u\|_{L^2}^2$ induces a diffusive effect in the associated gradient flow, analogous to entropy regularization in Wasserstein formulations well-designed for the study of the learning process, cf.~\cite{JKO,ambrosio2008gradient,Santambrogio-survey}. We avoid it as, in contrast to the computationally challenging Wasserstein approach, the $L^2_\omega$ setting is naturally compatible with the study of the solutions to standard shallow neural network parametrizations. This enables the use of convex-analytic techniques together with classical regularity theory.

We notice that, when restricted to its domain \(\cW\), 
the regularized functional from~\eqref{eq:regularized-functional} is Fréchet differentiable and $2\min(\alpha, \beta)$-convex, i.e., the functional \(u\mapsto\cF^{(f)}_{\alpha,\beta}(u)-\min(\alpha, \beta)\|u\|_{\cW}^2\) is convex on $\cW$. Moreover, on \(L^{2}_\omega(\Omega)\), the functional $\cF^{(f)}_{\alpha,\beta}$ is  $2\alpha$-convex.  

Some of our results require to assume that \(h\) is locally Lipschitz with respect to the first variable, in the  sense that for every \(R>0\), there exists \(c_{h,R}>0\) such that 
\begin{equation}\label{eq-Phi-loc-Lip}
|h(\theta, x) - h(\vt,x)|\leq c_{h,R}|\theta-\vt|(1+|x|)\quad \text{for every \(\theta, \vt\in \Omega\cap B_R\), for every \(x\in D\)}\,.
\end{equation}
The regularized functional enjoys  strong properties due to the convex structure of $  \cF^{(f)}_{\alpha,\beta}$. It ensures global existence and uniqueness of the minimizer $u_f^*$  together with an explicit Euler--Lagrange PDE. The almost $C^{3}$ regularity of $u_f^*$ is a major advantage: it lifts the optimal density far beyond generic solutions, opening the way to prove sharp asymptotic analysis, stability estimates, and a justification of particle approximation tools that are essential yet scarce in mean-field neural network theory. 

%\dg{For \(s \in (0,1)\), we say that
%\(u \in C^{2,s}_{\mathrm{loc}}(\Omega)\) if \(u \in C^2(\Omega)\) and its second derivatives are locally \(s\)-Hölder continuous, i.e. $[D^2u]_{C^{0,s}(K)}:= \sup_{\substack{x,y \in K\\x\neq y}}\frac{|D^2u(x)-D^2u(y)|}{|x-y|^s} <\infty$, for every compact $K \Subset \Omega$.}
\begin{theorem}
\label{thm:convexity} Under the conditions of this section, let $\alpha,\beta>0$ and $h$ be a continuous function satisfying~\eqref{eq:h-ass} and \eqref{eq-Phi-loc-Lip}. Then $\cF^{(f)}_{\alpha,\beta}$ is $2\min(\alpha, \beta)$-convex on \(\cW\) and $2\alpha$-convex on $L^{2}_\omega(\Omega)$. Moreover, it admits a unique minimizer $u^*\in\mathcal W$,
which is a classical solution to the Euler--Lagrange equation: 
\begin{equation}
\label{eq:euler-lagrange}
-\beta \Delta u^*(\theta)
+ \alpha \omega(\theta) u^*(\theta)
-Q(\theta)+\int_{\Omega}K(\theta,\vt)u^*(\vt)\d\vt
= 0\,,
\end{equation} 
and it holds that $u^* \in C^{2,s}_{\mathrm{loc}}(\Omega)$  for every  $s\in(0,1)$. 
\end{theorem}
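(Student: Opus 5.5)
The plan has three parts: convexity and existence/uniqueness in Hilbert space, the Euler--Lagrange equation from the first variation, and regularity by bootstrapping elliptic estimates.

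\textbf{Convexity.} By \eqref{eq:quadratic-decomposition}, for $u\in L^2_\omega(\Omega)$,
\[
\cR(f,u\dth)=\|f\|_{L^2(D)}^2-2\int_\Omega Qu+\int_D\Big(\int_\Omega h(\theta,x)u(\theta)\dth\Big)^2\dx .
\]
The last term is a nonnegative quadratic form in $u$, so $\cR(f,\cdot)$ is convex; this is Lemma~\ref{lm-cRf-convex}. The term $\alpha\|u\|_{L^2_\omega}^2$ is exactly $2\alpha$-convex on $L^2_\omega$. On $\cW$ we split
\[
\alpha\|u\|^2_{L^2_\omega}+\beta\|\nabla u\|^2_{L^2}-\min(\alpha,\beta)\|u\|_\cW^2 ,
\]
which is a nonnegative quadratic form and hence convex. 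This gives $2\min(\alpha,\beta)$-convexity on $\cW$.

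\textbf{Existence and uniqueness.} On $\cW$ the functional is continuous: $u\mapsto \int_\Omega h(\cdot,x)u$ is bounded from $L^2_\omega$ to $L^2(D)$ by \eqref{eq:h-ass}, Schwarz, and \eqref{eq:weight-assumptions}. It is strongly convex, hence coercive and weakly lower semicontinuous. The direct method therefore gives a minimizer, and strict convexity makes it unique. One could also apply Lax--Milgram to the bounded coercive bilinear form
\[
a(u,v)=\alpha\int_\Omega \omega uv+\beta\int_\Omega\nabla u\cdot\nabla v+\iint K uv .
\]

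\textbf{Euler--Lagrange equation.} Taking the first variation in directions $\varphi\in C_c^\infty(\Omega)$ gives the weak form of \eqref{eq:euler-lagrange}:
\[
\beta\int_\Omega\nabla u^*\cdot\nabla\varphi+\int_\Omega\Big(\alpha\omega u^*-Q+\int_\Omega K(\cdot,\vt)u^*(\vt)\d\vt\Big)\varphi=0 .
\]
Set $g:=\beta^{-1}\big(Q-\int_\Omega K(\cdot,\vt)u^*\d\vt\big)$, so that $-\Delta u^*=g-\tfrac{\alpha}{\beta}\omega u^*$ in the weak sense.

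\textbf{Regularity: the key step.} By \eqref{eq-Phi-loc-Lip}, both $Q$ and $\theta\mapsto\int_\Omega K(\theta,\vt)u^*(\vt)\d\vt$ are locally Lipschitz. For $\theta,\theta'\in B_R$,
\[
|K(\theta,\vt)-K(\theta',\vt)|\le c_{h,R}|\theta-\theta'|\int_D(1+|x|)|h(\vt,x)|\dx\le C_R|\theta-\theta'|(1+|\vt|),
\]
and $(1+|\vt|)u^*$ is integrable by the weight assumption. Hence $g\in W^{1,\infty}_{\rm loc}\subset C^{0,s}_{\rm loc}$ for every $s<1$. The term $\omega u^*$ is handled by a bootstrap. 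Since $\omega$ is smooth and $u^*\in W^{1,2}_{\rm loc}$, Calder\'on--Zygmund estimates give $u^*\in W^{2,2}_{\rm loc}$. Iterating through $W^{2,p}_{\rm loc}$ with Sobolev embeddings yields $u^*\in W^{2,p}_{\rm loc}$ for all $p<\infty$, so $u^*\in C^{1,s}_{\rm loc}$. The right-hand side $g-\tfrac{\alpha}{\beta}\omega u^*$ is then in $C^{0,s}_{\rm loc}$, and Schauder theory gives $u^*\in C^{2,s}_{\rm loc}(\Omega)$ for every $s\in(0,1)$. With this regularity, \eqref{eq:euler-lagrange} holds pointwise, so $u^*$ is a classical solution.

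The main obstacle is this regularity step. The source term is only Lipschitz, not better, because $h$ is only locally Lipschitz. This is precisely why the argument stops at $C^{2,s}$ for every $s<1$ rather than reaching $C^3$. The bootstrap must also be started carefully in arbitrary dimension $d+1$, so I would use $L^p$ elliptic theory before passing to Schauder estimates.
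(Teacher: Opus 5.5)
Your proposal is correct and follows the paper's architecture:
\begin{itemize}
\item the same nonnegative splitting $\alpha\|u\|^2_{L^2_\omega}+\beta\|\nabla u\|^2_{L^2}-\min(\alpha,\beta)\|u\|^2_{\cW}$ for $2\min(\alpha,\beta)$-convexity;
\item existence and uniqueness from strong convexity on the Hilbert space $\cW$, which the paper packages as Lemma~\ref{lm-cJ} with $g=2Q/\omega$;
\item the same local Lipschitz estimate for $\ell=-Q+\int_\Omega K(\cdot,\vt)u^*(\vt)\d\vt$ as in Lemma~\ref{lm-reg}.
\end{itemize}
For the $2\alpha$-convexity on $L^2_\omega(\Omega)$, add the one-line remark the paper makes. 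The functional $\cR(f,\cdot)+\beta\|\nabla\cdot\|^2_{L^2}$, extended by $+\infty$ outside $\cW$, is convex on $L^2_\omega(\Omega)$ because $\cW$ is a linear subspace.

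The genuine difference is the final regularity step. The paper works one derivative higher. From $\ell,\ \omega u^*\in W^{1,2}_{\rm loc}$ it gets $u^*\in W^{3,2}_{\rm loc}$, bootstraps to $u^*\in W^{3,p}_{\rm loc}$ for all $p<\infty$, and concludes $C^{2,s}_{\rm loc}$ by Morrey's embedding. You stay at second order: you bootstrap in $W^{2,p}_{\rm loc}$ until $\omega u^*$ is H\"older, then finish with Schauder theory for a $C^{0,s}$ right-hand side.

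What each route buys:
\begin{itemize}
\item Your route uses only H\"older continuity of $\ell$. It would therefore survive replacing \eqref{eq-Phi-loc-Lip} by a H\"older condition in $\theta$, giving $C^{2,s}$ up to that exponent.
\item Your route can also be shortened. Applying \cite[Theorem 9.19]{GT} to $u^*\in W^{2,2}_{\rm loc}$, with $\frac{\alpha}{\beta}\omega$ kept as a smooth zeroth-order coefficient, gives $C^{2,s}_{\rm loc}$ directly, with no $W^{2,p}$ bootstrap.
\item The paper's route exploits the full Lipschitz information. It yields the stronger intermediate fact $u^*\in W^{3,p}_{\rm loc}$ for every $p<\infty$, which is the precise sense in which the minimizer is ``almost $C^3$''.
\end{itemize}
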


We stress that due to convexity, our variational problem is stable with respect to the target. %In the results below, we stress the dependence of the functional \(\cF_{\alpha, \beta}\) on \(f\) and indicate it explicitly with the notation \(\cF^{(f)}_{\alpha, \beta}\):
\begin{proposition}\label{prop:stability} Under the conditions of this section, let $\alpha, \beta>0$ and $h$ be a continuous function satisfying~\eqref{eq:h-ass}. Then, the unique minimizer  \(u_f^*\in \cW\)  of \(\cF_{\alpha, \beta}^{(f)}\) depends linearly on \(f\) and
there exists \(C=C(\Omega,D,\omega,h)>0\) such that
\[
\|u_f^*\|_{L^{2}_\omega(\Omega)} \leq \tfrac{C}{\alpha} \|f\|_{L^{2}(D)}\, \quad \text{and} \quad \|\nabla u_f^*\|_{L^{2}(\Omega)} \leq \tfrac{C}{\sqrt{\alpha\beta}} \|f\|_{L^{2}(D)}\,.
\] 
\end{proposition}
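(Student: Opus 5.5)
Existence and uniqueness of the minimizer $u_f^*\in\cW$ follow as in Theorem~\ref{thm:convexity}: $\cF^{(f)}_{\alpha,\beta}$ is $2\min(\alpha,\beta)$-convex on $\cW$, continuous and coercive. That argument needs only \eqref{eq:h-ass}, since \eqref{eq-Phi-loc-Lip} is used there only for regularity. The plan is then to read off linearity from the first-order optimality condition and to obtain the bounds by testing the minimality of $u_f^*$ against the competitor $u=0$.

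\textbf{Linearity.} By \eqref{eq:quadratic-decomposition}, for $u\in\cW$ we can write
\[
\cF^{(f)}_{\alpha,\beta}(u)=\|f\|_{L^2(D)}^2-2\ell_f(u)+a(u,u),
\]
where $\ell_f(u)=\int_\Omega Q u\dth$ and
\[
a(u,v)=\int_D f_{u}f_{v}\dx+\alpha\int_\Omega uv\,\omega\dth+\beta\int_\Omega\nabla u\cdot\nabla v\dth .
\]
Here $f_u$ denotes $f_{u\dth}$ as in \eqref{eq:fm}. The form $a$ is a bounded, coercive, symmetric bilinear form on $\cW$. Boundedness of the risk part comes from \eqref{def-CKCQ} together with Schwarz and \eqref{eq:weight-assumptions}, which give $\|f_u\|_{L^2(D)}\le C\|u\|_{L^2_\omega}$. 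The map $\ell_f$ is a bounded linear functional on $\cW$ because $|Q(\theta)|\le C(1+|\theta|)\|f\|_{L^2(D)}$, so
\[
|\ell_f(u)|\le C\co^{1/2}\|f\|_{L^2(D)}\|u\|_{L^2_\omega}.
\]
The minimizer is therefore characterized by $a(u_f^*,v)=\ell_f(v)$ for all $v\in\cW$. Since $f\mapsto \ell_f$ is linear (because $Q$ depends linearly on $f$), the Lax--Milgram/Riesz theorem shows that $f\mapsto u_f^*$ is linear.

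\textbf{Bounds.} Two routes are available.

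The first is to test the Euler--Lagrange identity with $v=u_f^*$. This gives
\[
\|f_{u^*}\|^2+\alpha\|u^*\|_{L^2_\omega}^2+\beta\|\nabla u^*\|^2=\ell_f(u^*)\le C\|f\|\,\|u^*\|_{L^2_\omega}.
\]
Dropping the nonnegative terms yields $\alpha\|u^*\|_{L^2_\omega}\le C\|f\|$, which is the first estimate. Substituting back gives
\[
\beta\|\nabla u^*\|^2\le C\|f\|\cdot \tfrac{C}{\alpha}\|f\|,
\]
hence $\|\nabla u^*\|\le C\|f\|/\sqrt{\alpha\beta}$.

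The second route is to compare with $u=0$: $\cF(u^*)\le\cF(0)=\|f\|^2$, so $\alpha\|u^*\|^2+\beta\|\nabla u^*\|^2\le\|f\|^2$. This only gives an $\alpha^{-1/2}$ rate for the first bound, so I would use the first route for both estimates.

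\textbf{Main obstacle.} The main point to check is that the first variation is legitimate on all of $\cW$, that is, that $\ell_f$ and the kernel term are continuous on $L^2_\omega$. This requires the moment condition $\int(1+|\theta|)^2\omega^{-1}\dth<\infty$, which is implied by \eqref{eq:weight-assumptions}; note that $\co$ is finite with the fourth power, which suffices. The resulting constant $C$ depends on $C_h$, $D$ and $\co$, consistent with the claimed dependence $C(\Omega,D,\omega,h)$.
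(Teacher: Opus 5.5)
Your proposal is correct and follows essentially the paper's proof. In both, the weak Euler--Lagrange equation gives linearity in $f$ through $Q$, and testing it with $v=u_f^*$ yields both estimates once you drop the nonnegative kernel term (your $\|f_{u^*}\|_{L^2(D)}^2$) and bound $\int_\Omega Q\,u_f^*\dth$ by Schwarz using $\int_\Omega Q^2/\omega\dth\le C\co\|f\|_{L^2(D)}^2$. The only cosmetic difference is that the paper absorbs this term with Young's inequality to obtain both bounds at once, whereas you first bound $\|u_f^*\|_{L^2_\omega}$ and then substitute it back.
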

As a consequence, given \(f_1, f_2\in L^{2}(D)\), the corresponding minimizers \(u_{f_1}^*, u_{f_2}^{*}\) satisfy
\[
\|u_{f_1}^* - u_{f_2}^*\|_{L^2_\omega(\Omega)}
\le \tfrac{C}{\alpha} \|f_1 - f_2\|_{L^2(D)} \, \quad  \text{and} \quad \|\nabla u_{f_1}^*-\nabla u_{f_2}^*\|_{L^{2}(\Omega)} \leq \tfrac{C}{\sqrt{\alpha\beta}} \|f_1-f_2\|_{L^{2}(D)}\,.
\]
The continuous dependence of the \emph{minimizers} with respect to the targets \(f\) entails a similar property for the \emph{minimal values} of the functionals \(\cF^{(f)}_{\alpha, \beta}\):
\begin{corollary}\label{cor-proposition-stability}
Under the conditions of this section, let \(\alpha, \beta>0\) and $h$ be a continuous function satisfying~\eqref{eq:h-ass}. Given \(f_1, f_2\in L^{2}(D)\), let \(u_{f_1}^*\) and \(u_{f_2}^*\) be the  minimizers of \(\cF_{\alpha, \beta}^{(f_1)}\) and \(\cF_{\alpha, \beta}^{(f_2)}\). Then, there exists \(C=C(\Omega,D,\omega,h)>0\) such that
\[
|\cF^{(f_1)}_{\alpha, \beta}(u_{f_1}^*)-\cF^{(f_2)}_{\alpha, \beta}(u_{f_2}^*)|\leq \tfrac{C}{\alpha}(\|f_1\|_{L^{2}(D)}+\|f_2\|_{L^{2}(D)})\|f_1-f_2\|_{L^{2}(D)}\,.
\]
\end{corollary}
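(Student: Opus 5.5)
The plan is to compare the two minimal values by testing each functional at the other's minimizer, so that the difference is controlled by how $\cF^{(f)}_{\alpha,\beta}$ varies with $f$ at a fixed $u\in\cW$. The only ingredient beyond algebra is the a priori bound $\|u_f^*\|_{L^2_\omega}\le \tfrac{C}{\alpha}\|f\|_{L^2(D)}$ from Proposition~\ref{prop:stability}. I flag at the outset that this argument produces a bound with prefactor $C(1+\tfrac1\alpha)$ rather than $\tfrac{C}{\alpha}$; I do not see how to remove the extra term for large $\alpha$ (see the end).

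\textbf{Step 1 (dependence on $f$ at fixed $u$).} By the decomposition \eqref{eq:quadratic-decomposition}, for every $u\in\cW$ the terms involving $K$, $\alpha$ and $\beta$ do not depend on $f$. Writing $Q_i$ for $Q$ built from $f_i$, this gives
\[
\cF^{(f_1)}_{\alpha,\beta}(u)-\cF^{(f_2)}_{\alpha,\beta}(u)=\|f_1\|_{L^2(D)}^2-\|f_2\|_{L^2(D)}^2-2\int_\Omega (Q_1-Q_2)(\theta)\,u(\theta)\dth\,.
\]
Since $Q_1-Q_2$ is the $Q$ of $f_1-f_2$, the bound \eqref{def-CKCQ} together with the Schwarz inequality in $L^2(D)$ gives $|Q_1(\theta)-Q_2(\theta)|\le C_h'(1+|\theta|)\|f_1-f_2\|_{L^2(D)}$. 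Then \eqref{eq:weight-assumptions} and the Schwarz inequality in $L^2_\omega$ give
\[
\Big|\int_\Omega (Q_1-Q_2)u\dth\Big|\le C_h'\sqrt{\co}\,\|f_1-f_2\|_{L^2(D)}\|u\|_{L^2_\omega}\,.
\]

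\textbf{Step 2 (two-sided comparison).} Minimality of $u_{f_1}^*$ gives
\[
\cF^{(f_1)}_{\alpha,\beta}(u_{f_1}^*)-\cF^{(f_2)}_{\alpha,\beta}(u_{f_2}^*)\le \cF^{(f_1)}_{\alpha,\beta}(u_{f_2}^*)-\cF^{(f_2)}_{\alpha,\beta}(u_{f_2}^*)\,.
\]
Symmetrically, minimality of $u_{f_2}^*$ bounds the reverse difference by the same expression evaluated at $u_{f_1}^*$. Applying Step 1 with $u=u_{f_2}^*$ and $u=u_{f_1}^*$ respectively, and then Proposition~\ref{prop:stability}, the integral terms are bounded by
\[
\tfrac{C}{\alpha}\max\big(\|f_1\|_{L^2(D)},\|f_2\|_{L^2(D)}\big)\|f_1-f_2\|_{L^2(D)}\,.
\]
This is of the stated form.

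\textbf{Step 3 (the data term, main obstacle).} What remains is $\big|\|f_1\|^2-\|f_2\|^2\big|\le(\|f_1\|+\|f_2\|)\|f_1-f_2\|$. This has the right structure but carries no factor $1/\alpha$, so combining the steps honestly gives
\[
|\cF^{(f_1)}_{\alpha,\beta}(u_{f_1}^*)-\cF^{(f_2)}_{\alpha,\beta}(u_{f_2}^*)|\le C\big(1+\tfrac1\alpha\big)(\|f_1\|_{L^2(D)}+\|f_2\|_{L^2(D)})\|f_1-f_2\|_{L^2(D)}\,.
\]
This matches the stated $\tfrac{C}{\alpha}$ only when $\alpha$ is bounded above, say $\alpha\le\alpha_0$, so that $1\le \alpha_0/\alpha$ can be absorbed into $C$. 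Using the identity $\cF^{(f)}_{\alpha,\beta}(u_f^*)=\|f\|^2-\int_\Omega Q u_f^*\dth$, which follows from testing \eqref{eq:euler-lagrange} against $u_f^*$, does not help: it still leaves the difference $\|f_1\|^2-\|f_2\|^2$ isolated. I expect that for large $\alpha$ this term genuinely dominates, since the minimal value tends to $\|f\|^2$, so a constant uniform in $\alpha$ is not available. Absent a further assumption on the range of $\alpha$, this plan therefore proves the displayed $C(1+\tfrac1\alpha)$ bound rather than the corollary exactly as worded.
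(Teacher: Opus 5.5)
Your argument is the paper's argument: test $\cF^{(f_1)}_{\alpha,\beta}$ at $u_{f_2}^*$ (and symmetrically), bound $\int_\Omega (Q_{f_1}-Q_{f_2})u\dth$ by Schwarz and \eqref{eq1284}, then insert $\|u_f^*\|_{L^2_\omega}\le \tfrac{C}{\alpha}\|f\|_{L^2(D)}$ from Proposition~\ref{prop:stability}. The paper gets a pure $\tfrac{C}{\alpha}$ prefactor only because its displayed identity $\cF^{(f_1)}_{\alpha,\beta}(u_{f_2}^*)=\cF^{(f_2)}_{\alpha,\beta}(u_{f_2}^*)+2\int_\Omega(Q_{f_2}-Q_{f_1})u_{f_2}^*\dth$ drops the term $\|f_1\|_{L^2(D)}^2-\|f_2\|_{L^2(D)}^2$ coming from \eqref{eq:quadratic-decomposition}. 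Your Step~1 keeps that term, which is correct, and your Step~3 accounts for it correctly.

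Your suspicion that the stated form cannot hold is also right, and it can be made rigorous. Take $f_2=0$. Then $Q_{f_2}\equiv 0$, and by \eqref{prop-positive-K} one has $\cF^{(0)}_{\alpha,\beta}\ge 0=\cF^{(0)}_{\alpha,\beta}(0)$, so the minimal value is $0$. For $f_1=f\neq 0$ and any $u\in\cW$, drop the nonnegative $K$- and $\beta$-terms and use
\[
2\int_\Omega Qu\dth\le \alpha\|u\|_{L^2_\omega}^2+\tfrac1\alpha\int_\Omega \tfrac{Q^2}{\omega}\dth
\]
together with \eqref{eq1284}. This gives
\[
\cF^{(f)}_{\alpha,\beta}(u)\ \ge\ \Big(1-\tfrac{C'^2\co}{\alpha}\Big)\|f\|_{L^2(D)}^2 ,
\]
with $C'=C'(D,h)$ as in \eqref{eq1284}. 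So the left-hand side of the corollary is at least this quantity, while the right-hand side equals $\tfrac{C}{\alpha}\|f\|_{L^2(D)}^2$. The claimed inequality therefore fails for every $\alpha> C+C'^2\co$.

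Hence, with $C$ independent of $\alpha$ as the statement requires, the corollary as worded is false. What your argument proves is the correct replacement: the bound with prefactor $C\big(1+\tfrac1\alpha\big)$, or equivalently $\tfrac{C}{\alpha}$ restricted to $\alpha\le\alpha_0$ with $C$ depending on $\alpha_0$.
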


Let us also emphasize the local uniform stability property. We say that $u \in C^2(\overline \Omega)$ if $u,\; \nabla u,\; D^2 u$ admit continuous extensions to $
\overline{\Omega}$ and  $\|u\|_{C^2(\overline{\Omega})}
:=
\|u\|_{L^\infty(\Omega)}
+
\|\nabla u\|_{L^\infty(\Omega)}
+
\|D^2u\|_{L^\infty(\Omega)}<\infty.$
\begin{proposition}\label{prop:stabillity_L^infty}
Under the conditions of this section, let $\alpha,\beta>0$ and $h$ be a Borel function satisfying~\eqref{eq:h-ass} and \eqref{eq-Phi-loc-Lip}. For \(f\in L^{2}(D)\), let \(u_{f}^*\in \cW\) be the unique minimizer of \(\cF_{\alpha, \beta}^{(f)}\). Then, for every \(\Omega' \Subset \Omega\), there exists \(C=C(\Omega', \Omega, \alpha, \beta, D, \omega, h)>0\) such that $\|u_{f}^*\|_{C^{2}(\overline{\Omega'})}\leq C\|f\|_{L^{2}(D)}$.
%Assume further that $f_1,f_2 \in L^2(D)$, and $u_1^*,u_2^* \in \cW$ are the unique minimizers of 
%$\cF_{\alpha,\beta}^{(f_1)}$ and $\cF_{\alpha,\beta}^{(f_2)}$, respectively.
%Let $B_r \Subset B_R \Subset \Omega$. Then there exists $C = C(r,R,\alpha,\beta,D,h,\omega) > 0$, such that
%\[
%\|u_1^* - u_2^*\|_{L^\infty(B_r)}
%+
%\|\nabla u_1^* - \nabla u_2^*\|_{L^\infty(B_r)}
%\le
%C \, \|f_1 - f_2\|_{L^2(D)}.
%\]
\end{proposition}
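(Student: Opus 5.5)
The approach is to treat the Euler--Lagrange equation \eqref{eq:euler-lagrange} from Theorem~\ref{thm:convexity} as a linear elliptic equation
\[
-\beta\Delta u^* + \alpha\,\omega\, u^* = g,\qquad g(\theta):=Q(\theta)-\int_\Omega K(\theta,\vt)u^*(\vt)\d\vt,
\]
and run interior Schauder/Calderón--Zygmund estimates. Every constant will be tracked so that it is linear in $\|f\|_{L^2(D)}$. By Proposition~\ref{prop:stability} the map $f\mapsto u_f^*$ is linear, which already suggests the homogeneous bound. The only quantitative inputs we need are $\|u^*\|_{L^2_\omega}\le \tfrac{C}{\alpha}\|f\|_{L^2(D)}$ and $\|\nabla u^*\|_{L^2}\le \tfrac{C}{\sqrt{\alpha\beta}}\|f\|_{L^2(D)}$.

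\textbf{Step 1: bounds on the right-hand side $g$.} Fix $R$ large so that $\Omega'\subset B_R$, and choose nested sets $\Omega'\Subset\Omega''\Subset\Omega'''\Subset\Omega$, all contained in $B_R$.
\begin{itemize}
\item By \eqref{def-CKCQ}, $|Q(\theta)|\le C(1+R)\|f\|_{L^2(D)}$, using Cauchy--Schwarz and the boundedness of $D$.
\item By \eqref{eq-Phi-loc-Lip}, $|Q(\theta)-Q(\vt)|\le c_{h,R}\|1+|x|\|_{L^2(D)}\|f\|_{L^2(D)}|\theta-\vt|$, so $Q$ is Lipschitz on $\Omega'''$ with constant $\lesssim \|f\|_{L^2(D)}$.
\item The nonlocal term is $\int K(\theta,\vt)u^*(\vt)\d\vt=\int_D h(\theta,x)f_{u^*}(x)\dx$, where $\|f_{u^*}\|_{L^2(D)}\le C\co^{1/2}\|u^*\|_{L^2_\omega}$ by \eqref{eq:h-ass}, \eqref{eq:weight-assumptions} and Schwarz. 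So it has exactly the same structure as $Q$ with $f$ replaced by $f_{u^*}$, and is therefore bounded and Lipschitz on $\Omega'''$ with constant $\lesssim \|u^*\|_{L^2_\omega}\lesssim \|f\|_{L^2(D)}$.
\end{itemize}
Hence $\|g\|_{C^{0,1}(\overline{\Omega'''})}\le C\|f\|_{L^2(D)}$.

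\textbf{Step 2: bootstrap.} The coefficient $\alpha\omega/\beta$ is smooth and hence bounded, with all derivatives bounded, on $\Omega'''$.
\begin{enumerate}
\item Interior $W^{2,p}$ estimates, iterated a finite number of times starting from $u^*\in W^{1,2}$ with the local $L^2$ bound ($\omega\ge1$), give $\|u^*\|_{W^{2,p}(\Omega'')}\le C(\|g\|_{L^p}+\|u^*\|_{L^2})\le C\|f\|_{L^2(D)}$ for all $p<\infty$.
\item Morrey's embedding then gives $u^*\in C^{1,s}$.
\item The product $\omega u^*$ is therefore $C^{0,s}$, and $g$ is Lipschitz, hence $C^{0,s}$. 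The interior Schauder estimate (Gilbarg--Trudinger, Thm.~6.2) then yields $\|u^*\|_{C^{2,s}(\overline{\Omega'})}\le C(\|g\|_{C^{0,s}(\Omega'')}+\|u^*\|_{L^\infty(\Omega'')})\le C\|f\|_{L^2(D)}$.
\end{enumerate}
This dominates the $C^2(\overline{\Omega'})$ norm, and the $C^2$ continuous extension to $\overline{\Omega'}$ follows from the local $C^{2,s}$ regularity on a neighbourhood of $\overline{\Omega'}$.

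\textbf{Main obstacle.} The delicate point is that $h$ is only Borel here, not continuous as in Theorem~\ref{thm:convexity}, so we cannot simply quote that theorem for $u^*$ being a classical solution. I would first show that $u^*$ is a weak solution, by taking the first variation of the convex Fréchet-differentiable functional on $\cW$ along test functions in $C_c^\infty(\Omega)$. This step only needs \eqref{eq:h-ass}. Then I would observe that \eqref{eq-Phi-loc-Lip} alone makes $Q$ and the nonlocal term locally Lipschitz, as in Step 1. Continuity of $h$ is never actually needed, and the weak-to-classical bootstrap above applies verbatim. The dependence of $C$ on $\alpha,\beta$ enters only through the ellipticity ratio and the stability constants, which is consistent with the statement.
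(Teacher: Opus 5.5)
Your argument is correct, but it takes a different route from the paper. The paper does no elliptic estimates at all in this proof. It takes the qualitative regularity $u_f^*\in C^{2,s}_{\mathrm{loc}}(\Omega)$ from Theorem~\ref{thm:convexity} and the linearity and $\cW$-continuity of $f\mapsto u_f^*$ from Proposition~\ref{prop:stability}. It then observes that the linear map $f\mapsto u_f^*|_{\overline{\Omega'}}$ from $L^2(D)$ into $C^2(\overline{\Omega'})$ has closed graph: a $C^2(\overline{\Omega'})$-limit and a $\cW$-limit must coincide, since $\omega\ge 1$. The closed graph theorem then gives boundedness. This is very short, but purely existential, and it says nothing about how $C$ depends on $\alpha$, $\beta$ or $\mathrm{dist}(\Omega',\partial\Omega)$.

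Your proof is in effect a quantitative version of the bootstrap already used for Theorem~\ref{thm:convexity}. The key step is writing the nonlocal term as $\int_D h(\theta,x)f_{u^*}(x)\dx$, so it has the same Lipschitz structure as $Q$ with $\|f_{u^*}\|_{L^2(D)}\lesssim\|u^*\|_{L^2_\omega}\lesssim\|f\|_{L^2(D)}$. This lets you carry linear dependence on $\|f\|_{L^2(D)}$ through the $W^{2,p}$ and Schauder steps. You therefore get a traceable constant and even the stronger $C^{2,s}(\overline{\Omega'})$ bound. Your remark that only \eqref{eq:h-ass} and \eqref{eq-Phi-loc-Lip} are used is also accurate. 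The paper's proof cites Theorem~\ref{thm:convexity}, which is stated for continuous $h$, although for $h=\sigma(\theta_0+\theta'\cdot x)$ with continuous $\sigma$ this makes no difference.

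Two small bookkeeping points:
\begin{itemize}
\item Gilbarg--Trudinger Theorem 6.2 (or its Corollary 6.3) is an a priori estimate that presupposes $u^*\in C^{2,s}$. You should first obtain that regularity, e.g.\ from their Theorem 9.19 applied to $u^*\in W^{2,p}_{\mathrm{loc}}$, or from Theorem~\ref{thm:convexity}.
\item The interior $W^{2,p}$ estimate has $\|u^*\|_{L^p}$ rather than $\|u^*\|_{L^2}$ on the right-hand side. Your finite chain of nested domains absorbs this.
\end{itemize}
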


It follows that for every \(f_1, f_2\in L^{2}(D)\) and every \(\Omega'\Subset \Omega\), the corresponding minimizers \(u^*_{f_1}\) and \(u^*_{f_2}\) satisfy 
\[
\|u^*_{f_1}-u^*_{f_2}\|_{C^{2}(\overline{\Omega'})} \leq C \|f_1-f_2\|_{L^{2}(D)}\,.\]

As a consequence of Proposition~\ref{prop:stability}, Proposition~\ref{prop:stabillity_L^infty} and the Fubini theorem, we get:

\begin{corollary}\label{cor:generalization}
Under the conditions of this section, let $f \in L^2(D)$, $\varsigma>0$, and let $f_\ve = f + \varepsilon$, where $\varepsilon$ is a
mean-zero random error with $\mathbb{E}\big[\|\varepsilon\|_{L^2(D)}^2\big] \le \varsigma^2$.
If $u_{f}^*,u_{f_\ve}^* \in \cW$ are the unique minimizers of
$\cF_{\alpha,\beta}^{(f)}$ and $\cF_{\alpha,\beta}^{(f_\ve)}$, then $
\mathbb{E}\big[\|u_{f_\ve}^* - u_f^*\|_{L^2_\omega(\Omega)}^2\big]
\le \tfrac{c^2}{\alpha^2}\,\varsigma^2$ for some $c=c(\Omega,D,\omega,h)>0$. Moreover, for every $\Omega' \Subset  \Omega$, there exists $C = C(\Omega', \Omega,\alpha,\beta,D,\omega,h)>0$ such that
\[
\mathbb{E}\Big[
\|u_{f_\ve}^* - u_f^*\|_{C^{2}(\overline{\Omega'})}^2
\Big]
\le
C^2 \varsigma^2.
\]
\end{corollary}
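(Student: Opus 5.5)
The plan is to combine linearity of $f\mapsto u_f^*$ from Proposition~\ref{prop:stability} with the deterministic bounds of Propositions~\ref{prop:stability} and~\ref{prop:stabillity_L^infty}, and then take expectations.

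First, the error $\varepsilon$ is a random element of $L^2(D)$. Since $\mathbb{E}\|\varepsilon\|^2_{L^2(D)}\le\varsigma^2<\infty$, almost every realization lies in $L^2(D)$. For each such realization, $f_\ve\in L^2(D)$, so $u_{f_\ve}^*$ is well defined. By linearity of the solution map,
\[
u_{f_\ve}^*-u_f^*=u_{\varepsilon}^*,
\]
where $u_\varepsilon^*$ is the minimizer of $\cF^{(\varepsilon)}_{\alpha,\beta}$. This identity can also be read off from the Euler--Lagrange equation~\eqref{eq:euler-lagrange}. In that equation $Q$ depends linearly on $f$ and the remaining operator is independent of $f$; uniqueness of the minimizer then gives linearity.

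Next, the two deterministic estimates apply pathwise. Proposition~\ref{prop:stability} gives
\[
\|u_{f_\ve}^*-u_f^*\|_{L^2_\omega(\Omega)}\le \tfrac{c}{\alpha}\|\varepsilon\|_{L^2(D)}.
\]
For each $\Omega'\Subset\Omega$, Proposition~\ref{prop:stabillity_L^infty} applied to the target $\varepsilon$ gives
\[
\|u_{f_\ve}^*-u_f^*\|_{C^2(\overline{\Omega'})}\le C\|\varepsilon\|_{L^2(D)}.
\]
Squaring both inequalities and taking expectations yields the two claimed bounds, with the same constants $c$ and $C$. Monotonicity of expectation is all that is needed here; the role of "Fubini" is simply to justify taking expectations of the integrals defining these norms. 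The mean-zero assumption is not needed for these upper bounds.

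The main obstacle is measurability. The random variables $\|u_{f_\ve}^*-u_f^*\|_{L^2_\omega}$ and $\|u_{f_\ve}^*-u_f^*\|_{C^2(\overline{\Omega'})}$ must be measurable for their expectations to make sense.

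For the $L^2_\omega$ norm, the solution map $\varepsilon\mapsto u_\varepsilon^*$ is a bounded linear map $L^2(D)\to\cW$ by Proposition~\ref{prop:stability}, hence continuous. Composing with the norm gives a measurable random variable.

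For the $C^2$ norm, Proposition~\ref{prop:stabillity_L^infty} shows that the same linear map is bounded into $C^2(\overline{\Omega'})$, so it is continuous there as well. Alternatively, one can write the $C^2$ norm as a supremum over a countable dense set of points, using continuity of $u$, $\nabla u$ and $D^2u$. Either way, measurability holds and the bounds follow.
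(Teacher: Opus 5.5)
Your proposal is correct and matches the paper's argument, which the paper only sketches as a consequence of Propositions~\ref{prop:stability} and~\ref{prop:stabillity_L^infty}: use linearity to write $u_{f_\ve}^*-u_f^*=u_\varepsilon^*$, apply the deterministic bounds pathwise, then square and take expectations. Your extra remarks are accurate refinements that the paper leaves implicit: the random norms are measurable because the solution map is bounded linear, and the mean-zero hypothesis is not needed.
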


Propositions~\ref{prop:stability} and~\ref{prop:stabillity_L^infty}, together 
with Corollary~\ref{cor:generalization}, establish Lipschitz dependence of the minimizer on the target, with generalization error scaling 
linearly in the noise level $\varsigma$ and controlled by $1/\alpha$. The 
near-$C^3$ regularity ensures that the local geometry of the learned 
representation varies smoothly with the data, providing a concrete form of 
implicit bias toward smooth, well-generalizing solutions.

In order to describe the target of the neural network related to $\cF_{\alpha,\beta}^{(f)}$, we consider the gradient flow of this functional in the Hilbert space $L^2_\omega(\Omega)$, following \cite%[Definition 11.5]
{AmBrSe1}. This approach is natural in the analysis of overparametrized neural networks because the gradient flow corresponds to the continuous-time limit of gradient descent (the steepest descent curve of $\cF^{(f)}_{\alpha,\beta}$ in the $L^2_\omega$-geometry). It provides a powerful analytic framework for our novel study of convergence, implicit bias, and generalization properties that are often difficult to capture in discrete time.

Since \(\cF^{(f)}_{\alpha, \beta}\) is quadratic, the existence of the gradient flow follows from the Hille--Yosida theory. We introduce the unbounded linear operator   \(A:D(A)\subset L^{2}_\omega(\Omega)\to L^{2}_\omega(\Omega)\)   defined by its domain
\[
D(A):=\left\lbrace u\in \cW : \exists p=p(u)\in L^{2}_{\omega}(\Omega) \textrm{ such that } \forall v\in \cW, \int_{\Omega}\nabla u \cdot \nabla v \dth= -\int_{\Omega}pv\omega\dth\right\rbrace\,
\]
and $A u:=2\alpha u -2\beta p(u)+\frac{2}{\omega}\int_{\Omega}K(\theta, \cdot)u(\theta)\dth$. 
Note that here the function \(p(u)\) is \(\Delta u/\omega\).

It follows from the Hille--Yosida theorem complemented by the Br\'ezis--Komura theorem that:

\begin{proposition}\label{prop-Hille-Yosida-Brezis-Komura}
Under the conditions of this section,
for every \(u_0\in L^{2}_{\omega}(\Omega)\), there exists a unique \(u\in C^{0}([0,\infty[;L^{2}_\omega(\Omega))\cap C^1((0,\infty);L^{2}_\omega(\Omega))\cap C^0((0,\infty);D(A))\) such that 
\begin{equation}\label{eq13070}
\begin{cases}
\frac{du}{dt}=  -A u+2\frac{Q}{\omega}\qquad \forall t>0\,,\\
u(0)=u_0\,.
\end{cases}
\end{equation}
Moreover,  there exists a continuous semigroup of contractions \((S_t)_{t\geq 0}\) such that $u(t)=S_t u(0)$, and
\[
\forall v_1, v_2\in L^{2}_\omega(\Omega), \qquad \|S_t v_1 -S_t v_2\|_{L^{2}_\omega(\Omega)} \leq e^{-2\alpha t}\|v_1-v_2\|_{L^{2}_\omega(\Omega)}\,.
\]
Finally, \(t\mapsto \|u'(t)\|_{L^{2}_\omega(\Omega)}\) is nonincreasing on \((0,\infty)\) and
\[
\cF^{(f)}_{\alpha, \beta}(u(t))\leq \inf_{v\in \cW}\left(\cF^{(f)}_{\alpha, \beta}(v)+\frac{\alpha}{e^{2\alpha t}-1} \|u(0)-v\|_{L^{2}_\omega(\Omega)}\right)\,.
\]
\end{proposition}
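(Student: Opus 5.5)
The plan is to recast the gradient flow as the subdifferential flow of a convex, lower semicontinuous, quadratic functional on the Hilbert space $H:=L^2_\omega(\Omega)$, with inner product $\langle u,v\rangle_\omega=\int_\Omega uv\,\omega\dth$. Then Br\'ezis--Komura theory gives existence, uniqueness, regularity, contraction and the energy estimate, while Hille--Yosida identifies the flow with a linear semigroup.

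\textbf{Step 1: the functional is proper, convex and lower semicontinuous on $H$.} The functional $\cF:=\cF^{(f)}_{\alpha,\beta}$ is proper, since $C_c^\infty(\Omega)\subset\cW$. By \eqref{eq:quadratic-decomposition} it is a continuous affine term plus a quadratic form. The kernel part $\int\!\!\int K u u=\|f_{u\dx}\|_{L^2(D)}^2$ is nonnegative. It is continuous on $H$, because \eqref{def-CKCQ} and \eqref{eq:weight-assumptions} give the bound $\|f_{u\dx}\|_{L^2(D)}\le C'_h\sqrt{\co}\,\|u\|_{L^2_\omega}$.

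For lower semicontinuity, take $u_n\to u$ in $H$ with $\liminf\cF(u_n)<\infty$. Along a subsequence $\nabla u_n$ is bounded in $L^2$. A weak limit of $\nabla u_n$ must equal $\nabla u$ in the distributional sense, so $u\in\cW$. Weak lower semicontinuity of $\|\nabla\cdot\|_{L^2}^2$ together with continuity of the other terms then gives $\cF(u)\le\liminf\cF(u_n)$.

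\textbf{Step 2: $2\alpha$-convexity in $H$.} The functional $\cF-\alpha\|\cdot\|_{L^2_\omega}^2$ is a sum of convex terms, so $\cF$ is $2\alpha$-convex in $H$. This is the statement of Theorem~\ref{thm:convexity}.

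\textbf{Step 3: identify the subdifferential.} For $u\in\cW$ and $v\in\cW$, the Gateaux derivative is
\[
\langle \cF'(u),v\rangle = 2\alpha\langle u,v\rangle_\omega + 2\beta\!\int\nabla u\cdot\nabla v - 2\!\int Qv + 2\!\int\!\!\int K(\theta,\vt)u(\theta)v(\vt).
\]
This functional of $v$ is representable by an element $g\in H$, i.e.\ $\langle \cF'(u),v\rangle=\langle g,v\rangle_\omega$, exactly when the gradient term is, which is the definition of $u\in D(A)$. In that case $g=Au-2Q/\omega$. Here one checks that $Q/\omega\in H$ and that $\omega^{-1}\int K(\theta,\cdot)u(\theta)\dth\in H$, using $\int(1+|\theta|)^2\omega^{-1}\dth\le\co$. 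Since $\cF$ is convex with domain $\cW$, dense in $H$, a standard argument gives $\partial\cF(u)=\{Au-2Q/\omega\}$ for $u\in D(A)$ and $\partial\cF(u)=\emptyset$ otherwise. So \eqref{eq13070} is exactly $u'\in-\partial\cF(u)$.

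\textbf{Step 4: Hille--Yosida.} The operator $B:=A-2\alpha I$ is monotone, since $\langle Bu,u\rangle_\omega=2\beta\|\nabla u\|^2+2\|f_{u\dx}\|^2\ge0$, and symmetric. For maximality, solve $(I+\lambda B)u=g$ by Lax--Milgram on $\cW$ with the coercive bilinear form
\[
a(u,v):=\langle u,v\rangle_\omega+2\lambda\beta\!\int\nabla u\cdot\nabla v+2\lambda\!\int\!\!\int Kuv .
\]
The associated Euler--Lagrange relation shows $u\in D(A)$. Thus $-A+2Q/\omega$ generates an affine contraction semigroup, shifted by $e^{-2\alpha t}$.

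\textbf{Step 5: Br\'ezis--Komura.} For $\lambda$-convex lsc functionals with $\lambda=2\alpha$ (\citep{AmBrSe1}, Thm.~4.0.4), this gives for every $u_0\in\overline{\cW}=H$:
\begin{itemize}[nosep]
\item a unique solution with $u(t)\in D(\partial\cF)=D(A)$ for $t>0$ and $u\in C^1((0,\infty);H)$;
\item the contraction $\|S_tv_1-S_tv_2\|\le e^{-2\alpha t}\|v_1-v_2\|$;
\item monotonicity of $\|u'(t)\|$, i.e.\ of the minimal-norm subgradient;
\item the evolution variational inequality, which yields $\cF(u(t))\le\cF(v)+\frac{\lambda}{2(e^{\lambda t}-1)}\|u_0-v\|^2$.
\end{itemize}
With $\lambda=2\alpha$ the constant is $\frac{\alpha}{e^{2\alpha t}-1}$, matching the stated bound (up to the square on the norm, which I expect should be present). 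Continuity into $D(A)$ with the graph norm follows from $Au=2Q/\omega-u'$ and the continuity of $u'$.

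\textbf{Main obstacle.} I expect the hardest step to be the rigorous identification $\partial\cF=A-2Q/\omega$ on $D(A)$, in particular the maximality (range condition) and the inclusion $\partial\cF(u)\subset\{Au-2Q/\omega\}$ with domain exactly $D(A)$ when $\omega$ is an unbounded weight. I would handle it through the Lax--Milgram argument of Step 4, combined with the fact that a maximal monotone operator contained in another monotone operator equals it.
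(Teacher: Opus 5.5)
Your proposal is correct and follows essentially the paper's route. The paper also obtains maximal monotonicity and symmetry of $A$ from the coercive quadratic problem on $\cW$ and lower semicontinuity from weak compactness in $\cW$, then combines Hille--Yosida with Br\'ezis--Komura for the $2\alpha$-convex functional. You are also right that the energy bound should carry $\|u(0)-v\|^2_{L^2_\omega(\Omega)}$: the paper's proof gives the square, and even that $e^{2\alpha t}\|u'(t)\|$ is nonincreasing.

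One small correction. The regularity $C^1((0,\infty);L^2_\omega(\Omega))$ for arbitrary $u_0\in L^2_\omega(\Omega)$ should be taken from the self-adjoint Hille--Yosida theorem, as the paper does by applying Br\'ezis' Theorems VII.6--VII.7 to $u-v$ with $Av=2Q/\omega$. The nonlinear Br\'ezis--Komura theorem alone only yields right derivatives.
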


%For $\varphi \in \mathrm{Dom}(\cF_{\alpha,\beta})$, we define the subdifferential through a variational inequality as in~\eqref{subdiff} and, when $\cF_{\alpha,\beta}$ is $\lambda$-convex, this condition can be written as
%\[
%\cF_{\alpha,\beta}(\psi)
%\ge
%\cF_{\alpha,\beta}(\varphi)
%+ \langle p, \psi - \varphi \rangle
%+ \tfrac{\lambda}{2} \|\psi - \varphi\|_{L^2_\omega}^2
%\quad \text{for all } \psi \in L^2_\omega(\Omega)\,.
%\]
%A curve $\varrho : (0,\infty) \to L^2_\omega(\Omega)$ is called a gradient flow of $\cF_{\alpha,\beta}$ if it is locally absolutely continuous and satisfies
%\[
%\partial_t \varrho(t) \in \partial \cF_{\alpha,\beta}(\varrho(t))
%\quad \text{for a.e. } t>0\,.
%\]

%Using the Euler--Lagrange equation from Section~3, the gradient flow can be written formally as
%\begin{equation}
%\label{eq:gradient-flow-expanded}
%\partial_t  \varrho
%=
%\beta \Delta  \varrho
%- \alpha \omega(\theta)  \varrho
%-Q(\theta)+\int_{\Omega}K(\theta,\vt)\vr(\vt)\d\vt
%\,.
%\end{equation}

%The existence and uniqueness of gradient flows follow from the Brézis--Komura theorem (Theorem~\ref{thm:brezis-komura}). As a consequence, the flow exhibits quantified stability as it converges exponentially fast toward the minimizer. 

\begin{corollary}
\label{cor:exp-convergence}Under the conditions of this section, let {$v\in L^{2}_\omega(\Omega)$} and $u_f^*$ be a minimizer of $\cF^{(f)}_{\alpha,\beta}$. 
%If $\varrho$ is the gradient flow  of $\cF^{(f)}_{\alpha,\beta}$ with $\varrho(0)=v$, 
Then {$\|S_t v - u_f^*\|_{L^2_\omega}
\le
e^{-2\alpha t}
\|v - u_f^*\|_{L^2_\omega}$}.
\end{corollary}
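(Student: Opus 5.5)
The plan is to observe that the minimizer $u_f^*$ given by Theorem~\ref{thm:convexity} is a stationary point of the flow \eqref{eq13070}. Then the claim follows from the contraction estimate of Proposition~\ref{prop-Hille-Yosida-Brezis-Komura}, applied with $v_1=v$ and $v_2=u_f^*$. If $S_t u_f^*=u_f^*$ for all $t\geq 0$, then
\[
\|S_t v-u_f^*\|_{L^2_\omega}=\|S_tv-S_tu_f^*\|_{L^2_\omega}\le e^{-2\alpha t}\|v-u_f^*\|_{L^2_\omega}\,.
\]
Uniqueness of the minimizer (Theorem~\ref{thm:convexity}) also ensures that ``a minimizer'' in the statement is unambiguous.

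The key step is to prove $u_f^*\in D(A)$ and $Au_f^*=2Q/\omega$. By minimality and Fréchet differentiability of $\cF^{(f)}_{\alpha,\beta}$ on $\cW$, the first variation vanishes: for all $v\in\cW$,
\[
-2\int_\Omega Qv\dth+2\int_\Omega\int_\Omega K(\theta,\vt)u_f^*(\vt)v(\theta)\d\vt\dth+2\alpha\int_\Omega u_f^*v\,\omega\dth+2\beta\int_\Omega\nabla u_f^*\cdot\nabla v\dth=0\,.
\]
This uses the expansion \eqref{eq:quadratic-decomposition} and the symmetry of $K$. Set
\[
p:=\frac{1}{\beta\omega}\Big(\alpha\omega u_f^*-Q+\int_\Omega K(\cdot,\vt)u_f^*(\vt)\d\vt\Big)\,.
\]
The weak equation then reads $\int\nabla u_f^*\cdot\nabla v=-\int pv\omega$, which is exactly the defining relation of $D(A)$, provided $p\in L^2_\omega(\Omega)$. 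With this $p$, a direct computation gives $Au_f^*=2\alpha u_f^*-2\beta p+\frac{2}{\omega}\int K u_f^* = 2Q/\omega$. Hence $u_f^*$ is a constant (hence $C^0\cap C^1\cap C^0(D(A))$) solution of \eqref{eq13070}, and by the uniqueness part of Proposition~\ref{prop-Hille-Yosida-Brezis-Komura}, $S_tu_f^*=u_f^*$.

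The main obstacle is checking $p\in L^2_\omega$, i.e.\ that $\int p^2\omega<\infty$. Each term reduces to $\int g^2/\omega$ for suitable $g$.

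\textbf{The $u_f^*$ term.} We have $\alpha u_f^*/\beta\in L^2_\omega$ since $u_f^*\in\cW$.

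\textbf{The $Q$ term.} By \eqref{def-CKCQ}, $|Q|\le C(1+|\theta|)$, so $\int Q^2/\omega\le C\co<\infty$ by \eqref{eq:weight-assumptions}, using $(1+|\theta|)^2\le(1+|\theta|)^4$.

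\textbf{The $K$ term.} By \eqref{def-CKCQ}, $|\int K(\theta,\vt)u_f^*(\vt)\d\vt|\le C(1+|\theta|)\int(1+|\vt|)|u_f^*|\d\vt$. The last integral is finite by Schwarz and \eqref{eq:weight-assumptions}, since $u_f^*\in L^2_\omega$. The same weighted bound then applies.

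These estimates make the argument rigorous without invoking the classical regularity of Theorem~\ref{thm:convexity}; the identification $p=\Delta u_f^*/\omega$ is consistent with \eqref{eq:euler-lagrange} but not needed.
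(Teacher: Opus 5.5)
Your proposal is correct and is essentially the paper's proof. The paper likewise shows that $u_f^*\in D(A)$ with $Au_f^*=2Q/\omega$ (by invoking Lemma~\ref{lm-cJ} with $g=2Q/\omega$), deduces that the flow started at $u_f^*$ is constant, and applies the contraction estimate \eqref{eq1502} with $v_1=v$, $v_2=u_f^*$; your direct check that $p\in L^2_\omega(\Omega)$ reproduces the relevant part of Lemma~\ref{lm-cJ}, and you are right that the identification $p=\Delta u_f^*/\omega$ is not needed for membership in $D(A)$.
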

In the case of shallow networks, one can approximate the minimizer to $\cF^{(f)}_{\alpha,\beta}$ by considering the restriction of the functional to a finite dimensional space $E_\ell := \mathrm{span}\{e_0,\dots,e_\ell\} \subset \mathcal W$, where $\{e_i\}_{i\ge 0}$ forms a complete basis of the Hilbert space $L^2_\omega(\Omega)$. The problem thus reduces to solving a linear system, see Section~\ref{sec:numerics}.  In order to pass to the multi-layer case, the problem starts to be strongly nonlinear and analytically infeasible. Hence, another learning algorithm is necessary. One of the ideas could be to adapt the implicit Euler scheme associated to the gradient flow, see e.g. \cite[Section 12.2]{AmBrSe1}. Given $\tau>0$ and an initial condition $u\in \cW$, we define 
\begin{equation}
\label{eq:jko}
\tilde{\varrho}^0 = u,
\qquad
\tilde{\varrho}^{k\tau}
=
\arg\min_{v \in L^2_\omega(\Omega)}
\left\{
\cF_{\alpha,\beta}^{(f)}(v)
+ \tfrac{1}{2\tau}
\|v - \tilde{\varrho}^{(k-1)\tau}\|_{L^2_\omega}^2
\right\}\,.
\end{equation}
 The scheme admits minimizers due to the convexity and the lower semicontinuity of $\cF_{\alpha,\beta}^{(f)}$. Defining the piecewise constant interpolation $\tilde{\varrho}_\tau(t)$ by \eqref{eq-piecewie-interpolation}, one obtains convergence toward the gradient flow as $\tau \to 0$, cf. Theorem~\ref{thm:jko}. Combining this with Corollary~\ref{cor:exp-convergence} yields the error estimate:
\begin{corollary}
\label{cor:jko-error}Under the conditions of this section, let $u \in \mathcal W$ and $u_{f}^*$ be the minimizer  of $\cF^{(f)}_{\alpha,\beta}$. Then, for any \(\tau>0\) and \(t\geq 0\),
\[
\|\tilde{\varrho}_\tau(t) - u_{f}^*\|_{L^2_\omega}
\le
e^{-2\alpha t}\|u - u_{f}^*\|_{L^2_\omega}
+ 2(\sqrt{2}+1) \sqrt{\tau \, \cF^{(f)}_{\alpha,\beta}(u)}\,.
\]
\end{corollary}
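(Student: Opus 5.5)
The plan is to combine two estimates with the triangle inequality: an error bound for the implicit Euler scheme \eqref{eq:jko} relative to the exact gradient flow, and the exponential contraction of the flow toward the minimizer from Corollary~\ref{cor:exp-convergence}. With $u(t)=S_t u$ the gradient flow from Proposition~\ref{prop-Hille-Yosida-Brezis-Komura} started at $u\in\cW$, we write
\[
\|\tilde{\varrho}_\tau(t)-u_f^*\|_{L^2_\omega}\le \|S_t u-u_f^*\|_{L^2_\omega}+\|\tilde{\varrho}_\tau(t)-S_t u\|_{L^2_\omega}\,.
\]
Corollary~\ref{cor:exp-convergence} bounds the first term by $e^{-2\alpha t}\|u-u_f^*\|_{L^2_\omega}$. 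It remains to show that the second term is at most $2(\sqrt2+1)\sqrt{\tau\,\cF^{(f)}_{\alpha,\beta}(u)}$, uniformly in $t\ge0$.

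For the second term we invoke the a priori estimate of \cite[Theorem 4.0.4 / Section 4]{AmBrSe1}, which the paper records as Theorem~\ref{thm:jko}. In a Hilbert space, with a proper, lower semicontinuous, $\lambda$-convex functional $\phi$ with $\lambda\ge0$, it reads
\[
\|\tilde{\varrho}_\tau(t)-S_t u\|\le c\,\sqrt{\tau\,(\phi(u)-\inf\phi)}\,.
\]
The hypotheses hold here: $\cF^{(f)}_{\alpha,\beta}$ is $2\alpha$-convex on $L^2_\omega(\Omega)$ by Theorem~\ref{thm:convexity}, it is lower semicontinuous as a nonnegative quadratic form that is $+\infty$ off $\cW$, and $\inf\cF^{(f)}_{\alpha,\beta}\ge0$. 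Hence $\phi(u)-\inf\phi\le\cF^{(f)}_{\alpha,\beta}(u)$, which is finite because $u\in\cW$. Since $\lambda\ge0$, the constant can be taken uniform in $t$; in the convex case of the cited result it is $1/\sqrt2$ in the discrete-point estimate.

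The specific constant $2(\sqrt2+1)$ comes from passing from grid points to arbitrary times through the piecewise constant interpolation \eqref{eq-piecewie-interpolation}. For $t\in((k-1)\tau,k\tau]$ we compare $\tilde\varrho_\tau(t)=\tilde\varrho^{k\tau}$ with $S_{k\tau}u$ using the grid estimate. We then add $\|S_{k\tau}u-S_tu\|\le\tau\|u'(0^+)\|$, or more safely use the energy bound $\|S_{k\tau}u-S_tu\|\le\sqrt{2\tau\,\cF^{(f)}_{\alpha,\beta}(u)}$, which follows from the energy identity $\int_0^\infty\|u'\|^2\,ds\le\cF(u)-\inf\cF$ together with Cauchy--Schwarz on an interval of length at most $\tau$. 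A similar discrete energy estimate controls consecutive Euler steps: $\sum_k\|\tilde\varrho^{k\tau}-\tilde\varrho^{(k-1)\tau}\|^2\le2\tau\cF(u)$. Summing the contributions $\sqrt2+\sqrt2+1+1$ times $\sqrt{\tau\cF(u)}$, possibly arranged differently, gives at most $2(\sqrt2+1)\sqrt{\tau\cF(u)}$.

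The main obstacle is this bookkeeping of constants: checking that the grid-point error estimate applies uniformly in $t$ for $\lambda=2\alpha\ge0$, and that the interpolation errors add up exactly to the stated constant. If the sharp constant from \cite{AmBrSe1} is stated for a different interpolant, we would instead prove the grid estimate directly. The route is the standard variational-inequality argument: test the Euler--Lagrange inequality of each step of \eqref{eq:jko} against $S_t u$, differentiate $\|\tilde\varrho-S_tu\|^2$, and use monotonicity of the subdifferential of the convex functional.
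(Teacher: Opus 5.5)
Your proposal is correct and follows the paper's argument: the triangle inequality through $S_t u$, Corollary~\ref{cor:exp-convergence} for the first term, and Theorem~\ref{thm:jko}(iii) for the second, applied with $H=L^2_\omega(\Omega)$, $\wt\cF=\cF^{(f)}_{\alpha,\beta}\geq 0$ (lower semicontinuous by Lemma~\ref{lm-F-lsc}) and $\varphi=u$, where the limit curve coincides with $S_t u$ by uniqueness of the gradient flow. Theorem~\ref{thm:jko}(iii) is already stated for the piecewise constant interpolant \eqref{eq-piecewie-interpolation} at every $t\geq 0$ with exactly the constant $2(\sqrt{2}+1)$, so your grid-to-interpolant bookkeeping is unnecessary; in particular you can drop the bound via $\|u'(0^+)\|$, which would require $u\in D(A)$ rather than just $u\in\cW$.
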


\section{Numerical experiments}\label{sec:numerics}

The experiments below illustrate the key properties established in the theoretical analysis: no Lavrentiev gap, $\lambda$-convexity, regularity of the optimal parameter density. We note that our numerical findings show that estimations of $\cF^{(f)}_{\alpha,\beta}$ are both accurate and robust to noise and outliers. All simulations are done on standard personal computer and can  be found in the supplementary material. %\url{https://github.com/192459/variational-formulation-nn}.

\textbf{Reduction to a linear system.} We approximate the parameter density $u \in \cW$ by a polynomial ansatz $u(\theta) \approx \sum_{i=1}^M a_i\,\widehat{u}_i(\theta).
% \qquad
% \widehat{u}_i(\theta) = \prod_{j=1}^{d+1}\theta_j^{p_i^j},
% \quad p_i^j\in\mathbb{N}_0\,,
$
We have used three different types of basis functions $\widehat{u}_i$: polynomials, orthonormal cosine basis and Legendre polynomials. We minimize $\cF_{\alpha,\beta}^{(f)}$ over the coefficient vector $\vec a\in\R^M$. 
 Since the functional $\cF_{\alpha,\beta}^{(f)}$ involves expectations with respect to the (unknown) data distribution, it cannot be evaluated directly. We therefore work with its empirical version, obtained by replacing the population quantities with averages over the observed data.
Given data points $(x_k, f(x_k))_{k=1}^{N_D}$ , the functional reduces to the sum of a
quadratic form and an affine term:
\[
\widehat{\cF}_{\alpha,\beta}^{\ (f)}(a)
= C_D\left( \vec f\,^\top \vec f - 2\vec f\,^\top U\vec a + \vec a^\top U^\top U \,\vec a\right)+{\vec a}^\top(\alpha V + \beta W)\,\vec a\,,
\]
where $C_D=\frac{\mathcal{L}^d(D)}{N_D}$ (here, $\mathcal{L}^d(D)$ is the Lebesgue measure of \(D\) ), $\vec f = (f(x_1),\ldots,f(x_{N_D}))^\top$, $\vec a=(a_1,\ldots,a_M)^\top$, and 
the matrices
\[
U_{ki} = \int_\Omega h(\theta, x_k)\,\widehat{u}_i(\theta)\d\theta,
\quad
V_{ij} =  %\int_\Omega \widehat{u}_i(\theta)\,\widehat{u}_j(\theta)\,\omega(\theta)\d\theta,
\langle \widehat{u}_i , \widehat u_j \rangle_{L^2_\omega (\Omega)},
\quad
W_{ij} = \langle \nabla \widehat{u}_i, \nabla \widehat{u}_j \rangle _{L^2 (\Omega)}
\]
are determined by the given data, the parameter domain $\Omega$, and the weight $\omega(\theta)=1+|\theta|^{2d+4}$. Crucially, no iterative optimization or gradient descent is required. We can find the minimizer directly by solving the linear equation:
\begin{equation*}
\nabla \widehat{\cF}^{\ (f)}_{\alpha, \beta} (\vec a) %= 2(C_D U^\top U + \alpha V+\beta W)\vec a -  2C_D U^\top \vec  f 
= 
0 \implies \vec  a = \left(U^\top U +\frac{\alpha}{C_D} V+\frac{\beta}{C_D} W\right)^{-1}\left(U^\top \vec f\right) \, .
\end{equation*}
The matrix $U^\top U + \frac{\alpha}{C_D} V + \frac{\beta}{C_D} W$ is positive semidefinite (and moreover positive definite for $\alpha>0$ in the case the basis functions $\{\widehat{u}_i\}_{i=1}^{M}$ are linearly independent in $L^2_\omega(\Omega)$ or $\beta>0$ and the gradients of the basis functions are linearly independent in $L^{2}(\Omega)$) and the minimizer is found via pseudoinverse (or inverse); the problem reduces to \emph{ridge regression}.

Let us compare: (i) the unregularized case $\alpha=\beta=0$, which corresponds to the least-squares projection onto the polynomial ansatz; and (ii) a single-hidden-layer network $f_N$ with $N=10{\,}000$ ReLU neurons trained by Adam optimizer \cite{ADAM}, representing the finite-width benchmark. In Example~4 NN was trained via SGD due to the instability of Adam.

\paragraph{Example 1: Sinus function for $\boldsymbol{d=1}$. } 

We generate $N_D = 50$ observations of the function $\sin(7x)$ plus small Gaussian noise on the interval $(-1,1)$. The minimum of $\cF^{(f)}_{\alpha, \beta}$ is approximated by a polynomial \eqref{eq:polynomial} of order $s=15$, thus $M = 136$. We have chosen  {$\Omega=(-R,R)\times (-L,L)$ with $R=L=7$}. See the results on Figure~\ref{fig:example1}~(a) and compare the prediction for the single-layer neural network with $N=10\,000$ neurons (NN), our prediction for $\alpha=\beta=0$ and then with small penalties $\alpha=8.8\times10^{-12}$, $\beta=8.8\times 10^{-10}$.

\begin{figure}[h!] 
\centering 
\begin{subfigure}[b]{0.49\textwidth} \centering \includegraphics[width=\textwidth]{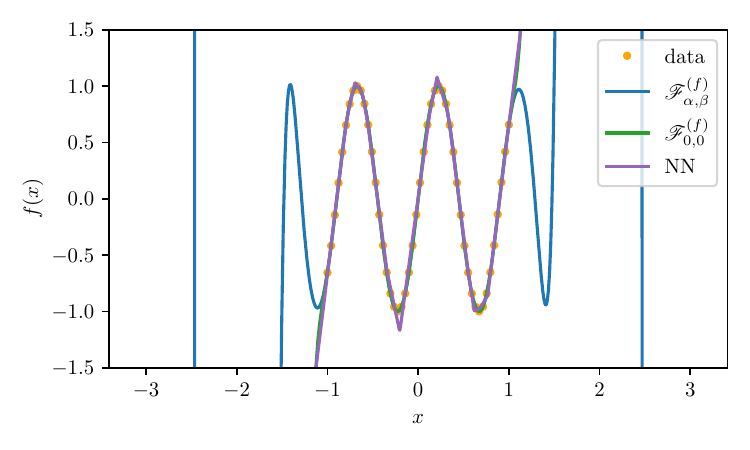} \caption{Example 1} 
\end{subfigure} 
\hfill 
\begin{subfigure}[b]{0.49\textwidth} 
\centering 
\includegraphics[width=\textwidth]{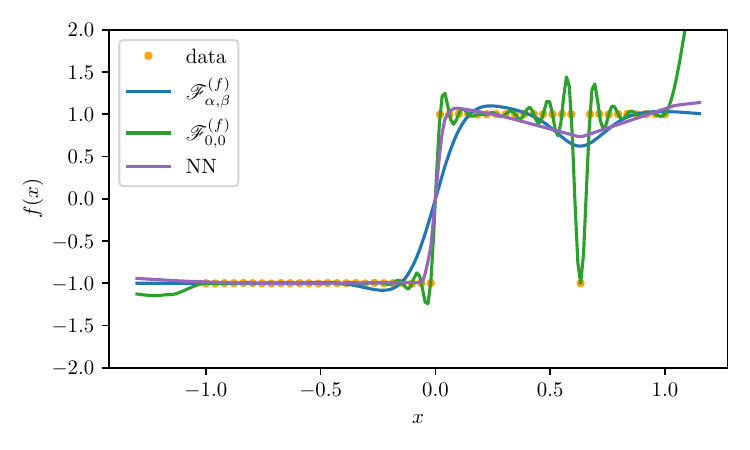} \caption{Example 2} 
\end{subfigure} 
\caption{Prediction for functions $\mathrm{sin}$ and perturbed $\mathrm{sign}$ in $d=1$} 
\label{fig:example1} 
\end{figure} 

%Figure~\ref{fig:example1}(a) shows that the regularized solution tracks the target accurately and smoothly, while the unregularized ansatz ($\alpha=\beta=0$) overfits the noise and produces a comparable but noisier fit.  The smooth reconstruction is consistent with the near-$C^3$ regularity of the minimizer (Theorem~\ref{thm:convexity}): the optimal parameter density is far from oscillatory. The learned function concentrates near a low-dimensional smooth manifold rather than memorizing individual data points, illustrating the non-overfitting mechanism identified in the introduction.

Figure~\ref{fig:example1}(a) shows that the regularized solution tracks the target accurately and smoothly, while the unregularized ansatz overfits and the neural network baseline is noisier. The Sobolev penalty makes the optimal density far from oscillatory, producing a function concentrated near a smooth manifold reflecting near-$C^3$ regularity of Theorem~\ref{thm:convexity} and the implicit bias toward smooth solutions. In addition, surprisingly, the regularized version is able to recover almost fully the next period of the $\sin$ function.

\paragraph{Example 2: Discontinuous target --- sign function ($d=1$)}

We generate $N_D = 50$ observations of $\mathrm{sign}(x)$ on $(-1,1)$ with small Gaussian noise, 
including one outlier, and use  trigonometric polynomials $\{\widehat{u}_i\}_{i=1}^{M}$ \eqref{eq:harmonic} up to the coefficient $s=60$ ($M=1{,}891$). We have chosen \(\Omega=(-R,R)\times (-L,L)\) with $R=5$, $L = 5.1$. For the regularized functional, we have taken $\alpha = 4\times 10^{-8}$ and $\beta = 2\times 10^{-7}$.

Figure~\ref{fig:example1}(b) illustrates the stability stated in
Propositions~\ref{prop:stability}--\ref{prop:stabillity_L^infty}: the minimizer 
varies continuously with the data (Corollary~\ref{cor:generalization}), avoiding 
the large excursions of the unregularized case, while the no-Lavrentiev-gap 
property (Theorem~\ref{theo:no-Lavr}) guarantees that the polynomial 
approximation incurs no hidden error. Additional pictures are given in Appendix, see Figures~\ref{fig3} and \ref{fig4}.

\paragraph{Examples 3--4: Benchmark datasets ($d>1$)} 

We validate the approach on two standard regression benchmarks, comparing against 
the single-hidden-layer network with $\mathrm{ReLU}$ activation function with  $N=10{\,}000$ neurons. In both examples we have split the dataset into the train and test with ratio of test to be $0.2$. 

In Example~3, we have used Diabetes dataset \cite{lars} containing $d=10$ features with $N_D=442$ observations. We have taken \(\Omega=B_{1}^{2}\times (-1,1)\) with \(B_{1}^2\) the unit ball in \(\R^2\),  and $s=5$ implying $M=4{,}368$. The sparsity of the matrix $U$ is $64\,\%$. We have chosen very small coefficients $\alpha/C_D = \beta/C_D =10^{-10}$.  

In  Example~4, we have selected California housing dataset \cite{california_housing}. We take \(\Omega=B_{1}^{8}\times (-1,1)\) (where \(B_{1}^{8}\) is the unit ball in \(\R^8\)), $\alpha/C_D = \beta/C_D =10^{-10}$, polynomials of order $s = 6$, so that $M = 5{,}005$. The sparsity of the matrix $U$ is $34\,\%$. We have selected single category of ocean proximity (near bay) leading to $N_D = 1{,}860$ observations with $d=8$ features. 

\begin{table}[h]
\caption{Results of approximation in Examples~3 and~4. We present Root Mean Square Error (RMSE), Mean Absolute Error (MAE) and ${R}^2$ coefficient.}
\label{tab}
\centering
\begin{tabular}{lrrrrrr} \toprule
& \multicolumn{3}{c}{Example 3} & \multicolumn{3}{c}{Example 4} \\  \midrule
Metric & \multicolumn{1}{c}{$\widehat{\mathscr{F}}^{\ (f)}_{0, 0}$}& \multicolumn{1}{c}{$\widehat{\mathscr{F}}^{\ (f)}_{\alpha, \beta}$} &\multicolumn{1}{c}{NN} & \multicolumn{1}{c}{$\widehat{\mathscr{F}}^{\ (f)}_{0, 0}$} & {$\widehat{\mathscr{F}}^{\ (f)}_{\alpha, \beta}$} & \multicolumn{1}{c}{NN} \\ \midrule
RMSE &  $51.29$ & $52.00$ & $63.28$ & $0.38$ & $0.55$&$0.29$\\
MAE & $40.54$ & $40.39$ &$48.43$ & $0.62$ & $0.39$ &$0.54$\\
$R^2$ & $0.50$ & $0.49$ & $0.24$ &  $0.73$ & $0.78$ & $0.79$\\ \bottomrule
\end{tabular}
\end{table}

The results presented in Table~\ref{tab} confirm that the regularized functional 
achieves competitive or superior accuracy, aligning with Theorem~\ref{theo:no-Lavr}. This yields strong finite-sample 
performance.

\section{Conclusion}

The right geometry unlocks the right tools. Casting shallow network training as a variational problem over parameter densities in a weighted Sobolev space gives simultaneous access to global $\lambda$-convexity, elliptic regularity, and gradient flow theory, which is a combination unavailable in either the NTK or Wasserstein frameworks. The outcome is a continuum model that is analytically controlled, dynamically stable, and exactly tied to finite-width networks via an explicit $O(1/N)$ bound: no approximation error, no linearization, no weak regularity.

The near-$C^3$ smoothness of optimal parameter densities is perhaps the most surprising consequence. It suggests that overparameterized networks are implicitly attracted to low-dimensional smooth structures in parameter space. This is a concrete, quantitative form of implicit bias that goes beyond what existing analyses can see. To be stressed out, it is not an artifact of the limit: the absence of a Lavrentiev gap ensures that the continuum picture faithfully reflects the discrete one. Concretely, the minimizer is found via a single ridge regression, with generalization error bounded by $C/\alpha$, making $\alpha$ a directly interpretable robustness parameter rather than a black-box 
regularizer.

The shallow setting was the necessary first step. Resolving convexity, regularity, and convergence cleanly here opens a concrete path toward multilayer architectures, sharper implicit bias descriptions, and new regularization principles grounded in the underlying variational framework.

\section*{Acknowledgments}
MB was supported by internal grant for specific research No. FSI-S-26-8958. IC and BM were supported by National Science Centre Grant 2024/55/B/ST6/016.

\bibliographystyle{plainnat}
\bibliography{neurips_intro}

\begin{thebibliography}{35}
\providecommand{\natexlab}[1]{#1}
\providecommand{\url}[1]{\texttt{#1}}
\expandafter\ifx\csname urlstyle\endcsname\relax
  \providecommand{\doi}[1]{doi: #1}\else
  \providecommand{\doi}{doi: \begingroup \urlstyle{rm}\Url}\fi

\bibitem[Allen-Zhu et~al.(2019)Allen-Zhu, Li, and Song]{allenzhu2019convergence}
Zeyuan Allen-Zhu, Yuanzhi Li, and Zhao Song.
\newblock A convergence theory for deep learning via over-parameterization.
\newblock In \emph{Proceedings of the 36th International Conference on Machine Learning}, volume~97 of \emph{Proceedings of Machine Learning Research}, pages 242--252, 2019.

\bibitem[Ambrosio et~al.(2008)Ambrosio, Gigli, and Savar{\'e}]{ambrosio2008gradient}
Luigi Ambrosio, Nicola Gigli, and Giuseppe Savar{\'e}.
\newblock \emph{Gradient Flows: In Metric Spaces and in the Space of Probability Measures}.
\newblock Birkh{\"a}user, 2008.

\bibitem[Ambrosio et~al.(2021)Ambrosio, Bru\'e, and Semola]{AmBrSe1}
Luigi Ambrosio, Elia Bru\'e, and Daniele Semola.
\newblock \emph{Lectures on optimal transport}, volume 130 of \emph{Unitext}.
\newblock Springer, Cham, 2021.
\newblock ISBN 978-3-030-72161-9; 978-3-030-72162-6.
\newblock \doi{10.1007/978-3-030-72162-6}.
\newblock URL \url{https://doi.org/10.1007/978-3-030-72162-6}.
\newblock La Matematica per il 3+2.

\bibitem[Arora et~al.(2019)Arora, Cohen, Hu, and Luo]{Arora2019Implicit}
Sanjeev Arora, Nadav Cohen, Wei Hu, and Yuping Luo.
\newblock Implicit regularization in deep matrix factorization.
\newblock \emph{Advances in Neural Information Processing Systems}, 32, 2019.

\bibitem[Bach(2017)]{bach2017breaking}
Francis Bach.
\newblock Breaking the curse of dimensionality with convex neural networks.
\newblock \emph{Journal of Machine Learning Research}, 18\penalty0 (19):\penalty0 1--53, 2017.

\bibitem[Balci et~al.(2020)Balci, Diening, and Surnachev]{badisu}
Anna~Kh. Balci, Lars Diening, and Mikhail Surnachev.
\newblock New examples on {L}avrentiev gap using fractals.
\newblock \emph{Calc. Var. Partial Differential Equations}, 59\penalty0 (5):\penalty0 Paper No. 180, 34, 2020.
\newblock ISSN 0944-2669,1432-0835.
\newblock \doi{10.1007/s00526-020-01818-1}.
\newblock URL \url{https://doi.org/10.1007/s00526-020-01818-1}.

\bibitem[Belkin et~al.(2019)Belkin, Hsu, Ma, and Mandal]{Belkin2019Reconciling}
Mikhail Belkin, Daniel Hsu, Siyuan Ma, and Soumik Mandal.
\newblock Reconciling modern machine-learning practice and the classical bias--variance trade-off.
\newblock \emph{Proceedings of the National Academy of Sciences}, 116\penalty0 (32):\penalty0 15849--15854, 2019.
\newblock \doi{10.1073/pnas.1903070116}.

\bibitem[Borowski et~al.(2024)Borowski, Chlebicka, De~Filippis, and Miasojedow]{bcdfm}
Micha\l{} Borowski, Iwona Chlebicka, Filomena De~Filippis, and B\l{a}\.{z}ej Miasojedow.
\newblock Absence and presence of {L}avrentiev's phenomenon for double phase functionals upon every choice of exponents.
\newblock \emph{Calc. Var. Partial Differential Equations}, 63\penalty0 (2):\penalty0 Paper No. 35, 23, 2024.
\newblock ISSN 0944-2669,1432-0835.
\newblock \doi{10.1007/s00526-023-02640-1}.
\newblock URL \url{https://doi.org/10.1007/s00526-023-02640-1}.

\bibitem[Brézis(2011)]{Brezis}
Haïm Brézis.
\newblock \emph{Functional Analysis, {S}obolev Spaces and Partial Differential Equations}.
\newblock Universitext. Springer, New York, 2011.
\newblock ISBN 978-0-387-70913-0.

\bibitem[Chen et~al.(2024)Chen, Liu, Lu, Chrysos, and Cevher]{chen2024resnet}
Yihang Chen, Fanghui Liu, Yiping Lu, Grigorios~G. Chrysos, and Volkan Cevher.
\newblock Generalization of scaled deep resnets in the mean-field regime.
\newblock \emph{arXiv preprint arXiv:2403.09889}, 2024.

\bibitem[Chizat and Bach(2018)]{chizat2018global}
L{\'e}na{\"i}c Chizat and Francis Bach.
\newblock On the global convergence of gradient descent for over-parameterized models using optimal transport.
\newblock In \emph{Advances in Neural Information Processing Systems}, 2018.

\bibitem[Chizat et~al.(2025)Chizat, Colombo, and Fernández-Real]{chizat2025convergencedriftdiffusionpdesarising}
Lénaïc Chizat, Maria Colombo, and Xavier Fernández-Real.
\newblock Convergence of drift-diffusion pdes arising as wasserstein gradient flows of convex functions, 2025.
\newblock URL \url{https://arxiv.org/abs/2507.12385}.

\bibitem[Dereich et~al.(2024)Dereich, Jentzen, and Kassing]{dereich2024minimizers}
Steffen Dereich, Arnulf Jentzen, and Sebastian Kassing.
\newblock On the existence of minimizers in shallow residual {ReLU} neural network optimization landscapes.
\newblock \emph{SIAM Journal on Numerical Analysis}, 62\penalty0 (6):\penalty0 2640--2666, 2024.
\newblock \doi{10.1137/23M1556241}.

\bibitem[Du et~al.(2019)Du, Lee, Li, Wang, and Zhai]{du2019gradient}
Simon~S. Du, Jason~D. Lee, Haochuan Li, Liwei Wang, and Xiyu Zhai.
\newblock Gradient descent finds global minima of deep neural networks.
\newblock In \emph{Proceedings of the 36th International Conference on Machine Learning}, volume~97 of \emph{Proceedings of Machine Learning Research}, pages 1675--1685, 2019.

\bibitem[E et~al.(2018)E, Han, and Li]{E2018}
Weinan E, Jiequn Han, and Qianxiao Li.
\newblock A mean-field optimal control formulation of deep learning.
\newblock \emph{Research in the Mathematical Sciences}, 6\penalty0 (1), December 2018.
\newblock ISSN 2197-9847.
\newblock \doi{10.1007/s40687-018-0172-y}.
\newblock URL \url{http://dx.doi.org/10.1007/s40687-018-0172-y}.

\bibitem[E et~al.(2020)E, Ma, and Wu]{E2020}
Weinan E, Chao Ma, and Lei Wu.
\newblock Machine learning from a continuous viewpoint, {I}.
\newblock \emph{Science China Mathematics}, 63\penalty0 (11):\penalty0 2233--2266, Nov 2020.
\newblock ISSN 1869-1862.
\newblock \doi{10.1007/s11425-020-1773-8}.
\newblock URL \url{https://doi.org/10.1007/s11425-020-1773-8}.

\bibitem[Efron et~al.(2004)Efron, Hastie, Johnstone, and Tibshirani]{lars}
Bradley Efron, Trevor Hastie, Iain Johnstone, and Robert Tibshirani.
\newblock Least angle regression.
\newblock \emph{Ann. Statist.}, 32\penalty0 (2):\penalty0 407--499, 2004.
\newblock ISSN 0090-5364,2168-8966.
\newblock \doi{10.1214/009053604000000067}.
\newblock URL \url{https://doi.org/10.1214/009053604000000067}.
\newblock With discussion, and a rejoinder by the authors.

\bibitem[Fern\'{a}ndez-Real and Figalli(2022)]{FRF}
Xavier Fern\'{a}ndez-Real and Alessio Figalli.
\newblock The continuous formulation of shallow neural networks as {W}asserstein-type gradient flows.
\newblock In \emph{Analysis at large---dedicated to the life and work of {J}ean {B}ourgain}, pages 29--57. Springer, Cham, 2022.
\newblock ISBN 978-3-031-05330-6; 978-3-031-05331-3.
\newblock \doi{10.1007/978-3-031-05331-3_3}.
\newblock URL \url{https://doi.org/10.1007/978-3-031-05331-3_3}.

\bibitem[Fonseca et~al.(2004)Fonseca, Mal\'y, and Mingione]{fomami}
Irene Fonseca, Jan Mal\'y, and Giuseppe Mingione.
\newblock Scalar minimizers with fractal singular sets.
\newblock \emph{Arch. Ration. Mech. Anal.}, 172\penalty0 (2):\penalty0 295--307, 2004.
\newblock ISSN 0003-9527,1432-0673.
\newblock \doi{10.1007/s00205-003-0301-6}.
\newblock URL \url{https://doi.org/10.1007/s00205-003-0301-6}.

\bibitem[Gilbarg and Trudinger(2001)]{GT}
David Gilbarg and Neil~S. Trudinger.
\newblock \emph{Elliptic partial differential equations of second order}.
\newblock Class. Math. Berlin: Springer, reprint of the 1998 ed. edition, 2001.
\newblock ISBN 3-540-41160-7.

\bibitem[Jacot et~al.(2018)Jacot, Gabriel, and Hongler]{jacot2018ntk}
Arthur Jacot, Franck Gabriel, and Cl{\'e}ment Hongler.
\newblock Neural tangent kernel: Convergence and generalization in neural networks.
\newblock In \emph{Advances in Neural Information Processing Systems}, 2018.

\bibitem[Jordan et~al.(1998)Jordan, Kinderlehrer, and Otto]{JKO}
Richard Jordan, David Kinderlehrer, and Felix Otto.
\newblock The variational formulation of the {F}okker-{P}lanck equation.
\newblock \emph{SIAM J. Math. Anal.}, 29\penalty0 (1):\penalty0 1--17, 1998.
\newblock ISSN 0036-1410,1095-7154.
\newblock \doi{10.1137/S0036141096303359}.
\newblock URL \url{https://doi.org/10.1137/S0036141096303359}.

\bibitem[Kingma and Ba(2014)]{ADAM}
Diederik Kingma and Jimmy Ba.
\newblock Adam: A method for stochastic optimization.
\newblock \emph{International Conference on Learning Representations}, 12 2014.

\bibitem[Mao et~al.(2026)Mao, Siegel, and Xu]{gribonval2025shallow}
Tong Mao, Jonathan~W. Siegel, and Jinchao Xu.
\newblock Approximation by shallow neural networks with {ReLU}$^k$ activation: {S}obolev spaces and optimal rates via the {R}adon transform.
\newblock \emph{SIAM Journal on Mathematical Analysis}, 58\penalty0 (2):\penalty0 1171--1186, 2026.
\newblock \doi{10.1137/24M1686693}.
\newblock URL \url{https://doi.org/10.1137/24M1686693}.

\bibitem[Mei et~al.(2018)Mei, Montanari, and Nguyen]{mei2018mean}
Song Mei, Andrea Montanari, and Phan-Minh Nguyen.
\newblock A mean field view of the landscape of two-layer neural networks.
\newblock \emph{Proceedings of the National Academy of Sciences}, 115\penalty0 (33):\penalty0 E7665--E7671, 2018.

\bibitem[Mingione and R\v{a}dulescu(2021)]{mira}
Giuseppe Mingione and Vicen\c{t}iu R\v{a}dulescu.
\newblock Recent developments in problems with nonstandard growth and nonuniform ellipticity.
\newblock \emph{J. Math. Anal. Appl.}, 501\penalty0 (1):\penalty0 Paper No. 125197, 41, 2021.
\newblock ISSN 0022-247X,1096-0813.
\newblock \doi{10.1016/j.jmaa.2021.125197}.
\newblock URL \url{https://doi.org/10.1016/j.jmaa.2021.125197}.

\bibitem[Mousavi-Hosseini et~al.(2025)Mousavi-Hosseini, Wu, and Erdogdu]{mousavi2025multiindex}
Alireza Mousavi-Hosseini, Denny Wu, and Murat~A. Erdogdu.
\newblock Learning multi-index models with neural networks via mean-field {L}angevin dynamics.
\newblock In \emph{International Conference on Learning Representations}, 2025.

\bibitem[Nitanda(2024)]{nitanda2024particle}
Atsushi Nitanda.
\newblock Improved particle approximation error for mean field neural networks.
\newblock In \emph{Advances in Neural Information Processing Systems}, 2024.

\bibitem[Oh et~al.(2025)Oh, Lyu, and Son]{oh2025sobolev}
Jong~Kwon Oh, Hanbaek Lyu, and Hwijae Son.
\newblock Sobolev acceleration for neural networks.
\newblock \emph{arXiv preprint arXiv:2509.19773}, 2025.

\bibitem[Ongie et~al.(2020)Ongie, Willett, Soudry, and Srebro]{ongie2019function}
Greg Ongie, Rebecca Willett, Daniel Soudry, and Nathan Srebro.
\newblock A function space view of bounded norm infinite width relu nets: The multivariate case.
\newblock In \emph{8th International Conference on Learning Representations, {ICLR} 2020, Addis Ababa, Ethiopia, April 26-30, 2020}. OpenReview.net, 2020.
\newblock URL \url{https://openreview.net/forum?id=H1lNPxHKDH}.

\bibitem[Pedregosa et~al.(2011)Pedregosa, Varoquaux, Gramfort, et~al.]{california_housing}
Fabian Pedregosa, Ga{\"e}l Varoquaux, Alexandre Gramfort, et~al.
\newblock Scikit-learn: Machine learning in python.
\newblock \emph{Journal of Machine Learning Research}, 12:\penalty0 2825--2830, 2011.

\bibitem[Razin and Cohen(2020)]{NEURIPS2020_f21e255f}
Noam Razin and Nadav Cohen.
\newblock Implicit regularization in deep learning may not be explainable by norms.
\newblock In H.~Larochelle, M.~Ranzato, R.~Hadsell, M.F. Balcan, and H.~Lin, editors, \emph{Advances in Neural Information Processing Systems}, volume~33, pages 21174--21187. Curran Associates, Inc., 2020.
\newblock URL \url{https://proceedings.neurips.cc/paper_files/paper/2020/file/f21e255f89e0f258accbe4e984eef486-Paper.pdf}.

\bibitem[Rotskoff and Vanden-Eijnden(2022)]{rotskoff2018trainability}
Grant~M. Rotskoff and Eric Vanden-Eijnden.
\newblock Trainability and accuracy of neural networks: An interacting particle system approach.
\newblock \emph{Communications on Pure and Applied Mathematics}, 75\penalty0 (9):\penalty0 1889--1935, 2022.

\bibitem[Santambrogio(2017)]{Santambrogio-survey}
Filippo Santambrogio.
\newblock \{{E}uclidean, metric, and {W}asserstein\} gradient flows: an overview.
\newblock \emph{Bull. Math. Sci.}, 7\penalty0 (1):\penalty0 87--154, 2017.
\newblock ISSN 1664-3607,1664-3615.
\newblock \doi{10.1007/s13373-017-0101-1}.
\newblock URL \url{https://doi.org/10.1007/s13373-017-0101-1}.

\bibitem[Sirignano and Spiliopoulos(2020)]{sirignano2020mean}
Justin Sirignano and Konstantinos Spiliopoulos.
\newblock Mean field analysis of neural networks: A law of large numbers.
\newblock \emph{SIAM Journal on Applied Mathematics}, 80\penalty0 (2):\penalty0 725--752, 2020.

\end{thebibliography}

%%%%%%%%%%%%%%%%%%%%%%%%%%%%%%%%%%%%%%%%%%%%%%%%%%%%%%%%%%%%

\newpage 
\appendix

\counterwithin{theorem}{section}
\renewcommand{\thetheorem}{\thesection.\arabic{theorem}}
\setcounter{theorem}{0}

\renewcommand{\theequation}{A.\arabic{equation}}
\setcounter{equation}{0}

\section{Technical appendices and supplementary material}
%Technical appendices with additional results, figures, graphs, and proofs may be submitted with the paper submission before the full submission deadline (see above). You can upload a ZIP file for videos or code, but do not upload a separate PDF file for the appendix. There is no page limit for the technical appendices. 

%Note: Think of the appendix as ``optional reading'' for reviewers. The paper must be able to stand alone without the appendix; for example, adding critical experiments that support the main claims to an appendix is inappropriate. 

In the case it is clear from the context, we simplify the notation setting $\cF_{\alpha,\beta}=\cF^{(f)}_{\alpha,\beta}$.

\subsection{Proof of Theorem~\ref{theo:no-Lavr}}

Theorem~\ref{theo:no-Lavr} is a consequence of two approximation results. In the first one (Proposition~\ref{prop-no-gap-M1-Msm}), we regularize any \(m\in \cM_1(\Omega)\) by relying on standard convolution and truncation arguments. We then prove that the functional \(\cR(f,\cdot)\) converges along the regularized sequence to \(\cR(f,m)\) by exploiting the growth assumptions satisfied by \(h\). In the second approximation result (Proposition~\ref{prop-no-gap-M1-Mat}), we approximate \(m\) by a sequence of purely atomic measures \(\mu_i\). In order to establish the convergence of \(\cR(f,\mu_i)\) to \(\cR(f,m)\), we strongly rely on the quadratic structure of \(\cR(f,\cdot)\).

\begin{proposition}\label{prop-no-gap-M1-Msm}
For every \(m\in \cM_1(\Omega)\), there exists a sequence \((u_j)_{j\geq 1} \subset C^{\infty}_c(\Omega)\) such that 
\[
\lim_{j\to +\infty}\cR(f,u_j\dx) = \cR(f,m)\,.
\]
\end{proposition}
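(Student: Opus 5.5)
The approach is to show that \(f_{u_j\dx}\to f_m\) in \(L^2(D)\). Since \(\cR(f,\cdot)=\|f-f_{(\cdot)}\|_{L^2(D)}^2\), this immediately gives \(\cR(f,u_j\dx)\to\cR(f,m)\). The construction runs in three stages: truncate \(m\) to compact subsets of \(\Omega\), mollify, then take a diagonal sequence. One point needs care. Only continuity of \(h\) is available here, so for fixed \(x\) the map \(\theta\mapsto h(\theta,x)\) is continuous but not uniformly so in \(x\) unless \(D\) is handled carefully. I will therefore argue via pointwise convergence in \(x\) plus dominated convergence, using the growth bound \eqref{eq:h-ass}.

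\textbf{Step 1: truncation.} Let \(K_k\) be an exhaustion of \(\Omega\) by compact sets, for instance \(K_k=\{\theta\in\Omega: |\theta|\le k,\ \mathrm{dist}(\theta,\partial\Omega)\ge 1/k\}\), and set \(m_k:=m\llcorner K_k\). For every \(x\in D\), \eqref{eq:h-ass} gives
\[
|f_m(x)-f_{m_k}(x)|\le C_h(1+|x|)\int_{\Omega\setminus K_k}(1+|\theta|)\d|m|(\theta).
\]
The integral tends to \(0\) by dominated convergence, since \(m\in\cM_1(\Omega)\). Because \((1+|x|)\in L^2(D)\) (as \(D\) is bounded), it follows that \(f_{m_k}\to f_m\) in \(L^2(D)\).

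\textbf{Step 2: mollification.} Fix \(k\) and let \(\rho_\ve\) be a standard mollifier. For \(\ve<\mathrm{dist}(K_k,\partial\Omega)/2\), the function \(u_{k,\ve}:=\rho_\ve*m_k\) lies in \(C_c^\infty(\Omega)\), with support in the compact set \(K_k^\ve\). By Fubini,
\[
f_{u_{k,\ve}\dx}(x)=\int_{\Omega} h_\ve(\theta,x)\d m_k(\theta),\qquad h_\ve(\theta,x):=\int \rho_\ve(\eta)\,h(\theta+\eta,x)\d\eta .
\]
For fixed \(x\), continuity of \(h\) in \(\theta\) gives \(h_\ve(\theta,x)\to h(\theta,x)\) for every \(\theta\in K_k\). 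Moreover, \(|h_\ve(\theta,x)|\le C_h(1+|x|)(2+|\theta|)\), which is \(|m_k|\)-integrable. Dominated convergence in \(\theta\) therefore yields \(f_{u_{k,\ve}\dx}(x)\to f_{m_k}(x)\) for each \(x\in D\). The same bound gives \(|f_{u_{k,\ve}\dx}(x)|\le C(1+|x|)\|m\|_{\cM_1}\). A second application of dominated convergence, now in \(x\) over the bounded set \(D\), gives convergence in \(L^2(D)\).

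\textbf{Step 3: diagonal extraction.} For each \(k\), choose \(\ve_k\) such that \(\|f_{u_{k,\ve_k}\dx}-f_{m_k}\|_{L^2(D)}<1/k\), and set \(u_k:=u_{k,\ve_k}\). Combining this with Step 1 gives \(f_{u_k\dx}\to f_m\) in \(L^2(D)\), which concludes the proof.

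The main technical point is Step 2, namely justifying \(L^2(D)\) convergence when \(h\) is merely continuous. The double dominated-convergence argument handles this, provided \(h\) is jointly Borel so that Fubini applies. Continuity of \(h\) on \(\Omega\times D\) ensures joint measurability, and the growth bound supplies all the integrable majorants.
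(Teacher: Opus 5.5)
Your proof is correct, and it takes a different route from the paper's. The paper never works with the network outputs $f_m$ directly. It goes through the quadratic decomposition \eqref{eq:quadratic-decomposition}: it first shows that $Q$ is continuous on $\Omega$ and $K$ on $\Omega\times\Omega$ (continuity under the integral sign). Then, for compactly supported $m$, it proves $\int\kappa\, m_\epsilon\dth\to\int\kappa\d m$ and the analogous statement for $m_\epsilon\otimes m_\epsilon$, by moving the mollifier onto a cut-off test function and using uniform convergence. The truncation step uses $Q\in L^1(|m|)$ and $K\in L^1(|m|\otimes|m|)$, which follow from \eqref{def-CKCQ}. You instead move the mollifier onto $h$ itself and apply dominated convergence twice: in $\theta$ for fixed $x$, then in $x$ over the bounded set $D$. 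This avoids both the product-measure step and the continuity of $K$. Your route gives a stronger conclusion, namely $f_{u_j\dx}\to f_m$ in $L^2(D)$. Your diagonal choice of $\ve_k$ is made through $\|f_{u_{k,\ve}\dx}-f_{m_k}\|_{L^2(D)}$, which does not involve $f$, so one sequence $(u_j)$ gives $\cR(g,u_j\dx)\to\cR(g,m)$ for every target $g\in L^2(D)$ at once. The paper's diagonal choice, by contrast, is made through the values of $\cR(f,\cdot)$ for the given $f$. In exchange, the paper's route uses $\cR$ only through the continuous kernel $K$ and the continuous linear term $Q$ with the growth bounds \eqref{def-CKCQ}. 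That is the same structure it reuses in the atomic approximation of Proposition~\ref{prop-no-gap-M1-Mat} and in the $O(1/N)$ estimate.
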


\begin{proof}
We first consider the case when \(m\) belongs to the set \(\cM_c(\Omega)\) of those finite Borel measures with compact support in \(\Omega\). 
We  extend  \(m\) as a measure on \(\R^{d+1}\) simply by assigning to any Borel set \(B\subset \R^{d+1}\) the value \(m(B\cap \Omega)\). 

Fix \(0<\epsilon_0<\textrm{dist}(\textrm{supp }m, \partial \Omega)\).
We then introduce a smooth regularization kernel \((\rho_{\epsilon})_{0<\epsilon<\epsilon_0}\) with \(\rho_{\epsilon}\in C^{\infty}_c(B_\epsilon)\), \(\rho_{\epsilon}\geq 0\) and \(\int_{\R^{d+1}}\rho_{\epsilon}=1\) for every \(0<\epsilon<\epsilon_0\). {Here, we have denoted by \(B_\epsilon\) the ball of center \(0\) and radius \(\epsilon\) in \(\R^{d+1}\).} 
We then define
\[
m_{\epsilon}:=m*\rho_{\epsilon}:x\in \Omega \mapsto \int_{\R^{d+1}} \rho_{\epsilon}(x-y)\d m(y)\,.
\]
Then, \(m*\rho_{\epsilon}\in C^{\infty}_c(\Omega)\), with \(\textrm{supp }m*\rho_{\epsilon}\subset \textrm{supp }m + B_{\epsilon}\). 
%Moreover, by the Fubini--Tonelli theorem\footnote{not true for signed measures, but that's not a problem as we have $\|m_\epsilon\|_{L^1}\leq  |m|(\Omega)$}, 
%\[
%\int_{\Omega}m_{\epsilon}(x)\dx = m(\Omega)\,.
%\]
We claim that 
\begin{equation}\label{eq372}
\lim_{\epsilon\to 0} \cR(f, m*\rho_{\epsilon}\dx) = \cR(f,m)\,.
\end{equation}
There exists a compact set  \(K\Subset \Omega\) such that \(\textrm{supp }m + B_{\epsilon}\subset K\) for every \(0<\epsilon<\epsilon_0\). 
Let \(\eta\in C^{\infty}_c(\Omega)\) be a cut-off function; that is, \(0\leq \eta\leq 1\)  and \(\eta\equiv 1\) on \(K\). Then, for every \(\kappa\in C(\Omega)\), we have
\[
\int_{\Omega}\kappa m_{\epsilon}\dth = \int_{\R^{d+1}}(\eta\kappa)m*\rho_{\epsilon}\dth\,.
\]
By the Fubini theorem,
\[
\int_{\Omega}\kappa m_{\epsilon}\dth = \int_{\R^{d+1}} (\eta\kappa)*\check{\rho}_{\epsilon}\d m\,,
\]
where \(\check{\rho}_\epsilon(\cdot)=\rho_\epsilon(-\cdot)\).
Since \(((\eta\kappa)*\check{\rho}_{\epsilon})_{\epsilon}\) uniformly converges to \(\eta\kappa\) on \(\Omega\) when \(\epsilon\to 0\), we deduce that
\[
\lim_{\epsilon\to 0}\int_{\Omega}\kappa m_{\epsilon}\dth =
\lim_{\epsilon\to 0}\int_{\R^{d+1}} (\eta\kappa)*\check{\rho}_{\epsilon}\d m=\int_{\R^{d+1}} \eta\kappa\d m\,.
\]
Since \(\eta\kappa=\kappa\) on \(\textrm{supp } m\), this gives
\begin{equation}\label{eq565}
\lim_{\epsilon\to 0}\int_{\Omega}\kappa m_{\epsilon}\dth =
\int_{\Omega}\kappa\d m\,.
\end{equation}
We can repeat the same argument with the measure \(\nu= m \otimes  m\) on \(\R^{d+1}\times \R^{d+1}\). More specifically, by the Fubini theorem, for every \(\overline{\kappa}\in C(\Omega\times\Omega)\), one has
\[
\int_{\Omega\times \Omega} \overline{\kappa}\d m_{\epsilon}\otimes \d m_{\epsilon}
=\int_{\Omega\times \Omega} (\widetilde{\kappa}*\widetilde{\rho}_{\epsilon})\d m\otimes \d m\,,
\]
where
\[
\widetilde{\kappa}(x,y):=\eta(x)\eta(y)\overline{\kappa}(x,y)\,, \qquad
\wt{\rho}_\epsilon(x,y):=\rho_\epsilon(-x)\rho_\epsilon(-y)\,.
\]
Since \(\widetilde{\kappa}\in C_c(\Omega\times\Omega)\), the integrand in the left-hand side uniformly converges to \(\widetilde{\kappa}\) when \(\epsilon\to 0\), which implies that
\begin{equation}\label{eq585}
\lim_{\epsilon\to 0}\int_{\Omega\times \Omega} \overline{\kappa} \d m_{\epsilon}\otimes \d m_{\epsilon}
=\int_{\Omega\times \Omega} \overline{\kappa}\d m\otimes \d m\,.
\end{equation}

Recall the decomposition of $\cR$ from~\eqref{eq:quadratic-decomposition} involving
 \(K\) and \(Q\) given by \eqref{eq:bK}. Since $h$ satisfies~\eqref{eq:h-ass}, for every  \(L>0\) and for every \(\theta, \vt\in B_L\cap \Omega\) it holds
\[
|h(\theta,x)h(\vt,x)|\leq C_{h}^2 (1+L)^2(1+|x|)^2\quad \text{and} \quad |f(x)h(\theta,x)|\leq C_{h}(1+L)(1+|x|)|f(x)|\,.
\]  For every \(x\in D\), the function \(\theta\mapsto h(\theta,x)\) is continuous and, by assumption, the right-hand sides of both inequalities  are summable on the bounded Borel set \(D\). By continuity under the integral sign, we deduce that \(K\) is continuous on \(\Omega\times \Omega\) and that \(Q\) is continuous on \(\Omega\). We can thus apply \eqref{eq585} to \(\overline{\kappa}:=K\) and \eqref{eq565} to \(\kappa:=Q\). This gives
\begin{align*}
\lim_{\epsilon\to 0}&\left( \int_{\Omega\times \Omega}K(\theta, \vt)m_{\epsilon}(\theta)m_{\epsilon}( \vt)\dth \d \vt + \int_{\Omega}Q(\theta)m_\epsilon(\theta)\dth\right)\\&
= \int_{\Omega\times \Omega}K(\theta, \vt)\d m(\theta)\d m ( \vt) + \int_{\Omega}Q(\theta)\d m(\theta)\,.
\end{align*}
Equivalently, \(\lim_{\epsilon\to 0}\cR(f,m_\epsilon \dx)=\cR(f,m)\)
which completes the proof of~\eqref{eq372}. We have thus proved the conclusion when \(m\in \cM_c(\Omega)\). 

In the general case when \(m\in \cM_1(\Omega)\), we introduce a  sequence \((K_j)_{j\geq 1}\) of compact sets such that \(K_j\subset K_{j+1}\) and \(\cup_{j\geq 1}K_j = \Omega\). We then set \(m_j:=m\llcorner K_j\) for every \(j\geq 1\). 
Hence,
\[
\int_{\Omega}Q(\theta)\d m_j(\theta)=\int_{\Omega} \mathds{1}_{K_j}(\theta)Q(\theta) \d m (\theta)\,, 
\]
where \(\mathds{1}_{K_j}\) is the indicator function of \(K_j\) and similarly, 
\[
\int_{\Omega\times }K(\theta, \vt)\d m_j(\theta)\otimes \d m_j(\vt)=\int_{\Omega\times \Omega} \mathds{1}_{K_j\times K_j}(\theta,\vt)K(\theta,\vt) \d m (\theta)\otimes \d m(\vt)\,.
\]
Since \(m\in \cM_1(\Omega)\), it follows from \eqref{def-CKCQ} that 
\(Q\in L^{1}(\Omega, \d |m|)\) and \(K\in L^{1}(\Omega\times \Omega, \d |m|\otimes |m|)\). Hence, the dominated convergence implies that 
\[
\lim_{j\to +\infty}\int_{\Omega} Q(\theta)\d m_j(\theta) = \int_{\Omega} Q(\theta)\d m(\theta)\,,
\]
and 
\[\lim_{j\to +\infty}\int_{\Omega\times \Omega}K(\theta, \vt)\d m_j(\theta)\otimes \d m_j(\vt)=\int_{\Omega\times \Omega} (\theta,\vt)K(\theta,\vt) \d m (\theta)\otimes \d m(\vt)\,.
\]
Hence, one can deduce that
\[
\lim_{j\to +\infty}\cR(f,m_j) = \cR(f,m)\,.
\]
Together with the first part of the proof and a diagonal argument, this completes the proof.
\end{proof}

\begin{proposition}\label{prop-no-gap-M1-Mat}
For every \(m\in \cM_1(\Omega)\), there exists a sequence \((\mu_i)_{i\geq 1}\subset \cMat(\Omega)\) such that 
\[
\lim_{i\to +\infty}\cR(f,\mu_i) = \cR(f,m)\,.
\]
\end{proposition}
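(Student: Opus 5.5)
We will approximate $m$ by purely atomic measures and pass to the limit in the quadratic decomposition \eqref{eq:quadratic-decomposition}. The plan has three reductions, followed by a diagonal argument.

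\textbf{Step 1: reduction to compact support.} As in the second part of the proof of Proposition~\ref{prop-no-gap-M1-Msm}, the truncations $m_j:=m\llcorner K_j$ satisfy $\cR(f,m_j)\to\cR(f,m)$. This follows from \eqref{def-CKCQ} and dominated convergence. So it suffices to treat $m\in\cM_c(\Omega)$, say with $\supp m\subset K\Subset\Omega$.

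\textbf{Step 2: approximation of compactly supported $m$.} We use Riemann-sum type atomic measures. For each $i$, partition $K$ into finitely many Borel sets $A^i_1,\dots,A^i_{n_i}$ of diameter at most $1/i$. Pick points $\theta^i_k\in A^i_k$ and set
\[
\mu_i:=\sum_{k=1}^{n_i} m(A^i_k)\,\delta_{\theta^i_k}\in\cMat(\Omega).
\]
Writing $w_k=N\,m(A^i_k)$ with $N=n_i$ puts $\mu_i$ in the form \eqref{eq:fN-muN}.

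\textbf{Step 3: convergence of the linear and quadratic terms.} By the proof of Proposition~\ref{prop-no-gap-M1-Msm}, $Q$ is continuous on $\Omega$ and $K$ is continuous on $\Omega\times\Omega$. Hence both are uniformly continuous on the compact sets $K$ and $K\times K$. Denote by $\omega_Q$ the modulus of continuity of $Q$ on $K$. Then
\[
\Bigl|\int Q\d\mu_i-\int Q\d m\Bigr|\le\sum_k\int_{A^i_k}|Q(\theta^i_k)-Q(\theta)|\d|m|(\theta)\le \omega_Q(1/i)\,|m|(K)\to0.
\]
For the quadratic term, note that $\mu_i\otimes\mu_i=\sum_{k,l}m(A^i_k)m(A^i_l)\delta_{(\theta^i_k,\theta^i_l)}$. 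The sets $A^i_k\times A^i_l$ partition $K\times K$ and have diameter at most $\sqrt2/i$. The same estimate therefore gives
\[
\Bigl|\iint K\d\mu_i\d\mu_i-\iint K\d m\d m\Bigr|\le\omega_K(\sqrt2/i)\,|m|(K)^2\to0.
\]
Here the quadratic structure of $\cR(f,\cdot)$ is what makes the argument work, and it is the main point to check carefully. For signed measures one must bound by the total variation $|m|$, which is harmless because $|m(A)|\le|m|(A)$. Plugging both limits into \eqref{eq:quadratic-decomposition} yields $\cR(f,\mu_i)\to\cR(f,m)$.

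\textbf{Step 4: diagonal argument.} Step 3 applies to each $m_j$ from Step 1. A diagonal extraction then produces atomic $\mu_i$ with $\cR(f,\mu_i)\to\cR(f,m)$ for general $m\in\cM_1(\Omega)$.

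The only mildly delicate step is confirming the continuity of $K$ and $Q$, which is needed for uniform continuity on compacts. This was already established in the proof of Proposition~\ref{prop-no-gap-M1-Msm} from \eqref{eq:h-ass} and the continuity of $\sigma$.
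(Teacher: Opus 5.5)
Your proof is correct, but it takes a different route from the paper's.

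\textbf{The paper's route.} The paper first invokes Proposition~\ref{prop-no-gap-M1-Msm} (mollification plus truncation) to reduce to $m=u\dx$ with $u\in C^\infty_c(\Omega)$. It then places atoms at the centres of a regular grid of cubes, with weights equal to point values of $u$ times the cell volume. It proves $\int\eta\d\mu_i\to\int\eta u\dx$ for continuous $\eta$ by a Riemann-sum argument. The quadratic term is handled in a separate stage: first for tensor products $\eta_1(x)\eta_2(y)$ via Fubini, then for a general continuous kernel by Stone--Weierstrass approximation on a compact product set. That last step uses uniform bounds on $|\mu_i|(\Omega)$ and $\|u\|_{L^1}$.

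\textbf{Your route.} You only truncate, with no mollification and no density needed. You also choose the weights to be the exact cell masses $m(A^i_k)$. This choice is what shortens the argument. Because $\mu_i\otimes\mu_i$ assigns to each product cell $A^i_k\times A^i_l$ exactly the mass $(m\otimes m)(A^i_k\times A^i_l)$, the quadratic term is controlled by the same uniform-continuity estimate as the linear term. Stone--Weierstrass is therefore unnecessary.

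You also obtain an explicit error bound $2\omega_Q(1/i)\,|m|(K)+\omega_K(\sqrt2/i)\,|m|(K)^2$. Under the local Lipschitz assumption \eqref{eq-Phi-loc-Lip}, this is $O(1/i)$ in the mesh size, since both $Q$ and the kernel are then Lipschitz on compacts.

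\textbf{Trade-off.} The paper's construction puts atoms on a fixed grid with weights read off from a smooth density, so it is closer to a quadrature scheme. The price is the extra density-approximation step and the Stone--Weierstrass step.

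One cosmetic point: you use $K$ both for the compact set and for the kernel, which makes Step~3 harder to read. Rename one of them.
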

\begin{proof}
In view of Proposition~\ref{prop-no-gap-M1-Msm}, one can assume without loss of generality that  \(m({\rm d}x)= u(x)\dx\) for some \(u\in C^{\infty}_c(\Omega)\). 
Then, for every \(i\geq 1\), we divide \(\R^{d+1}\) into a family of closed cubes \((\Sigma^{i}_j)_{j\geq 1}\) of side \(1/i\) and with pairwise disjoint interiors. 
Observe that the diameter of each such cube is \(\sqrt{d}/i\).
We denote by \(a^{i}_{j}\) the center of the  cube \(\Sigma^{i}_j\). Let \(J_i\) the subset of those indices \(j\geq 1\) for which \(\Sigma^{i}_{j}\) intersects \(\supp u\) and let \(N_i\) be  the cardinality of \(J_i\). Finally, we let
\[
\mu_i:=\frac{1}{i^d}\sum_{j\in J_i} u(a^{i}_j)\delta_{a^{i}_j}\,.
\]
For every \(i\) larger than \(i_0:=2\sqrt{d}/\textrm{dist }(\supp u, \partial \Omega)\), one has 
\[
\textrm{dist }(\cup_{j\in J_i}\Sigma^{i}_{j}, \partial \Omega)\geq \textrm{dist }(\supp u, \partial \Omega) - \tfrac{\sqrt{d}}{i} \geq \textrm{dist }(\supp u, \partial \Omega)/2\,.
\]  
In particular, there exists a compact \(K\Subset \Omega\) that contains \(\cup_{j\in J_i}\Sigma_{j}^i\) 
%the support of \(u\) and of  every \(\mu_i\) 
for every \(i\geq i_0\). Still for \(i\geq i_0\), one has  \(\Sigma_{j}^{i}\subset \Omega\) and thus, \(|\Omega\cap \Sigma_{j}^i|=|\Sigma_{j}^i|=i^{-d}\).
Let \(\eta\in C(\Omega)\). Then,
\begin{equation}\label{eq936}
\int_{\Omega} \eta \d\mu_i = \frac{1}{i^d} \sum_{j\in J_i}(\eta u )(a^{i}_j)=\sum_{j\in J_i}(\eta u)(a^{i}_j)|\Omega\cap \Sigma_{j}^i|\,.   
\end{equation}

The function \(\eta u\) being uniformly continuous  on \(\Omega\), its modulus of continuity \(\kappa_{\eta u}(r):=\sup_{|x-y|\leq r}|(\eta u)(x)-(\eta u(y))|\) converges to \(0\) when \(r\to 0\). Moreover,
\begin{align*}
\left|\int_\Omega \eta u\dx-\sum_{j\in J_i} |\Omega\cap \Sigma^{i}_j|(\eta u)(a_{j}^i) \right|&\leq \sum_{j\in J_i} \int_{\Omega\cap \Sigma_{j}^{i}}|\eta u-(\eta u)(a^{i}_j)|\\
&\leq \kappa_{\eta u}(\sqrt{d}/i)|\cup_{j\in J_i}\Sigma^{i}_j|\leq \kappa_{\eta u}(\sqrt{d}/i)|K| \,.
\end{align*}
Hence,
\begin{equation}\label{eq946}
\lim_{i\to +\infty}\left|\int_\Omega \eta u\dx-\sum_{j\in J_i} |\Omega\cap \Sigma^{i}_j|(\eta u)(a_{j}^i) \right|=0\,.
\end{equation}
Using \eqref{eq936} and \eqref{eq946}, we deduce that
\begin{equation}\label{eq736}
\lim_{i\to +\infty}\int_{\Omega}\eta\dmu_i = \int_{\Omega}\eta u\dx\,.
\end{equation}
The above convergence result will be applied to \(\eta=Q\). To obtain a similar result for \(K\), we observe that for every \(\eta_1, \eta_2 \in C(\Omega)\), by the Fubini theorem,
\begin{multline*}
\lim_{i\to +\infty}\int_{\Omega\times \Omega} \eta_1(x)\eta_2(y)\dmu_i(x)\dmu_i(y) = \lim_{i\to +\infty}\left(\int_{\Omega}\eta_1(x)\dmu_i(x)\right)\left(\int_{\Omega}\eta_2(y)\dmu_i(y)\right)\\ = \left(\int_{\Omega}\eta_1(x)u(x)\dx\right)\left(\int_{\Omega}\eta_2(y)u(y)\dy\right)
=\int_{\Omega\times \Omega}\eta_1(x)\eta_2(y)u(x)u(y)\dx\otimes\dy\,. 
\end{multline*}
By linearity, for every \(\overline{\eta}\) of the form \((x,y)\mapsto\sum_{i\in I} \eta_{i1}(x)\eta_{i2}(y)\) where \(I\) is a finite set of indices and \(\eta_{ij}\in C(\Omega)\) for every \(i\in I\), \(j\in \{1,2\}\), one gets
\begin{equation}\label{eq724}
\lim_{i\to +\infty}\int_{\Omega\times \Omega} \overline{\eta}(x,y)\dmu_i(x)\dmu_i(y) = \int_{\Omega\times \Omega} \overline{\eta}(x,y)u(x)u(y)\dx \dy\,.  
\end{equation}
Finally, let \(\overline{\kappa}\in C(\Omega\times \Omega)\) and \(\epsilon >0\). Let \(\overline{\eta}\) as above such that \(\max_{(x,y)\in  K\times  K}|\overline{\kappa}(x,y)-\overline{\eta}(x,y)|\leq \epsilon\) (here, one relies on the Stone--Weierstrass theorem). Then,
\[
\left|\int_{\Omega\times \Omega} \overline{\kappa}(x,y)\dmu_i(x)\dmu_i(y) -
\int_{\Omega\times \Omega} \overline{\kappa}(x,y)u(x)u(y)\dx \dy
\right|
\leq I_1+I_2+I_3\,,
\]
where
\begin{align*}
I_1:&=\int_{\Omega\times \Omega} |\overline{\kappa}(x,y)-\overline{\eta}(x,y)|\d |\mu_i|(x)\d|\mu_i|(y)\,,\\
I_2:&= \left|\int_{\Omega\times \Omega} \overline{\eta}(x,y)\dmu_i(x)\dmu_i(y) -
\int_{\Omega\times \Omega} \overline{\eta}(x,y)u(x)u(y)\dx \dy
\right|\,,\\
I_3:&=\int_{\Omega\times \Omega} |\overline{\kappa}(x,y)-\overline{\eta}(x,y)||u(x)u(y)|\dx\dy\,.
\end{align*}
By construction,
\[
|\mu_i|(\Omega)\leq \frac{1}{i^d}\sum_{j\in J_i}|u(a^{i}_j)|\leq \|u\|_{L^{\infty}(\Omega)}\frac{N_i}{i^d}=\|u\|_{L^{\infty}(\Omega)}|\cup_{j\in J_i}\Sigma_{j}^i|\leq \|u\|_{L^{\infty}(\Omega)}|K|\,.
\]
Hence, \(|\mu_i|\otimes |\mu_i| (\Omega\times \Omega)\leq (\|u\|_{L^{\infty}(\Omega)}|K|)^2\). Moreover,
\[
(|u|\dx\otimes |u|\dy) (\Omega\times \Omega)=\|u\|_{L^{1}(\Omega)}^2\leq (\|u\|_{L^{\infty}(\Omega)}|K|)^2\,.
\]
It follows that $I_1, I_3\leq  (\|u\|_{L^{\infty}(\Omega)}|K|)^2\epsilon$. Together with \eqref{eq724}, this implies that
\begin{align*}
\limsup_{i\to +\infty}& \left|\int_{\Omega\times \Omega} \overline{\kappa}(x,y)\dmu_i(x)\dmu_i(y) -
\int_{\Omega\times \Omega} \overline{\kappa}(x,y)u(x)u(y)\dx \dy
\right|
\\
&\leq 2\epsilon (\|u\|_{L^{\infty}(\Omega)}|K|)^2 \\
&\qquad+  \limsup_{i\to +\infty}\left|\int_{\Omega\times \Omega} \overline{\eta}(x,y)\dmu_i(x)\dmu_i(y) -
\int_{\Omega\times \Omega} \overline{\eta}(x,y)\dmu(x)\dmu(y)
\right|\\
&=2\epsilon (\|u\|_{L^{\infty}(\Omega)}|K|)^2\,.
\end{align*}
Since this is true for every \(\epsilon>0\), we deduce therefrom that
\[
\lim_{i\to +\infty}\int_{\Omega\times \Omega} \overline{\kappa}(x,y)\dmu_i(x)\dmu_i(y)=
\int_{\Omega\times \Omega} \overline{\kappa}(x,y)u(x)u(y)\dx\dy\,.
\]
The proof is complete.
\end{proof}
{
We now have all the ingredients to present the proof of Theorem~\ref{theo:no-Lavr}.
}
\begin{proof}[Proof of Theorem~\ref{theo:no-Lavr}]

Since \(C_c^\infty(\Omega)\subset \cW \subset \cM_1(\Omega)\), we have
\[
\inf_{m \in \cM_1 (\Omega)} \cR(f,m)\leq \inf_{v \in \cW} \cR(f,v\dx) \leq \inf_{v \in C_c^\infty(\Omega)} \cR(f,v\dx)\,.
\]
By Proposition~\ref{prop-no-gap-M1-Msm}, one also has
\[
\inf_{v \in C_c^\infty(\Omega)} \cR(f,v\dx)\leq \inf_{m \in \cM_1 (\Omega)} \cR(f,m)\,.
\]
We can thus conclude that
\[
\inf_{m \in \cM_1 (\Omega)} \cR(f,m)= \inf_{v \in \cW} \cR(f,v\dx) = \inf_{v \in C_c^\infty(\Omega)} \cR(f,v\dx)\,.
\]
Similarly, the fact that \(\cMat(\Omega)\subset \cM_1 (\Omega)\) and Proposition~\ref{prop-no-gap-M1-Mat} imply that 
\[
\inf_{m \in \cM_1 (\Omega)} \cR(f,m)= \inf_{\mu\in \cMat(\Omega)} \cR(f,\mu)\,.
\]
The proof is complete.
\end{proof}

\subsection{Proof of Proposition~\ref{prop:inf-equivalence}}

Inspired by the proof of \cite[Proposition 1]{mei2018mean} written for probability measures, we provide related result for the more general  setting of finite Borel measures.
In all this section, we fix \(f\in L^{2}(D)\).
For every \(N\geq 1\), for every \(\theta_i\in \Omega\) and \(w_i\in \R\) with \(1\leq i \leq N\), the measure \(\mu_N\) defined in \eqref{eq:fN-muN} satisfies:
\begin{equation}\label{eq-R0-rhoN}
\cR(f,\mu_N)=\|f\|_{L^2(D)}^2+\frac{1}{N^2}\sum_{i,j=1}^N w_i w_j K(\theta_i, \theta_j)-\frac{2}{N}\sum_{i=1}^N w_i Q(\theta_i)\,. 
\end{equation}

\begin{remark}\label{lm-law-thetai}
Let \(M\geq 1\) and  \(\nu_+, \nu_-\) be two probability measures on \(\Omega\). Let \(\big((\overline{\theta}_{i}^+, \overline{\theta}_{i}^{-})\big)_{1\leq i \leq M}\) be a finite family of independent random variables identically distributed of law \(\nu_+\otimes \nu_-\) on \(\Omega\times \Omega\). Then, the law of each \(\overline{\theta}_{i}^+\) is \(\nu_+\), the law of each \(\overline{\theta}_{i}^{-}\) is \(\nu_-\). Moreover, all those random variables \(\overline{\theta}_{i}^\pm\) are pairwise independent.  
\end{remark}

We begin with the following observation:
\begin{lemma}
For every finite Borel measure \(m\in \cM_1(\Omega)\), 
\begin{equation}\label{prop-positive-K}
\int_{\Omega\times \Omega}K(\tau,\vt)\d m(\tau)\d m(\vt)
\geq 0\,.
\end{equation}    
\end{lemma}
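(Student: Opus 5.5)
The plan is to exhibit the double integral as the squared $L^2(D)$-norm of $f_m$. By the definition \eqref{eq:bK} of $K$,
\[
\int_{\Omega\times\Omega}K(\tau,\vt)\d m(\tau)\d m(\vt)=\int_{\Omega\times\Omega}\int_D h(\tau,x)h(\vt,x)\dx\,\d m(\tau)\d m(\vt)\,.
\]
We then exchange the order of integration so that the $x$-integral is outermost. The integrand factorizes, giving
\[
\int_D\Big(\int_\Omega h(\tau,x)\d m(\tau)\Big)\Big(\int_\Omega h(\vt,x)\d m(\vt)\Big)\dx=\int_D f_m(x)^2\dx\geq 0\,,
\]
with $f_m$ as in \eqref{eq:fm}. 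This is the same identity that underlies \eqref{eq:quadratic-decomposition}, namely $\cR(0,m)=\|f_m\|_{L^2(D)}^2$.

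The only real step is justifying Fubini for the signed measure $m$. To do this, decompose $m=m^+-m^-$ via the Jordan decomposition, so that each integral against $m$ is a finite combination of integrals against the finite positive measures $m^\pm$. It then suffices to check absolute integrability of $(x,\tau,\vt)\mapsto h(\tau,x)h(\vt,x)$ with respect to $\dx\otimes|m|\otimes|m|$. By \eqref{eq:h-ass}, this integrand is bounded by
\[
C_h^2(1+|x|)^2(1+|\tau|)(1+|\vt|)\,.
\]
Its integral is at most
\[
C_h^2\,\|1+|x|\|_{L^2(D)}^2\Big(\int_\Omega(1+|\tau|)\d|m|(\tau)\Big)^2\,,
\]
which is finite because $D$ is bounded and $m\in\cM_1(\Omega)$. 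Measurability holds since $h$ is continuous, or at least Borel.

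Fubini--Tonelli therefore applies to each of the four products $m^{\pm}\otimes m^{\pm}$, and summing them with signs gives the identity above. The same integrability bound shows that $f_m$ is well defined and lies in $L^2(D)$, so the right-hand side is a genuine nonnegative number. I expect no real obstacle here; the only care needed is the signed-measure bookkeeping.
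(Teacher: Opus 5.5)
Your proposal is correct and takes essentially the same route as the paper: the paper writes the double integral as $\int_D\big(\int_\Omega h(\theta,x)\d m(\theta)\big)^2\dx\geq 0$ in a single line and leaves the Fubini exchange implicit. Your Jordan decomposition of $m$, together with the integrability bound from \eqref{eq:h-ass} and $m\in\cM_1(\Omega)$, simply supplies that omitted justification.
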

\begin{proof}
By definition of \(K\), one has
\[
\int_{\Omega\times \Omega}K(\tau,\vt)\d m(\tau)\d m(\vt)
=\int_{D}\left(\int_{\Omega}h(\theta,x)\d m(\theta)\right)^2\dx\geq 0\,.    
\]
\end{proof}

Remember that  \(\cM_2(\Omega)\) denotes the set of all those finite Borel measures on \(\Omega\) such that
\[
\int_{\Omega}|\tau|^2\d |m|(\tau)<+\infty\,.
\]

\begin{lemma}\label{lemma-calculation-Montanari}
Let  \(\nu_+, \nu_-\) be two probability measures on \(\Omega\) that belong to \(\cM_2(\Omega)\) and are mutually singular. Let \((\overline{\theta}_{i}^+, \overline{\theta}_{i}^{-})_{1\leq i \leq M}\) be a finite family of independent random variables of law \(\nu_+\otimes \nu_-\) on \(\Omega\times \Omega\). For every \(\alpha_+, \alpha_-\geq 0\), we define:
\[
\overline{\rho}=\frac{\alpha_+}{N} \sum_{i=1}^{N}\delta_{\overline{\theta}_{i}^+} -\frac{\alpha_-}{N}\sum_{i=1}^{N}\delta_{\overline{\theta}_{i}^-}\,.
\]
Then, for $m=\alpha_+\nu_+ - \alpha_-\nu_-$, it holds
\[
\mathbb{E}\big(\cR(f,\overline{\rho})\big)\leq \cR(f,m)+\frac{|m|(\Omega)}{N}\int_{\Omega}K(\tau, \tau)\d |m|(\tau)\,.
\]\end{lemma}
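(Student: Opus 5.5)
The plan is to expand $\cR(f,\overline{\rho})$ with formula \eqref{eq-R0-rhoN}, take expectations term by term, and compare the result with the decomposition \eqref{eq:quadratic-decomposition} of $\cR(f,m)$. (The statement mixes $M$ and $N$ for the number of samples; I read them as equal to $N$.) Write $\overline{\rho}=\frac1N\sum_{k=1}^{2N} w_k\delta_{\theta_k}$, where the atoms $\overline{\theta}_i^+$ carry weight $w=\alpha_+$ and the atoms $\overline{\theta}_i^-$ carry weight $w=-\alpha_-$. All expectations below are finite: $\nu_\pm\in\cM_2(\Omega)$ and \eqref{def-CKCQ} give $|Q(\theta)|\lesssim 1+|\theta|$ and $|K(\theta,\vt)|\lesssim(1+|\theta|)(1+|\vt|)$, so both are integrable against $\nu_\pm$ and $\nu_\pm\otimes\nu_\pm$.

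\textbf{Linear term.} Since each $\overline{\theta}_i^\pm$ has law $\nu_\pm$ (Remark~\ref{lm-law-thetai}), we get
\[
\mathbb{E}\Big(\tfrac{2}{N}\textstyle\sum_k w_kQ(\theta_k)\Big)=2\Big(\alpha_+\int Q\,\d\nu_+-\alpha_-\int Q\,\d\nu_-\Big)=2\int_\Omega Q\d m .
\]

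\textbf{Quadratic term.} Split $\frac{1}{N^2}\sum_{k,l}w_kw_lK(\theta_k,\theta_l)$ into off-diagonal pairs ($k\neq l$) and diagonal pairs ($k=l$). For $k\neq l$ the two atoms are distinct random variables, and by Remark~\ref{lm-law-thetai} they are independent. Hence $\mathbb{E}K(\theta_k,\theta_l)=\int K\,\d\nu_{s}\otimes\d\nu_{s'}$, where $s,s'\in\{+,-\}$ are the labels of the two atoms. Counting pairs: there are $N^2-N$ pairs in each same-sign block and $N^2$ pairs in each cross block. Summing gives
\[
\int K\d m\otimes\d m-\frac1N\Big(\alpha_+^2\int K\d\nu_+\otimes\d\nu_+ +\alpha_-^2\int K\d\nu_-\otimes\d\nu_-\Big).
\]
The subtracted term is $\ge0$ by \eqref{prop-positive-K} applied to the measures $\nu_\pm$, so I simply drop it as an upper bound.

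\textbf{Diagonal term.} The diagonal contributes
\[
\frac1N\Big(\alpha_+^2\int K(\tau,\tau)\d\nu_+(\tau)+\alpha_-^2\int K(\tau,\tau)\d\nu_-(\tau)\Big).
\]
Here I use mutual singularity: it gives $|m|=\alpha_+\nu_++\alpha_-\nu_-$ and $|m|(\Omega)=\alpha_++\alpha_-$. Since $K(\tau,\tau)=\int_D h(\tau,x)^2\dx\ge0$, we have $\alpha_\pm^2\int K(\tau,\tau)\d\nu_\pm\le(\alpha_++\alpha_-)\,\alpha_\pm\int K(\tau,\tau)\d\nu_\pm$. Summing the two signs yields the bound $\frac{|m|(\Omega)}{N}\int K(\tau,\tau)\d|m|(\tau)$.

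\textbf{Conclusion and main difficulty.} Combining the three pieces with $\|f\|^2_{L^2(D)}$ reproduces $\cR(f,m)$ plus the stated error. The only delicate point is the bookkeeping of the diagonal versus off-diagonal expectations and the pair counts. The independence of distinct atoms, including same-index pairs $\overline{\theta}_i^+,\overline{\theta}_i^-$, is exactly what Remark~\ref{lm-law-thetai} provides.
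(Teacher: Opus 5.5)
Your proposal is correct and follows the paper's proof essentially step for step: you expand via \eqref{eq-R0-rhoN}, take expectations using the pairwise independence from Remark~\ref{lm-law-thetai}, drop the terms $-\frac{\alpha_\pm^2}{N}\int K\,\d\nu_\pm\otimes\d\nu_\pm$ by \eqref{prop-positive-K}, and absorb the diagonal contributions using $|m|=\alpha_+\nu_++\alpha_-\nu_-$ and $|m|(\Omega)=\alpha_++\alpha_-$. You also make explicit a step the paper leaves implicit: $K(\tau,\tau)\geq 0$ is what allows bounding $\alpha_\pm^2$ by $(\alpha_++\alpha_-)\alpha_\pm$ inside the diagonal terms.
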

\begin{proof}
Observe that a probability measure that belongs to \(\cM_2(\Omega)\) automatically belongs to \(\cM_1(\Omega)\). As a consequence, \(m\in \cM_1(\Omega)\) and \(\cR(f,m)\) is well-defined.
By \eqref{eq-R0-rhoN}, one has 
\begin{align*}
\mathbb{E}\big(\cR(f,\overline{\rho})\big)=& \|f\|_{L^2(D)}^2+\frac{\alpha_{+}^2}{N^2}\sum_{i,j=1}^N \mathbb{E}\big(K(\overline{\theta}_{i}^+, \overline{\theta}_{j}^+)\big)
+\frac{\alpha_{-}^2}{N^2}\sum_{i,j=1}^N \mathbb{E}\big(K(\overline{\theta}_{i}^-, \overline{\theta}_{j}^-)\big)\\
&\,-2\frac{\alpha_+\alpha_-}{N^2}\sum_{i,j=1}^N \mathbb{E}\big(K(\overline{\theta}_{i}^+, \overline{\theta}_{j}^-)\big)-\frac{2\alpha_+}{N}\sum_{i=1}^N  \mathbb{E}\big(Q(\overline{\theta}_{i}^+)\big)+\frac{2\alpha_-}{N}\sum_{i=1}^N  \mathbb{E}\big(Q(\overline{\theta}_{i}^-)\big)\,.
\end{align*}
By Remark~\ref{lm-law-thetai},  the law of \(\theta_{i}^{\pm}\) is \(\nu_{\pm}\). Hence,
\[
\mathbb{E}(Q(\overline{\theta}_{i}^\pm))=\int_{\Omega}Q(\tau)\d\nu_\pm(\tau)\,.
\]
It follows that
\begin{align*}
\frac{2\alpha_+}{N}\sum_{i=1}^N  \mathbb{E}\big(Q(\overline{\theta}_{i}^+)\big)-\frac{2\alpha_-}{N}\sum_{i=1}^N  \mathbb{E}\big(Q(\overline{\theta}_{i}^-)\big)
&= 2 \alpha_+\int_{\Omega}Q(\tau)\d\nu_+(\tau)-2
 \alpha_-\int_{\Omega}Q(\tau)\d\nu_-(\tau)\\
&= 2 \int_{\Omega}Q(\tau)\d m\,.
\end{align*}
By Remark~\ref{lm-law-thetai} again, the random variables \(\overline{\theta}_{i}^{\pm}\) are pairwise independent. It follows that
\[
\mathbb{E}\big(K(\overline{\theta}_{i}^+, \overline{\theta}_{j}^-)\big) = \int_{\Omega\times \Omega} K(\tau, \vt) \d\nu_+\otimes \d\nu_-(\tau,\vt)\,, \qquad \forall 1\leq i, j\leq N\,,
\]
and 
\[
\mathbb{E}\big(K(\overline{\theta}_{i}^\pm, \overline{\theta}_{j}^\pm)\big) = \int_{\Omega\times \Omega} K(\tau, \vt) \d\nu_\pm\otimes \d\nu_\pm(\tau,\vt)\,, \qquad \forall 1\leq i\not= j\leq N\,.
\]
Moreover,
\[
\mathbb{E}\big(K(\overline{\theta}_{i}^\pm, \overline{\theta}_{i}^\pm)\big)=\int_\Omega K(\tau, \tau)\d\nu_\pm(\tau)\,. 
\]
We thus get
\begin{align*}
\mathbb{E}\big(\cR(f,\overline{\rho})\big)&=\|f\|_{L^2(D)}^2+\frac{\alpha_{+}^2(N^2-N)}{N^2}\int_{\Omega\times \Omega}K(\tau,\vt)\d\nu_+(\tau)\d\nu_+(\vt)\\ &\quad + \frac{\alpha_{+}^2}{N}\int_{\Omega}K(\tau,\tau)\d\nu_+(\tau)
+\frac{\alpha_{-}^2(N^2-N)}{N^2}\int_{\Omega\times \Omega}K(\tau,\vt)\d\nu_-(\tau)\d\nu_-(\vt)\\
&\quad + \frac{\alpha_{-}^2}{N}\int_{\Omega}K(\tau,\tau)\d\nu_-(\tau) -2\alpha_+\alpha_-\int_{\Omega}K(\tau, \vt)\d\nu_+(\tau)\d\nu_-(\vt)
\\
&\quad -2\int_{\Omega}Q(\tau)\d m(\tau)\,.
\end{align*}
Taking into account the definition of \(m=\alpha_+\nu_+-\alpha_-\nu_-\), this gives
\begin{align*}
\mathbb{E}\big(\cR(f,\overline{\rho})\big)
&=\cR(f,m) -\frac{\alpha_{+}^2}{N}\int_{\Omega\times \Omega}K(\tau,\vt)\d \nu_+(\tau)\d \nu_+(\vt)\\
&\quad- \frac{\alpha_{-}^2}{N}\int_{\Omega\times \Omega}K(\tau,\vt)\d \nu_-(\tau)\d \nu_-(\vt)\\
&\quad +\frac{\alpha_{+}^2}{N}\int_{\Omega}K(\tau, \tau)\d \nu_+(\tau)+\frac{\alpha_{-}^2}{N}\int_{\Omega}K(\vt, \vt)\d \nu_-(\vt)\,.
\end{align*}

Hence, by \eqref{prop-positive-K},
\[
\mathbb{E}\big(\cR(f,\overline{\rho})\big)
\leq \cR(f,m)+\frac{\alpha_{+}^2}{N}\int_{\Omega}K(\tau, \tau)\d \nu_+(\tau)+\frac{\alpha_{-}^2}{N}\int_{\Omega}K(\vt, \vt)\d \nu_-(\vt)\,,
\]
which implies the desired result since \(\alpha_++\alpha_-=|m|(\Omega)\) and \(|m|=\alpha_+\nu_++\alpha_-\nu_-\).    
\end{proof}

%Since \(\cM_{2N}^{\rm at}(\Omega)\subset \cM_2(\Omega)\), one has 
%\[
%\inf_{m\in \cM_2(\Omega)}\cF(m) \leq 
%\inf_{\rho\in\cM_{2N}^{\rm at}(\Omega)}\cF(\rho)\,.
%\]

\begin{proposition}\label{prop-appendix-mat2N-m2}
For every \(N\geq 1\) and for every \(m\in \cM_2(\Omega)\),
\[
\inf_{\rho\in \cM^{\rm at}_{2N}(\Omega)} \cR(f,\rho) \leq \cR(f,m)+ \frac{|m|(\Omega)}{N} \int_{\Omega}K(\tau, \tau)\d |m|(\tau)\,.   
\]
\end{proposition}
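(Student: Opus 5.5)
The plan is to apply Lemma~\ref{lemma-calculation-Montanari} after a Jordan decomposition of \(m\), and then pass from an expectation to an infimum by a probabilistic argument.

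Let \(m\in\cM_2(\Omega)\) and write its Hahn--Jordan decomposition \(m=m^+-m^-\), where \(m^\pm\geq 0\) are mutually singular. Set \(\alpha_\pm:=m^\pm(\Omega)\). If \(\alpha_+>0\), define \(\nu_+:=m^+/\alpha_+\); if \(\alpha_+=0\), the coefficient \(\alpha_+\) kills that part, so we may take \(\nu_+\) to be any probability measure in \(\cM_2(\Omega)\) singular to \(\nu_-\), for instance a Dirac mass at a point outside the support of \(m^-\). Define \(\nu_-\) symmetrically.

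The trivial case \(m=0\) needs no randomness: the zero measure lies in \(\cM^{\rm at}_{2N}(\Omega)\), and both sides of the inequality equal \(\|f\|_{L^2(D)}^2\).

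Otherwise \(\nu_\pm\) are probability measures in \(\cM_2(\Omega)\), since \(|m^\pm|\leq|m|\) gives \(\int_\Omega|\tau|^2\d\nu_\pm<\infty\). They are mutually singular, \(m=\alpha_+\nu_+-\alpha_-\nu_-\), and \(|m|=\alpha_+\nu_++\alpha_-\nu_-\). These are exactly the hypotheses of Lemma~\ref{lemma-calculation-Montanari}, taking \(M=N\).

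On a probability space, take i.i.d.\ pairs \((\overline\theta_i^+,\overline\theta_i^-)_{1\leq i\leq N}\) with law \(\nu_+\otimes\nu_-\); such a product space exists by Kolmogorov or simply by a finite product of measures. The random measure \(\overline\rho\) from the lemma is then, for every outcome, a purely atomic signed measure with at most \(2N\) atoms. Hence \(\overline\rho\in\cM^{\rm at}_{2N}(\Omega)\) pointwise, and
\[
\inf_{\rho\in\cM^{\rm at}_{2N}(\Omega)}\cR(f,\rho)\leq \cR(f,\overline\rho)\quad\text{almost surely}.
\]

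Taking expectations and applying Lemma~\ref{lemma-calculation-Montanari} gives
\[
\inf_{\rho\in\cM^{\rm at}_{2N}(\Omega)}\cR(f,\rho)\leq \mathbb{E}\big(\cR(f,\overline\rho)\big)\leq \cR(f,m)+\frac{|m|(\Omega)}{N}\int_\Omega K(\tau,\tau)\d|m|(\tau).
\]

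The main point to check is that the expectation is meaningful and finite. \(\cR(f,\overline\rho)\) is a measurable function of the atoms, because \(K\) and \(Q\) are continuous by the argument in Proposition~\ref{prop-no-gap-M1-Msm}, or Borel under the weaker hypotheses. It is also integrable: by \eqref{def-CKCQ}, every term in \eqref{eq-R0-rhoN} is bounded by a constant times \((1+|\overline\theta_i|)(1+|\overline\theta_j|)\). This is integrable because the \(\nu_\pm\) have finite second moments, using independence for \(i\neq j\).

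Finally, the right-hand side is finite because \(K(\tau,\tau)\leq (C_h')^2(1+|\tau|)^2\) and \(m\in\cM_2(\Omega)\). If \(\mathbb E\) were \(+\infty\), the bound would be trivial anyway.
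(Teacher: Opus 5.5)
Your proposal is correct and follows the paper's proof. Both use the Jordan decomposition, apply Lemma~\ref{lemma-calculation-Montanari} to the normalized parts, and pass from the expectation to the infimum using $\overline\rho\in\cM^{\rm at}_{2N}(\Omega)$ for every outcome. Your treatment of the degenerate cases $\alpha_\pm=0$ and of measurability and integrability is extra care that the paper omits.

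One small fix is needed. When $\alpha_+=0$, a point outside $\supp m^-$ need not exist, for instance if $m^-$ has a positive density on all of $\Omega$. Instead, take a Dirac mass at any point of $\Omega$ that is not an atom of $\nu_-$; there are only countably many atoms. Alternatively, observe that every $\nu_+$-term carries the factor $\alpha_+=0$.
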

\begin{proof}
Let \(m=m_+-m_-\) be the Haar decomposition of \(m\). We introduce the nonnegative  numbers \(\alpha_+=m_+(\Omega)\) and \(\alpha_-=m_-(\Omega)\). Let \(\nu_+, \nu_-\) be two mutually singular probability measures in \(\cM_2(\Omega)\) such that \(m_+=\alpha_+\nu_+\), \(m_-=\alpha_-\nu_-\). Let \((\overline{\theta}_{i}^+, \overline{\theta}_{i}^{-})_{1\leq i \leq M}\) be a finite family of independent random variables of law \(\nu_+\otimes \nu_-\) on \(\Omega\times \Omega\).
We then define
\[
\overline{\rho}:=\frac{\alpha_+}{N} \sum_{i=1}^{N}\delta_{\overline{\theta}_{i}^+} -\frac{\alpha_-}{N}\sum_{i=1}^{N}\delta_{\overline{\theta}_{i}^-}\,.
\]
By Lemma~\ref{lemma-calculation-Montanari}, 
\[
\mathbb{E}\big(\cR(f,\overline{\rho})\big)\leq \cR(f,m)+\frac{|m|(\Omega)}{N}\int_{\Omega}K(\tau, \tau)\d |m|(\tau)\,.
\]
There exists a choice of \(\theta_{i}^{+}, \theta_{i}^{-}\in \Omega\) with \(1\leq i \leq N\) such that the measure
\[
\rho=\frac{\alpha_+}{N} \sum_{i=1}^{N}\delta_{\theta_{i}^+} -\frac{\alpha_-}{N}\sum_{i=1}^{N}\delta_{\theta_{i}^-}
\]
satisfies 
\[
\cR(f,\rho) \leq \mathbb{E}\big(\cR(f,\overline{\rho})\big)\,.
\]
Hence, 
\[
\cR(f,\rho)\leq \cR(f,m)+\frac{|m|(\Omega)}{N}\int_{\Omega}K(\tau, \tau)\d |m|(\tau)\,. 
\]
This implies  that
\[
\inf_{\rho\in \cM_{2N}^{\rm at}(\Omega)}\cR(f,\rho) \leq \cR(f,m)+\frac{|m|(\Omega)}{N}\int_{\Omega}K(\tau, \tau)\d |m|(\tau)\,.
\]

\end{proof}

We conclude this section with the proof of Proposition~\ref{prop:inf-equivalence}. 

\begin{proof}[Proof of Proposition~\ref{prop:inf-equivalence}]
Let \(v\in L^{2}_{\omega}(\Omega)\).
Then, by the Schwarz inequality and \eqref{eq:weight-assumptions}, 
\[
\int_{\Omega} (1+|\theta|)^2 |v(\theta)|\d\theta \leq \sqrt{\co} \|v\|_{L^{2}_{\omega}(\Omega)}\,.
\]
This proves that the measure \(m:=v\dth\) belongs to \(\cM_2(\Omega)\). 
Similarly, we have
\[
|m|(\Omega)=\int_{\Omega}|v(\theta)|\d\theta \leq \sqrt{\co} \|v\|_{L^{2}_{\omega}(\Omega)}\,.
\]
Applying Proposition~\ref{prop-appendix-mat2N-m2} to this measure \(m\) and using \eqref{def-CKCQ}, one gets for every \(N\geq 2\),
\begin{align*}
\inf_{\rho\in \cM^{\rm at}_{N}(\Omega)} \cR(f,\rho)&\leq 
\inf_{\rho\in \cM^{\rm at}_{2\lfloor N/2\rfloor}(\Omega)} \cR(f,\rho)\\ &\leq \cR(f,m)+ \frac{4|m|(\Omega)}{N} \int_{\Omega}K(\tau, \tau)\d |m|(\tau)\\
&\leq \cR(f,v\d\theta)+ \frac{4\sqrt{\co}\|v\|_{L^{2}_{\omega}(\Omega)}}{N} \int_{\Omega}(C_{h}')^2(1+|\tau|)^2|v|(\tau)\d \tau\\
&\leq \cR(f,v\d\theta)+ \frac{4(C_{h}')^2\co\|v\|^{2}_{L^{2}_{\omega}(\Omega)}}{N}\,.
\end{align*}
This proves the first assertion of Proposition~\ref{prop:inf-equivalence} with \(C:=4(C_{h}')^2\co\). If one additionnally assumes that there exists a bounded minimizing sequence \((v_j)_{j\geq 1}\) for \(\cR(f,\cdot)\) in \(L^{2}_{\omega}(\Omega)\), then applying the above estimate for every \(j\geq 1\), one gets
\[
\inf_{\rho\in \cM^{\rm at}_{N}(\Omega)} \cR(f,\rho)
\leq \cR(f,v_j\d\theta)+ \frac{C}{N}\sup_{k\geq 1}\|v_k\|^{2}_{L^{2}_{\omega}(\Omega)}\,.
\]
Passing to the limit \(j\to +\infty\), one deduces that
\[
\inf_{\rho\in \cM^{\rm at}_{N}(\Omega)} \cR(f,\rho)
\leq \inf_{v\in L^{2}_\omega(\Omega)}\cR(f,v\d\theta)+ \frac{C}{N}\sup_{k\geq 1}\|v_k\|^{2}_{L^{2}_{\omega}(\Omega)}\,,
\]
which implies the desired conclusion.
\end{proof}

%In particular, if \(\cF\) has a minimizer  on \(\cM_2(\Omega)\), then
%\[
%\inf_{\rho\in \cM_{2N}^{\rm at}(\Omega)}\cF(\rho) \leq \min_{m\in \cM_{2}(\Omega)}\cF(m)+O\left(\tfrac{1}{N}\right)\,.
%\]

\subsection{Proof of Theorem~\ref{thm:convexity}}

In order to establish Theorem~\ref{thm:convexity}, and more specifically the existence and uniqueness of the minimizer of \(\cF^{(f)}_{\alpha, \beta}\),   we rely on the uniform convexity of that functional. In turn, the latter is a simple consequence of the observation already formulated in~\eqref{prop-positive-K}.

\begin{lemma}\label{lm-cRf-convex}
For every \(f\in L^{2}(D)\), the functional \(m\in \cM_1(\Omega)\to \cR(f,m)\) is convex.
\end{lemma}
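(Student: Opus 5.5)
The risk is a quadratic functional in $m$ whose quadratic part is nonnegative, so convexity should follow from the standard identity for quadratic forms. Write $\cR(f,m)=\|f\|_{L^2(D)}^2-2L(m)+B(m,m)$ as in \eqref{eq:quadratic-decomposition}. Here $L(m):=\int_\Omega Q\d m$ is linear and $B(m_1,m_2):=\int_{\Omega\times\Omega}K(\theta,\vt)\d m_1(\theta)\d m_2(\vt)$ is a symmetric bilinear form on $\cM_1(\Omega)$.

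First I check that $L$ and $B$ are well defined on $\cM_1(\Omega)$. By \eqref{def-CKCQ}, $|Q(\theta)|\le C(1+|\theta|)$ and $|K(\theta,\vt)|\le C(1+|\theta|)(1+|\vt|)$. Hence $Q\in L^1(|m|)$ and $K\in L^1(|m_1|\otimes|m_2|)$ for all $m,m_1,m_2\in\cM_1(\Omega)$. Symmetry of $B$ follows from $K(\theta,\vt)=K(\vt,\theta)$ and Fubini.

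Next, fix $m_1,m_2\in\cM_1(\Omega)$ and $t\in[0,1]$, and set $m_t:=tm_1+(1-t)m_2$, which again lies in $\cM_1(\Omega)$. Expanding bilinearly, the linear and constant terms cancel and
\[
t\cR(f,m_1)+(1-t)\cR(f,m_2)-\cR(f,m_t)=t(1-t)\,B(m_1-m_2,m_1-m_2)\,.
\]
The lemma preceding Lemma~\ref{lemma-calculation-Montanari}, i.e.\ \eqref{prop-positive-K} applied to the signed measure $m_1-m_2\in\cM_1(\Omega)$, shows the right-hand side is nonnegative. This is exactly convexity.

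There is no real obstacle. The only point needing care is that \eqref{prop-positive-K} rests on the identity $B(m,m)=\int_D\bigl(\int_\Omega h(\theta,x)\d m(\theta)\bigr)^2\dx$. That identity requires Fubini on $D\times\Omega\times\Omega$ with the signed measure $m$. It is justified because, by \eqref{eq:h-ass}, $\int_D\int\int|h(\theta,x)h(\vt,x)|\d|m|\d|m|\dx\le C_h^2\|1+|x|\|_{L^2(D)}^2\bigl(\int_\Omega(1+|\theta|)\d|m|\bigr)^2<\infty$, using that $D$ is bounded.
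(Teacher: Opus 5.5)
Your proof is correct and follows the paper's route. The paper writes $\cR(f,\cdot)$ via \eqref{eq:quadratic-decomposition} as an affine term plus the symmetric quadratic form with kernel $K$, and concludes convexity from that form's nonnegativity \eqref{prop-positive-K}. You make explicit what the paper leaves implicit: the identity $t\cR(f,m_1)+(1-t)\cR(f,m_2)-\cR(f,m_t)=t(1-t)B(m_1-m_2,m_1-m_2)$, and the integrability bound that justifies Fubini for signed measures in $\cM_1(\Omega)$.
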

\begin{proof}
By \eqref{eq:quadratic-decomposition}, the functional \(\cR(f,m)\) is the sum of an affine term \(m\mapsto \|f\|_{L^{2}(D)}^2-2\int_{\Omega}Q(\theta)\d m (\theta)\) and a   functional 
\[
m\mapsto \int_{\Omega\times \Omega}K(\theta, \vt)\d m (\theta) \d m (\vt)\,.
\]
which is quadratic (here, we use that  \(K(\theta, \vt)=K(\vt, \theta)\))  and nonnegative (as already observed in \eqref{prop-positive-K}).
It follows that \(\cR(f,\cdot)\) is convex, as desired.
\end{proof}

For later use, we introduce for every \(g\in L^{2}_{\omega}(\Omega)\) and every \(\alpha, \beta \geq 0\),  the following variant of the functional \(\cF_{\alpha, \beta}\):
\begin{multline*}
\cJ_{\alpha, \beta, g}(u):=\alpha \|u\|_{L^{2}_\omega(\Omega)}^2+\beta \|\nabla u\|_{L^{2}(\Omega)}^2 + \int_{\Omega\times \Omega}K(\theta, \vt)u(\theta)u(\vt)\dth \d\vt - \int_{\Omega}g(\theta)u(\theta)\omega(\theta)\dth\,.
\end{multline*}

\begin{lemma}\label{lm-cJ}
For every \(\alpha, \beta>0\) and every \(g\in L^{2}_\omega(\Omega)\), the functional \(\cJ_{\alpha, \beta, g}:\cW\to \R\) is \(2\min(\alpha, \beta)\)-convex on \(\cW\), and thus admits a unique minimum \(\bar{u}\) on \(\cW\). Moreover, \(\bar{u}\in W^{2,2}_{loc}(\Omega)\) and \(p(\bar{u}):=\frac{1}{\omega} \Delta \bar{u}\) satisfies that     
\[
p(\bar{u}) = \frac{1}{\beta}\left(\alpha \bar{u} + \frac{1}{\omega}\int_{\Omega}K(\cdot, \vt)\bar{u}(\vt)\d\vt-\frac{1}{2}g\right)\in L^{2}_\omega(\Omega)\,.
\]
Moreover, for every \(v\in \cW\), $\int_{\Omega} \nabla \bar{u}\cdot \nabla v =  -\int_{\Omega}p(\bar{u})v\omega$.
\end{lemma}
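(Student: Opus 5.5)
The plan is to treat \(\cJ_{\alpha,\beta,g}\) as a continuous quadratic functional on the Hilbert space \(\cW\) and apply the direct method (or Lax--Milgram). Then I would read off the weak Euler--Lagrange equation and upgrade it by interior elliptic regularity.

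\emph{Step 1: convexity and well-posedness.} The map \(u\mapsto\int_{\Omega\times\Omega}K(\theta,\vt)u(\theta)u(\vt)\dth\d\vt\) is a nonnegative quadratic form by \eqref{prop-positive-K}, since \(u\dth\in\cM_1(\Omega)\). It is bounded on \(L^2_\omega(\Omega)\): by \eqref{def-CKCQ}, Schwarz and \eqref{eq:weight-assumptions}, its absolute value is at most \((C_h')^2\big(\int_\Omega(1+|\theta|)|u|\dth\big)^2\le (C_h')^2\co\|u\|_{L^2_\omega}^2\). The linear term \(u\mapsto\int g u\omega\) is bounded on \(L^2_\omega\). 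Hence \(\cJ_{\alpha,\beta,g}\) is continuous on \(\cW\), and
\[
\cJ_{\alpha,\beta,g}(u)-\min(\alpha,\beta)\|u\|_\cW^2 = (\alpha-\min(\alpha,\beta))\|u\|_{L^2_\omega}^2+(\beta-\min(\alpha,\beta))\|\nabla u\|_{L^2}^2+(\text{convex})+(\text{linear}),
\]
which is convex. This gives \(2\min(\alpha,\beta)\)-convexity. Strong convexity together with continuity gives coercivity and weak lower semicontinuity on \(\cW\). The direct method then yields a minimizer, and strict convexity yields its uniqueness. Equivalently, I would apply Lax--Milgram to the coercive bilinear form \(a(u,v)=\alpha\langle u,v\rangle_{L^2_\omega}+\beta\langle\nabla u,\nabla v\rangle+\iint K u v\).

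\emph{Step 2: Euler--Lagrange equation.} Differentiating \(t\mapsto\cJ(\bar u+tv)\) at \(t=0\) for \(v\in\cW\), and using the symmetry of \(K\), gives
\[
\beta\int_\Omega\nabla\bar u\cdot\nabla v = -\int_\Omega\Big(\alpha\omega\bar u+\int_\Omega K(\cdot,\vt)\bar u(\vt)\d\vt-\tfrac12 g\omega\Big)v\,.
\]
Setting \(p:=\frac1\beta\big(\alpha\bar u+\frac1\omega\int K(\cdot,\vt)\bar u(\vt)\d\vt-\frac12 g\big)\), this is exactly \(\int\nabla\bar u\cdot\nabla v=-\int p v\omega\) for all \(v\in\cW\). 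To see that \(p\in L^2_\omega\), only the middle term needs checking. Write \(G(\theta):=\int K(\theta,\vt)\bar u(\vt)\d\vt\). Then \(|G(\theta)|\le (C_h')^2\sqrt{\co}\,(1+|\theta|)\|\bar u\|_{L^2_\omega}\), so \(\int (G/\omega)^2\omega=\int G^2/\omega\le C\int(1+|\theta|)^2/\omega<\infty\) by \eqref{eq:weight-assumptions}, using \(\omega\ge1\).

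\emph{Step 3: \(W^{2,2}_{loc}\) regularity.} Taking \(v\in C_c^\infty(\Omega)\subset\cW\), we see that \(\bar u\in W^{1,2}(\Omega)\) is a weak solution of \(\Delta\bar u=p\omega\). Since \(\omega\) is smooth, hence locally bounded, \(p\omega\in L^2_{loc}(\Omega)\). Interior \(L^2\) regularity for the Poisson equation (e.g.\ \cite[Theorem 8.8]{GT}) then gives \(\bar u\in W^{2,2}_{loc}(\Omega)\) with \(\Delta\bar u=p\omega\) a.e., that is \(p(\bar u)=\Delta\bar u/\omega\).

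\emph{Main obstacle.} The main delicate point is the integrability bookkeeping in Steps 1 and 2. One must control the nonlocal term \(K\) on an unbounded \(\Omega\) using only \eqref{eq:weight-assumptions}, both to get continuity of \(\cJ\) on \(\cW\) and to get \(G/\omega\in L^2_\omega\). The rest is standard.
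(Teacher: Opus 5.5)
Your proposal is correct and follows the paper's proof essentially step for step: the same decomposition of $\cJ_{\alpha,\beta,g}(u)-\min(\alpha,\beta)\|u\|_{\cW}^2$ for the $2\min(\alpha,\beta)$-convexity, the same weak Euler--Lagrange equation, the same bound $\bigl|\int_\Omega K(\theta,\vt)\bar u(\vt)\d\vt\bigr|\le (C_h')^2\sqrt{\co}\,(1+|\theta|)\|\bar u\|_{L^2_\omega(\Omega)}$ giving $p(\bar u)\in L^2_\omega(\Omega)$, and \cite[Theorem 8.8]{GT} for the $W^{2,2}_{loc}$ regularity. The only differences are cosmetic: you define $p$ by the explicit formula before identifying it with $\Delta\bar u/\omega$, and you spell out the boundedness of the quadratic $K$-term on $L^2_\omega(\Omega)$, which the paper uses tacitly for coercivity and differentiability.
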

\begin{proof}
The map 
\[
\cJ_{0,0,g}:u\in \cW\mapsto \int_{\Omega\times \Omega}K(\theta, \vt)u(\theta)u(\vt)\dth\d\vt- \int_{\Omega}g(\theta)u(\theta)\omega(\theta)\dth
\] is convex as the sum of a convex quadratic function and an affine function. 
For every \(u\in \cW\),
\[
\cJ_{\alpha, \beta, g}(u)-\min(\alpha, \beta)\|u\|^{2}_{\cW}
=(\alpha-\min(\alpha, \beta))\|u\|^{2}_{L^{2}_\omega(\Omega)}+(\beta-\min(\alpha, \beta))\|\nabla u\|_{L^{2}(\Omega)}^2+\cJ_{0,0,g}(u)\,.
\]
Since the right-hand side is convex, we deduce that \(\cJ_{\alpha, \beta, g}\) is \(2\min(\alpha, \beta)\)-convex on \(\cW\), and thus strictly convex and coercive. 
%Since \(u\mapsto \alpha \|u\|_{L^{2}_{\omega}(\Omega)}^2+\beta \|\nabla u\|_{L^{2}(\Omega)}^2\) is strictly convex, we deduce that \(\cJ_{\alpha,\beta,g}\) is strictly convex on \(\cW\). Moreover, by \eqref{prop-positive-K} and the Schwarz inequality, 
%\begin{align*}
%\cJ_{\alpha, \beta, g}(u)&\geq \alpha \|u\|_{L^{2}_\omega(\Omega)}^2+\beta \|\nabla u\|_{L^{2}(\Omega)}^2
%-\|g\|_{L^{2}_\omega(\Omega)}\|u\|_{L^{2}_\omega(\Omega)}\\
%&\geq \tfrac{\alpha}{2} \|u\|_{L^{2}_\omega(\Omega)}^2+\beta \|\nabla u\|_{L^{2}(\Omega)}^2
%-\tfrac{1}{2\alpha}\|g\|_{L^{2}_\omega(\Omega)}\,,
%\end{align*}
% where in the last line, we have used the inequality \(ab\leq \frac{\alpha}{2}a^2 + \frac{1}{2\alpha}b^2\) for every \(a, b\geq 0\).
%We can conclude that \(\cJ_{\alpha, \beta, g}\) is coercive on \(\cW\). 
Hence, there exists a unique minimum \(\bar{u}\). Moreover, the restriction of \(\cJ_{\alpha, \beta, g}\) is smooth on \(\cW\), so that the Euler equation \(D\cJ_{\alpha, \beta, g}(\bar{u})=0\) holds, namely, for all $v\in \cW$ 
\begin{equation}\label{eq1258}
\beta \int_{\Omega}\nabla \bar{u}\cdot \nabla v\d\theta + \alpha\int_{\Omega}\bar{u} v \omega\d\theta +  \int_{\Omega\times \Omega}K(\theta, \vt)\bar{u}(\theta)v(\vt)\d\theta \d\vt -\frac{1}{2}\int_{\Omega}vg\omega\d\theta=0\,,
\end{equation}
and thus,  in the distributional sense,  
\begin{equation}\label{eq1269}
\beta\Delta\bar{u} = \alpha \omega \bar{u}+\int_{\Omega}K(\cdot,\vt)\bar{u}(\vt)\d\vt -\frac{1}{2}g\omega\,.
\end{equation}
Since \(\omega\in C^{\infty}(\Omega)\subset L^{\infty}_{loc}(\Omega)\) and \(\bar{u}, g\in L^{2}_{\omega}(\Omega)\), we deduce that \(\alpha \omega \bar{u}-\frac{1}{2}g\omega\in L^{2}_{loc}(\Omega)\). Moreover, by \eqref{def-CKCQ} and the Schwarz inequality,
\[
\left|\int_{\Omega}K(\theta, \vt)\bar{u}(\vt)\d\vt\right|\leq (C_{h}')^2\sqrt{\co}(1+|\theta|)\|\bar{u}\|_{L^{2}_\omega(\Omega)}\,.
\]
Hence,
\[
\int_{\Omega}\frac{1}{\omega(\theta)}\left( \int_{\Omega}K(\theta, \vt)\bar{u}(\vt)\d\vt\right)^2\dth
\leq (C_{h}')^4\co^2\|\bar{u}\|_{L^{2}_\omega(\Omega)}^2\,.
\]
This proves that the map 
\[
\theta\mapsto \frac{1}{\omega(\theta)}\int_{\Omega}K(\theta,\vt)\bar{u}(\vt)\d\vt
\] belongs to \(L^{2}_{\omega}(\Omega)\) and thus also to \(L^{2}_{loc}(\Omega)\) (here, we use that \(\omega\geq 1\)).
Hence, the right-hand side of \eqref{eq1269} is in \(L^{2}_{loc}(\Omega)\), so that \(\bar{u}\in W^{2,2}_{loc}(\Omega)\), see \cite[Theorem 8.8]{GT}. 
Setting \(p(\bar{u}):=\frac{\Delta\bar{u}}{\omega}\), we thus have
\begin{equation}\label{eq1275}
p(\bar{u})=\frac{1}{\beta}\left(\alpha \bar{u}+\frac{1}{\omega}\int_{\Omega}K(\cdot,\vt)\bar{u}(\vt)\d\vt -\frac{1}{2}g \right)\in L^{2}_{\omega}(\Omega)\,.
\end{equation}
%Hence, \(p(\bar{u})\in L^{2}_{\omega}(\Omega)\).

Finally, for every \(v\in \cW\), we deduce from \eqref{eq1258} that
\[
\int_{\Omega}\nabla \bar{u}\cdot \nabla v\d\theta
=\frac{-1}{\beta}\left(\alpha\int_{\Omega}\bar{u} v \omega\d\theta +  \int_{\Omega\times \Omega}K(\theta, \vt)\bar{u}(\theta)v(\vt)\d\theta \d\vt -\frac{1}{2}\int_{\Omega}vg\omega\d\theta\right)\,.
\]
Hence, by \eqref{eq1275}, one gets
\[
\int_{\Omega}\nabla \bar{u}\cdot \nabla v\d\theta
=-\int_{\Omega}p(\bar{u})v\omega\d\theta\,.
\]
The proof is complete.
\end{proof}

The regularity of the minimizer \(u^*\) stated in Theorem~\ref{thm:convexity} relies on standard elliptic estimates satisfied by the Euler equation associated to the functional \(\cF^{(f)}_{\alpha, \beta}\).  In order to fully exploit this elliptic structure, we need to establish some regularity properties of the right-hand side of the Euler equation.

\begin{lemma}\label{lm-reg}
Assume that \(h\)  satisfies \eqref{eq:h-ass} and \eqref{eq-Phi-loc-Lip}.
Given \(u^*\in L^{2}_{\omega}(\Omega)\), we consider the function
\[
\ell:\theta \in \Omega \mapsto -Q(\theta)+\int_{\Omega}K(\theta,\vt)u^*(\vt)\,d\vt.
\] 
Then, \(\ell\) is locally Lipschitz on \(\Omega\).
\end{lemma}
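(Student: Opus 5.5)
We fix a ball $B_R$ and two points $\theta,\theta'\in\Omega\cap B_R$, and we estimate $|\ell(\theta)-\ell(\theta')|$ by treating the two terms of $\ell$ separately. In both terms the $\theta$-dependence enters only through $h(\theta,x)$, so the plan is to apply the local Lipschitz bound \eqref{eq-Phi-loc-Lip} inside the $x$-integral and then control the remaining integrals using \eqref{eq:h-ass}, the boundedness of $D$, and the weight condition \eqref{eq:weight-assumptions}.

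For the linear term, by \eqref{eq:bK} and \eqref{eq-Phi-loc-Lip},
\[
|Q(\theta)-Q(\theta')|\le \int_D |f(x)|\,|h(\theta,x)-h(\theta',x)|\dx\le c_{h,R}|\theta-\theta'|\int_D(1+|x|)|f(x)|\dx .
\]
The last integral is finite because $f\in L^2(D)$ and $(1+|x|)\in L^2(D)$, $D$ being bounded. This gives a Lipschitz constant depending on $R$.

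For the nonlocal term, we write
\[
K(\theta,\vt)-K(\theta',\vt)=\int_D\big(h(\theta,x)-h(\theta',x)\big)h(\vt,x)\dx .
\]
Note that $\vt$ ranges over all of $\Omega$, not only over $B_R$. However, the local Lipschitz assumption is needed only in the $\theta$ variable, and for $\vt$ the growth bound \eqref{eq:h-ass} suffices. Combining the two,
\[
|K(\theta,\vt)-K(\theta',\vt)|\le c_{h,R}C_h|\theta-\theta'|(1+|\vt|)\int_D(1+|x|)^2\dx .
\]
Integrating against $|u^*(\vt)|$ and applying the Schwarz inequality with the weight gives
\[
\int_\Omega(1+|\vt|)|u^*(\vt)|\d\vt\le\sqrt{\co}\,\|u^*\|_{L^2_\omega(\Omega)} .
\]
This bound uses $(1+|\vt|)^2\le(1+|\vt|)^4$ in \eqref{eq:weight-assumptions}.

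Combining the two estimates yields $|\ell(\theta)-\ell(\theta')|\le L_R|\theta-\theta'|$ on $\Omega\cap B_R$, with $L_R$ depending on $c_{h,R}$, $C_h$, $D$, $\|f\|_{L^2(D)}$, $\co$ and $\|u^*\|_{L^2_\omega}$. This is exactly local Lipschitz continuity on $\Omega$. The only real care needed is the point above: the Lipschitz constant is used only in the $\theta$ slot, while integrability in $\vt$ over the unbounded set $\Omega$ comes from the weighted bound. Measurability and finiteness of each integral follow from \eqref{def-CKCQ} and Fubini, so no step is a genuine obstacle.
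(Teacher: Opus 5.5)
Your proposal is correct and follows essentially the same argument as the paper. The paper also applies \eqref{eq-Phi-loc-Lip} in the first slot of $h$ for both $Q$ and $K$, uses the growth bound \eqref{eq:h-ass} in the integrated variable, and controls $\int_\Omega(1+|\vt|)|u^*(\vt)|\d\vt$ by $\sqrt{\co}\,\|u^*\|_{L^2_\omega(\Omega)}$ via the Schwarz inequality.
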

\begin{proof}
%From \eqref{def-CKCQ}, one gets
%\[
%\left|\int_{\Omega}K(\theta, \vt)u^*(\vt)\,d\vt \right|\leq  (C_{h}')^2 |\theta|\int_{\Omega}|\vt||u^*(\vt)|\,d\vt\leq (C_{h}')^2\hco |\theta|\|u\|_{L^{2}_\omega(\Omega)}\,.
%\]
%We deduce that \(\ell\in L^{\infty}_{loc}(\Omega)\). 

Let \(R>0\) and \(\theta, \vt \in \Omega \cap B_R\). Then, by definition of \(Q\) and \eqref{eq-Phi-loc-Lip}, 
\[
|Q(\theta)-Q(\vt)|\leq  \int_{D}|f(x)||h(\theta,x)-h(\vt,x)|\,dx \leq  c_{h,R} |\theta-\vt| \int_{D}|f(x)|(1+|x|)\,dx.
\]
By definition of \(K\) and \eqref{eq-Phi-loc-Lip} again, 
for every \(\tau\in \Omega\),
\[
|K(\theta,\tau)-K(\vt,\tau)|\leq c_{h,R}|\theta-\vt|\int_{D}(1+|x|)|h(\tau,x)|\,dx.
\]
Hence, integrating over \(\tau\in \Omega\) and using \eqref{eq:h-ass}, 
\begin{align*}
\left|\int_{\Omega}(K(\theta,\tau)-K(\vt,\tau))u^*(\tau)\d\tau\right|
&\leq c_{h,R}|\theta-\vt|\int_{D\times \Omega}(1+|x|)|h(\tau,x)||u^*(\tau)|\dx\d\tau\\
&\leq C_{h}c_{h,R}|\theta-\vt|\int_{D}(1+|x|)^2\,dx\int_{\Omega}(1+|\tau|)|u^*(\tau)|\,d\tau\\
&\leq C_{h}c_{h,R}|\theta-\vt|\int_{D}(1+|x|)^2\,dx\|u^*\|_{L^{2}_\omega(\Omega)}\sqrt{\co}\,,
\end{align*}
where the last inequality relies on \eqref{eq:h-ass} and the Schwarz inequality.
This implies that \(\ell\in W^{1,\infty}_{loc}(\Omega)\) and completes the proof.
\end{proof}

The existence and regularity of the minimizer in Theorem~\ref{thm:convexity} can now be obtained through classical  tools from convexity theory and bootstrap arguments often used in elliptic regularity. 

\begin{proof}[Proof of Theorem~\ref{thm:convexity}]
We first consider the restriction of \(\cF_{\alpha, \beta}=\cF^{(f)}_{\alpha,\beta}\) to its domain \(\cW\). From Lemma~\ref{lm-cJ} with \(g=2Q/\omega\), one deduces that   
 \(\cF_{\alpha, \beta}\) is \(2\min(\alpha, \beta)\)-convex on the Hilbert space \(\cW\), and thus attains a unique minimum \(u^*=u^*_f\), which  is a weak solution of
the linear elliptic equation:
\begin{equation}\label{eq-Euler-proof-weak}
-\beta\Delta u^* +\alpha \omega u^* +\int_{\Omega}K(\cdot, \vt)u^*(\vt)\d\vt-Q=0\,.
\end{equation}
By Lemma~\ref{lm-reg}, the function \(\ell:=-Q+\int_{\Omega}K(\cdot, \vt)u^*(\vt)\d\vt\) is locally Lipschitz on \(\Omega\).

 Using  that \(u_*\in W^{1,2}(\Omega)\) and \(\omega\in C^{\infty}(\Omega)\), we get that \(\omega u^*\in W^{1,2}_{loc}(\Omega)\).
 Since \(\ell\in W^{1,2}_{loc}(\Omega)\), classical elliptic estimates yield \(u^*\in W^{3,2}_{loc}(\Omega)\), see \cite[Theorem 8.10]{GT}. By the Sobolev embeddings, one deduces that \(u^*\in W^{1,2^{**}}_{loc}(\Omega)\), where
\[
\frac{1}{2^{**}}=\frac{1}{2^*}-\frac{1}{d+1}=\frac{1}{2}-\frac{2}{d+1}
\]
if \(d\geq 4\), while \(2^{**}\) is any number \(>1\) otherwise.  Using that \(\ell\in W^{1,2^{**}}_{loc}(\Omega)\), it follows from \eqref{eq-Euler-proof-weak} again  that \(u^*\in W^{3,2^{**}}_{loc}(\Omega)\), see \cite[Theorem 9.19]{GT}. By a standard bootstrap strategy, we can conclude that \(u^*\in W^{3,p}_{loc}(\Omega)\) for every \(p>1\). By the Morrey embeddings, this implies that \(u^*\in C^{2,s}(\Omega)\) for every \(s\in (0,1)\).

Finally, since the functional
\[
v \mapsto  \cF_{\alpha, \beta}(v) -\alpha \int_{\Omega}  v^2(\theta)\omega(\theta)\dth
\]
is convex on \(\cW\) (being a convex subset of \(L^{2}_\omega(\Omega)\)), the functional \(\cF_{\alpha, \beta}\)  is \(2\alpha\)-convex on \(L^{2}_{\omega}(\Omega)\). 

\end{proof}

We proceed with the stability results, first in \(\cW\) (Proposition~\ref{prop:stability}) and then in \(C^{2}_{loc}(\Omega)\) (Proposition~\ref{prop:stabillity_L^infty}). We also justify the continuity of \(\inf_{L^{2}_{\omega}(\Omega)}\cF^{(f)}_{\alpha, \beta}\) with respect to \(f\) (Corollary~\ref{cor-proposition-stability}). 

\begin{proof}[Proof of Proposition~\ref{prop:stability}]
Let \(\alpha, \beta>0\) and \(f\in L^{2}(D)\).
Since the corresponding functional \(\cF_{\alpha, \beta}\) has a unique minimizer \(u_f\), the latter is the unique solution of the linear equation:
\[
\alpha \int_{\Omega}u_fv\omega + \beta \int_{\Omega} \nabla u_f\cdot \nabla v + \int_{\Omega\times \Omega}K(\theta, \vt)u_f(\theta)v(\vt)\dth\d\vt =\int_{\Omega}Qv\dth\quad \forall v\in \cW\,.
\]
We deduce that \(u_f\) depends linearly on \(Q\). Since \(Q\) depends linearly on \(f\), we can conclude that \(u_f\) depends linearly on \(f\).  Inserting \(v=u_f\) in the above identity and using \eqref{prop-positive-K}, one gets
\[
\alpha \|u_f\|_{L^{2}_{\omega}(\Omega)}^2 +\beta \|\nabla u_f\|_{L^{2}(\Omega)}^2\leq \int_{\Omega}Qu_f\dth\,.
\]
Using the Schwarz and then the Young inequality in the right-hand side, one gets
\begin{equation}\label{eq1267}
\frac{\alpha}{2} \|u_f\|_{L^{2}_{\omega}(\Omega)}^2 +\beta \|\nabla u_f\|_{L^{2}(\Omega)}^2\leq \frac{1}{2\alpha}\int_{\Omega}\frac{Q(\theta)^2}{\omega(\theta)}\dth\,.
\end{equation}
From \eqref{def-CKCQ}, one gets
\[
|Q(\theta)| \leq C_h(1+|\theta|)\int_{D}(1+|x|)|f(x)|\dx \leq C' (1+|\theta|)\|f\|_{L^{2}(D)}\,,
\]
where \(C'=C'(D,h)>0\). Hence,
\begin{equation}\label{eq1284}
\int_{\Omega}\frac{Q(\theta)^2}{\omega(\theta)} \leq C'^2 \|f\|_{L^{2}(D)}^2 \int_{\Omega}\frac{(1+|\theta|)^2}{\omega(\theta)}\dth \leq  C'^2 \co\|f\|_{L^{2}(D)}^2\,, 
\end{equation}
where \(\co\) is defined in \eqref{eq:weight-assumptions}. Inserting the above estimate into \eqref{eq1267} yields the desired result.
\end{proof}

\begin{proof}[Proof of Corollary~\ref{cor-proposition-stability}]
By minimality of \(u_{f_1}^{*}\),
\[
\cF_{\alpha, \beta}^{(f_1)}(u_{f_1}^{*}) \leq \cF_{\alpha, \beta}^{(f_1)}(u_{f_2}^{*})=\cF_{\alpha, \beta}^{(f_2)}(u_{f_2}^{*})+2\int_{\Omega}(Q_{f_2}-Q_{f_1})u_{f_2}^{*}\dth\,,
\]
where \(Q_{f_i}=\int_{D}f_i(x) h(\theta,x)\dx\).
Hence, by the fact that \(Q_{f_2}-Q_{f_1}=Q_{f_2-f_1}\) and the Schwarz inequality,  one gets
\[
\cF_{\alpha, \beta}^{(f_1)}(u_{f_1}^{*})-\cF_{\alpha, \beta}^{(f_2)}(u_{f_2}^{*})\leq 2\|u_{f_2}^{*}\|_{L^{2}_\omega(\Omega)}\left(\int_{\Omega}\frac{|Q_{f_2-f_1}|^2}{\omega}\dth\right)^{1/2}\,.
\]
Relying on \eqref{eq1284} and Proposition~\ref{prop:stability}, we obtain
\[
\cF_{\alpha, \beta}^{(f_1)}(u_{f_1}^{*})-\cF_{\alpha, \beta}^{(f_2)}(u_{f_2}^{*})\leq \frac{C}{\alpha} \|f_2\|_{L^{2}(D)}\|f_2-f_1\|_{L^{2}(D)}\,,
\]
where \(C=C(\Omega, D, \omega,h)>0\). Symetrically, one also has
\[
\cF_{\alpha, \beta}^{(f_2)}(u_{f_2}^{*})-\cF_{\alpha, \beta}^{(f_1)}(u_{f_1}^{*})\leq \frac{C}{\alpha} \|f_1\|_{L^{2}(D)}\|f_2-f_1\|_{L^{2}(D)}\,.
\]
The two inequalities above yield the desired conclusion.
\end{proof}

\begin{proof}[Proof of Proposition~\ref{prop:stabillity_L^infty}]
%We already know from Proposition~\ref{prop:stability} that the correspondance \(f\in L^{2}(D)\mapsto u_f\in \cW\) is a bounded linear map.
Fix \(\Omega'\Subset \Omega\). Then, by Theorem~\ref{thm:convexity}, the restriction \(u_{f}|_{\overline{\Omega'}}\) belongs to \(C^{2}(\overline{\Omega'})\). We only need to establish the continuity of the linear map 
\begin{equation}\label{eq1288}
  f\in L^{2}(D)\mapsto u_{f}|_{\overline{\Omega'}}\in C^{2}(\overline{\Omega'})\,.  
\end{equation}
Let \((f_k)_{k\geq 1}\subset L^{2}(D)\) converge to \(f\in L^{2}(D)\) and assume that \((u_{f_k}|_{\overline{\Omega'}})_{k\geq 1}\) converges to some \(v\in C^{2}(\overline{\Omega'})\). By Proposition~\ref{prop:stability}, we know that \((u_{f_k})_{k\geq 1}\) converges to \(u_f\) in \(\cW\). We deduce that \(v=u_{f}|_{\overline{\Omega'}}\).  Since \(L^{2}(D)\) and \(C^{2}(\overline{\Omega'})\) are Banach spaces, one is entitled to apply the closed graph theorem and deduce that the map in \eqref{eq1288} is continuous, as desired.
\end{proof}

\subsection{\texorpdfstring{On the gradient flow in \(L^{2}_\omega(\Omega)\)}{On the gradient flow in L2omegaOmega}}

\subsubsection{The Hille--Yosida approach}

To study the gradient flow associated to the minimization of \(\cF_{\alpha, \beta}\), we first rely on the Hille--Yosida approach, see \cite[Chapter 7]{Brezis}.

Remember that \(A:D(A)\subset L^{2}_\omega(\Omega)\to L^{2}_\omega(\Omega)\) is the unbounded linear operator defined by 
\[
D(A)=\left\lbrace u\in \cW : \exists p=p(u)\in L^{2}_{\omega}(\Omega) \textrm{ such that } \forall v\in \cW, \int_{\Omega}\nabla u \cdot \nabla v = -\int_{\Omega}pv\omega\right\rbrace\,,
\]
\[
Au=2\alpha u -2\beta p(u)+\frac{2}{\omega}\int_{\Omega}K(\theta, \cdot)u(\theta)\dth\,.
\]
Observe that for every \(u\in D(A)\), the function \(p(u)\) is \(\Delta u/\omega\). Hence, \(\Delta u\in  L^{2}_{loc}(\Omega)\), so that by \cite[Theorem 8.8]{GT}, the function \(u\) is in \(W^{2,2}_{loc}(\Omega)\). 

\begin{lemma}
 If \(\Omega\) is \(C^2\) and bounded and \(\omega\in C^{\infty}(\overline{\Omega})\), then 
 \[
 D(A)=\left\lbrace u\in W^{2,2}(\Omega) : \tfrac{\partial u}{\partial \nu}|_{\partial \Omega}=0 \right\rbrace \,.
 \]
\end{lemma}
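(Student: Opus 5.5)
We prove the two inclusions separately. Throughout, $\Omega$ is bounded and of class $C^2$, and $\omega\in C^\infty(\overline\Omega)$ with $\omega\ge 1$. Then $\omega$ is bounded above and below on $\Omega$, so $L^2_\omega(\Omega)=L^2(\Omega)$ with equivalent norms and $\cW=W^{1,2}(\Omega)$.

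\emph{Inclusion $\supseteq$.} Let $u\in W^{2,2}(\Omega)$ with $\partial u/\partial\nu=0$ on $\partial\Omega$ in the trace sense. Green's formula for $W^{2,2}$ functions on $C^2$ domains gives, for every $v\in W^{1,2}(\Omega)$,
\[
\int_\Omega \nabla u\cdot\nabla v\dth=-\int_\Omega \Delta u\, v\dth+\int_{\partial\Omega}\tfrac{\partial u}{\partial\nu}v\,{\rm d}\mathcal H^{d}=-\int_\Omega \Delta u\, v\dth .
\]
Set $p:=\Delta u/\omega$. Then $p\in L^2(\Omega)=L^2_\omega(\Omega)$ because $1/\omega$ is bounded. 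The identity above reads $\int_\Omega\nabla u\cdot\nabla v=-\int_\Omega p v\omega$, so $u\in D(A)$.

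\emph{Inclusion $\subseteq$.} Let $u\in D(A)$ and set $F:=p\,\omega$. Since $\omega$ is bounded, $F\in L^2(\Omega)$. The defining identity of $D(A)$, with test functions in $W^{1,2}(\Omega)$, says exactly that $u$ is a weak solution of the Neumann problem
\[
-\Delta u = -F \ \text{ in } \Omega,\qquad \partial_\nu u=0 \ \text{ on } \partial\Omega .
\]
Taking $v\equiv1$, which lies in $W^{1,2}$ because $\Omega$ is bounded, gives $\int_\Omega F=0$. This is the expected compatibility condition and is consistent with the problem.

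The key step, and the main obstacle, is global $W^{2,2}$ regularity up to the boundary for the weak Neumann problem on a $C^2$ domain. Interior regularity is already known: $u\in W^{2,2}_{loc}$ by \cite[Theorem 8.8]{GT}. For the boundary we would use the standard argument for Neumann problems, as in the treatment of regularity in \cite{Brezis}. Rewrite the equation as $-\Delta u+u=u-F\in L^2$ to obtain a coercive form on $W^{1,2}$. Localize with a partition of unity and flatten the boundary with $C^2$ charts. The transformed operator has $C^1$ coefficients, and since the Neumann weak formulation uses all $W^{1,2}$ test functions, no boundary condition constrains the test functions. Tangential difference quotients of $u$ are admissible test functions, which bounds the tangential second derivatives in $L^2$. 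The missing normal derivative $\partial_{nn}u$ is then recovered from the equation itself. This yields $u\in W^{2,2}(\Omega)$.

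It remains to identify the boundary condition. Once $u\in W^{2,2}(\Omega)$, Green's formula applies, and comparing it with the weak identity gives
\[
\int_{\partial\Omega}\partial_\nu u\, v\,{\rm d}\mathcal H^d=0\quad\text{for all } v\in W^{1,2}(\Omega).
\]
Traces of $W^{1,2}$ functions are dense in $L^2(\partial\Omega)$, so $\partial_\nu u=0$ on $\partial\Omega$. This proves $\subseteq$ and completes the proof.
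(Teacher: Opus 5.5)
Your proposal is correct and follows the paper's route. The paper likewise uses the two-sided bounds on $\omega$ to reduce to $L^2_\omega(\Omega)=L^2(\Omega)$ and $\cW=W^{1,2}(\Omega)$, and then invokes global $W^{2,2}$ regularity for the weak Neumann problem $-\Delta u+u=u-p\omega\in L^2(\Omega)$, which is exactly \cite[Theorem 9.26]{Brezis}; your Green's-formula argument for the reverse inclusion and your recovery of $\partial u/\partial\nu=0$ from density of traces (after noting $\Delta u=p\omega$ a.e. by testing with $C_c^\infty(\Omega)$) simply spell out what the paper leaves to that reference.
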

\begin{proof}
Since \(\Omega\) is bounded  and \(\omega\) is bounded from below and from above by positive constants, one has \(L^{2}_{\omega}(\Omega)=L^{2}(\Omega)\) and thus \(\cW=W^{1,2}(\Omega)\). Moreover, for every \(p\in L^{2}(\Omega)\), the function \(p\omega\) belongs to \(L^{2}(\Omega)\). Hence, the conclusion follows from standard elliptic estimates, see e.g. \cite[Theorem 9.26]{Brezis}.
\end{proof}

In the next lemma, we check that \(A\) satisfies all the required properties to apply the Hille--Yosida theorem.

\begin{lemma}\label{lm-prop-HY}
The unbounded linear operator \(A:D(A)\subset L^{2}_{\omega}(\Omega)\to L^{2}_\omega(\Omega)\) is an autoadjoint maximal monotone operator.    
\end{lemma}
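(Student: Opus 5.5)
The plan is to check the three properties directly in the Hilbert space $L^{2}_\omega(\Omega)$ with scalar product $\langle u,v\rangle_\omega:=\int_\Omega uv\,\omega\dth$. Maximality will come from Lemma~\ref{lm-cJ}, and self-adjointness from the standard fact that a symmetric maximal monotone operator is self-adjoint \cite[Proposition~7.6]{Brezis}.

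\emph{Well-definedness and symmetry.} For $u\in D(A)$ the term $\frac{2}{\omega}\int_\Omega K(\theta,\cdot)u(\theta)\dth$ belongs to $L^{2}_\omega(\Omega)$. This follows from the estimate already obtained in the proof of Lemma~\ref{lm-cJ} (via \eqref{def-CKCQ}, the Schwarz inequality and \eqref{eq:weight-assumptions}), so $Au\in L^{2}_\omega(\Omega)$.

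Now take $u,v\in D(A)$. The defining identity of $p(u)$, tested with $v\in\cW$, gives $-\int_\Omega p(u)v\omega=\int_\Omega\nabla u\cdot\nabla v$. Using also Fubini, we get
\[
\langle Au,v\rangle_\omega=2\alpha\int_\Omega uv\omega\dth+2\beta\int_\Omega\nabla u\cdot\nabla v\dth+2\int_{\Omega\times\Omega}K(\theta,\vt)u(\theta)v(\vt)\dth\d\vt\,.
\]
Since $K(\theta,\vt)=K(\vt,\theta)$, this expression is symmetric in $(u,v)$, so $A$ is symmetric.

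\emph{Monotonicity.} Taking $v=u$ in the identity above and using \eqref{prop-positive-K} with $m=u\dth$ gives
\[
\langle Au,u\rangle_\omega\ge 2\alpha\|u\|^2_{L^2_\omega}+2\beta\|\nabla u\|^2_{L^2}\ge 0\,.
\]

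\emph{Maximality.} This is the main step: we must show that $I+A$ is onto. Fix $g\in L^{2}_\omega(\Omega)$ and apply Lemma~\ref{lm-cJ} with parameters $\alpha':=\alpha+\tfrac12>0$, $\beta>0$ and datum $g$. The minimizer $\bar u\in\cW$ of $\cJ_{\alpha',\beta,g}$ satisfies
\[
p(\bar u)=\frac1\beta\Bigl(\alpha'\bar u+\frac1\omega\int_\Omega K(\cdot,\vt)\bar u(\vt)\d\vt-\frac12 g\Bigr)\in L^{2}_\omega(\Omega)\,,
\]
together with $\int_\Omega\nabla\bar u\cdot\nabla v=-\int_\Omega p(\bar u)v\omega$ for all $v\in\cW$. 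Hence $\bar u\in D(A)$. Substituting into the definition of $A$, the kernel terms cancel and we obtain
\[
A\bar u=2\alpha\bar u-2\alpha'\bar u+g=-\bar u+g\,.
\]
Therefore $\bar u+A\bar u=g$, so $R(I+A)=L^{2}_\omega(\Omega)$ and $A$ is maximal monotone.

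Combining symmetry with maximal monotonicity, \cite[Proposition~7.6]{Brezis} yields that $A$ is self-adjoint. The only delicate point is the bookkeeping of factors $2$ and $\tfrac12$ between $\cJ$ and $A$, which is what makes the choice $\alpha'=\alpha+\tfrac12$ produce exactly the resolvent equation.
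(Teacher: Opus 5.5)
Your proposal is correct and follows the paper's proof essentially step by step: monotonicity from \eqref{prop-positive-K}, maximality via the minimizer of $\cJ_{\alpha+1/2,\beta,g}$ from Lemma~\ref{lm-cJ} (giving $\bar u+A\bar u=g$), and self-adjointness from symmetry combined with \cite[Proposition VII.6]{Brezis}. Your explicit check that $Au\in L^{2}_\omega(\Omega)$, and your explicit use of $K(\theta,\vt)=K(\vt,\theta)$ for the kernel term in the symmetry step, are small details the paper leaves implicit.
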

\begin{proof}
For every \(u\in D(A)\),
\begin{align*}
\langle A(u),u\rangle_{L^{2}_\omega(\Omega)} &= 2\int_{\Omega}\left( \alpha u - \beta p(u)+\frac{1}{\omega}\int_{\Omega}K(\theta, \cdot)u\right) u\omega\\
&=2\alpha \int_{\Omega}u^2 \omega +2\beta \int_{\Omega}|\nabla u|^2 + 2 \int_{\Omega\times \Omega}K(\theta, \vt)u(\theta)u(\vt)\dth \d\vt\geq 0\,.
\end{align*}
In the last inequality, we have used \eqref{prop-positive-K}.
This proves that \(A\) is monotone.

In order to prove that \(A\) is maximal, let \(g\in L^{2}_\omega(\Omega)\) and consider the functional \(\cJ_{\alpha+1/2, \beta, g}\). Then, by Lemma~\ref{lm-cJ}, this functional  admits a unique minimizer \(\bar{u}\) on \(\cW\) that belongs to \(D(A)\) and satisfies
\begin{equation}\label{eq1271}
\beta p(\bar{u})=\beta \frac{\Delta \bar{u}}{\omega} = \left(\frac{1}{2}+\alpha\right)\bar{u} + \frac{1}{\omega}\int_{\Omega}K(\theta, \cdot)\bar{u}(\theta)\dth -\frac{1}{2}g \,.
\end{equation}
We thus  get \(\bar{u}+A(\bar{u})=g\). We can conclude that \(A\) is maximal. 

In order to prove that \(A\) is self-adjoint, we only need to establish that \(A\) is symmetric, see \cite[Proposition VII.6]{Brezis}. Let \(u, v\in D(A)\). Then,
\[
\int_{\Omega}p(u)v\omega=-\int_{\Omega}\nabla u \cdot \nabla v = \int_{\Omega}p(v)u\omega\,.
\]
We deduce therefrom that $\langle Au, v \rangle_{L^{2}_\omega(\Omega)}= \langle u, Av \rangle_{L^{2}_\omega(\Omega)} $. The proof is complete. 
\end{proof}

In the proof of Proposition~\ref{prop-Hille-Yosida-Brezis-Komura}, we exploit two important results related to gradient flows in Hilbert spaces: the Hille--Yosida theorem on the one hand, and the Br\'ezis--Komura theorem on the other hand. The latter is well-adapted to lower semicontinuous and \(\lambda\)-convex functionals. We have already checked that \(\cF_{\alpha, \beta}\) is \(2\alpha\)-convex on \(L^{2}_\omega(\Omega)\). We now verify that it is lower semicontinuous.

\begin{lemma}\label{lm-F-lsc}
The functional \(\cF_{\alpha,\beta}\) is lower semicontinuous on \(L^{2}_\omega(\Omega)\).    
\end{lemma}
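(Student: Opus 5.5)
To prove lower semicontinuity, I will take a sequence \((u_k)_{k\geq 1}\subset L^{2}_\omega(\Omega)\) converging to \(u\) in \(L^{2}_\omega(\Omega)\) and show that \(\cF_{\alpha,\beta}(u)\leq \liminf_k \cF_{\alpha,\beta}(u_k)\). I may assume the liminf is finite and, after extracting a subsequence, that it is a limit. Then \(u_k\in\cW\) for every \(k\) and \(\sup_k\cF_{\alpha,\beta}(u_k)<\infty\).

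I will split the functional into three pieces: the risk \(\cR(f,u\dth)\), the weighted term \(\alpha\|u\|^2_{L^2_\omega}\), and the Dirichlet term \(\beta\|\nabla u\|^2_{L^2}\).

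The first two pieces are in fact continuous on \(L^{2}_\omega(\Omega)\). For the weighted term this is immediate. For the risk, I write \(f_{u}(x)=\int_\Omega h(\theta,x)u(\theta)\dth\). By \eqref{eq:h-ass} and the Schwarz inequality,
\[
|f_{u_k}(x)-f_u(x)|\leq C_h(1+|x|)\sqrt{\co}\,\|u_k-u\|_{L^2_\omega}.
\]
Since \(D\) is bounded, this gives \(f_{u_k}\to f_u\) in \(L^2(D)\), and hence \(\cR(f,u_k\dth)\to\cR(f,u\dth)\). One could equally use the quadratic decomposition \eqref{eq:quadratic-decomposition} with the bounds \eqref{def-CKCQ}.

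It remains to treat the Dirichlet term. The risk is nonnegative, so boundedness of \(\cF_{\alpha,\beta}(u_k)\) implies that \(\beta\|\nabla u_k\|_{L^2}^2\) is bounded. Since \(\beta>0\), \((u_k)\) is then bounded in the Hilbert space \(\cW\), and a further subsequence converges weakly in \(\cW\) to some \(w\). Weak convergence in \(\cW\) implies weak convergence in \(L^2_\omega\). By uniqueness of limits, \(w=u\), so \(u\in\cW\), which is the only delicate point: ruling out the value \(+\infty\) at the limit. Weak lower semicontinuity of the norm \(v\mapsto\|\nabla v\|_{L^2}^2\) on \(\cW\) then gives \(\|\nabla u\|^2\leq\liminf\|\nabla u_k\|^2\).

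Adding this inequality to the continuity of the other two terms yields the claim. If \(\beta=0\), the functional is continuous on \(\cW\), and the case of a limit outside \(\cW\) would need separate comment. Under the standing assumption \(\beta>0\), however, the argument above suffices.
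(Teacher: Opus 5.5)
Your proof is correct and follows the same route as the paper. Finite energy gives boundedness in \(\cW\), you extract a weakly convergent subsequence in \(\cW\) and identify its limit with the strong \(L^{2}_\omega\) limit \(u\) (so \(u\in\cW\)), and weak lower semicontinuity concludes; the only difference is that you show strong continuity of the risk and weighted terms on \(L^{2}_\omega(\Omega)\) and use weak lower semicontinuity only for the Dirichlet term, whereas the paper applies it to the whole convex, continuous functional on \(\cW\).
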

\begin{proof}
Let \((u_j)_{j\geq 1}\) be a sequence on  \(L^{2}_\omega(\Omega)\) that converges to some \(u\in L^{2}_{\omega}(\Omega)\). We claim that 
\begin{equation}\label{eq1374}
\liminf_{j\to +\infty}\cF_{\alpha, \beta}(u_j)\geq  \cF_{\alpha, \beta}(u).   
\end{equation}
We can assume without loss of generality that \((\cF_{\alpha, \beta}(u_j))_{j\geq 1}\) converges in \(\R\) and that each \(u_j\) belongs to \(\cW\). By coercivity of \(\cF_{\alpha,\beta}\), this implies that \((u_j)_{j\geq 1}\) is bounded in \(\cW\). Hence, one can extract a subsequence (we do not relabel) that converges weakly in the Hilbert space \(\cW\) to some \(\tilde{u}\). By uniqueness of the limit in \(L^{2}_\omega(\Omega)\), one has \(u=\tilde{u}\). Since \(\cF_{\alpha, \beta}\) is convex and lower-semicontinuous on \(\cW\), it is also weakly sequentially lower semicontinuous on \(\cW\) and \eqref{eq1374} follows.
\end{proof}

Now that all their assumptions have been verified, it remains to apply the Hille--Yosida theorem and the Br\'ezis--Komura theorem to obtain Proposition~\ref{prop-Hille-Yosida-Brezis-Komura}.

\begin{proof}[Proof of Proposition~\ref{prop-Hille-Yosida-Brezis-Komura}]
Let \(u_0\in L^{2}_{\omega}(\Omega)\). 
Let \(v\) be the unique minimizer of \(\cJ_{\alpha, \beta, 2Q/\omega}\) given by Lemma~\ref{lm-cJ}. Then,  \(v\in D(A)\) and \(Av=2Q/\omega\).
From \cite[Theorem VII.6, Theorem VII.7]{Brezis} and Lemma~\ref{lm-prop-HY}, we deduce that there exists a unique \(\bar{u}\in C^{0}([0,\infty[;L^{2}_\omega(\Omega))\cap C^1((0,\infty);L^{2}_\omega(\Omega))\cap C^0((0,\infty);D(A))\) such that 
\[
\begin{cases}
\frac{d \bar{u}}{dt}+A\bar{u}=0\,, \\
\bar{u}(0)=u_0-v\,.
\end{cases}
\]
Moreover, \(\|\bar{u}(t)\|_{L^{2}_\omega(\Omega)}\leq \|u_0-v\|_{L^{2}_\omega(\Omega)}\), \(\|\frac{d\bar{u}}{dt}(t)\|_{L^{2}_\omega(\Omega)}=\|A\bar{u}(t)\|_{L^{2}_\omega(\Omega)}\leq \frac{1}{t}\|u_0-v\|_{L^{2}_\omega(\Omega)}\). 
%For every \(k, \ell \geq 1\), one also has \(\bar{u}\in C^{k}((0,+\infty);D(A^\ell))\). Finally, if \(u_0\in D(A)\), one has in fact \(\bar{u}\in C^{1}([0,\infty);L^{2}_\omega(\Omega))\cap C^{0}([0,\infty);D(A))\).
%There exists a continuous semigroup of contractions \cite[Chapter 7, Remark 5]{Brezis} in \(L^{2}_\omega(\Omega)\) denoted by \(S_A(t)\) such that \(\bar{u}(t)=S_A(t)u_0\). 

We then set \(u:=\bar{u}+v\). Then, \(u\) belongs to \(C^{0}([0,\infty[;L^{2}_\omega(\Omega))\cap C^1((0,\infty);L^{2}_\omega(\Omega))\cap C^0((0,\infty);D(A)) \) and satisfies

\begin{equation}\label{eq1307}
\begin{cases}
\frac{du}{dt}=  \frac{d\bar{u}}{dt} = -A\bar{u} =-A(u-v)=-Au+2\frac{Q}{\omega}\,,\\
u(0)=u_0\,.
\end{cases}
\end{equation}

Using the fact that the unbounded linear operator \(A\) is derived from the  functional \(\cF_{\alpha, \beta}\), which is lower semicontinuous (see Lemma~\ref{lm-F-lsc}) and \(2\alpha\)--convex on \(L^{2}_\omega(\Omega)\) (by Theorem~\ref{thm:convexity}), we can be more specific on the convergence of the gradient flow to the unique minimum \(u^*\) of \(\cF_{\alpha, \beta}\).
More specifically, by smoothness of \(\cF_{\alpha, \beta}\) when restricted to \(\cW\), the domain of the convex subdifferential \(\partial \cF_{\alpha, \beta}\) coincides with \(D(A)\) and for every \(u\in \cW\), 
\[
\nabla \cF_{\alpha, \beta}(u)=Au-2\tfrac{Q}{\omega}\,.
\]
The Br\'ezis--Komura theorem, see e.g. \cite[Theorem 11.7, Proposition 11.9]{AmBrSe1}, then states that the gradient flow given in \eqref{eq1307} has the following additional properties: there exists a continuous semigroup of contractions \((S_t)_{t\geq 0}\) such that
\[
u(t)=S_t u(0), 
\]
\begin{equation}\label{eq1502}
\forall v_1, v_2\in L^{2}_\omega(\Omega), \qquad \|S_t v_1 -S_t v_2\|_{L^{2}_\omega(\Omega)} \leq e^{-2\alpha t}\|v_1-v_2\|_{L^{2}_\omega(\Omega)}\,.
\end{equation}
Moreover, \(t\mapsto e^{2\alpha t}\|u'(t)\|_{L^{2}_\omega(\Omega)}\) is nonincreasing on \((0,\infty)\) and
\[
\cF_{\alpha, \beta}(u(t))\leq \inf_{v\in \cW}\left(\cF_{\alpha, \beta}(v)+\frac{\alpha}{e^{2\alpha t}-1} \|u(0)-v\|^{2}_{L^{2}_\omega(\Omega)}\right)\,.
\]
\end{proof}

\begin{proof}[Proof of Corollary~\ref{cor:exp-convergence}]

Let \(u_*\) be the minimizer of \(\cF_{\alpha, \beta}\). Then, Lemma~\ref{lm-cJ} with \(g=2Q/\omega\) implies that \(u_*\in D(A)\) and \(A(u_*)=2Q/\omega\). Hence, the gradient flow associated to the initial condition \(u_*\) is the constant map \(t\mapsto u_*\).
For every \(v\in L^{2}_\omega(\Omega)\), the estimate~\eqref{eq1502} applied to \(v_1=v\) and \(v_2=u_*\) implies that
\[
\|S_tv-u_*\|_{L^{2}_\omega(\Omega)}=\|S_t v - S_t u_*\|_{L^{2}_\omega(\Omega)}\leq e^{-2\alpha t}\|v-u_*\|_{L^{2}_\omega(\Omega)}\,.
\]
\end{proof}

We conclude this section by an interesting estimate on the approximation of the solution of a gradient flow by the solution of the corresponding  implicit Euler scheme.

\begin{theorem}[Theorem 12.5 in \cite{AmBrSe1}]
\label{thm:jko}
Let \(H\) be a Hilbert space, let  $\wt\cF:H\to [0,\infty]$ be convex and lower semicontinuous, and let $\varphi \in \mathrm{Dom}(\wt\cF)$.
Given $\tau>0$, we define 
\begin{equation}
\label{eq:jko-appendix}
\tilde{\varrho}^0 = \varphi,
\qquad
\tilde{\varrho}^{k\tau}
=
\arg\min_{\psi \in H}
\left\{
\wt\cF(\psi)
+ \tfrac{1}{2\tau}
\|\psi - \tilde{\varrho}^{(k-1)\tau}\|_{H}^2
\right\}\,.
\end{equation}
We also consider the piecewise constant left-continuous interpolation of \((\tilde{\varrho}^{k\tau})_{k\geq 0}\):
\begin{equation}\label{eq-piecewie-interpolation}
\tilde{\varrho}_{\tau}(t):=
\begin{cases}
\varphi & \textrm{ if } t=0\,,\\
\tilde{\varrho}^{k\tau} & \textrm{ if } k\geq 1 \textrm{ and }t\in ((k-1)\tau, k\tau]\,.
\end{cases}
\end{equation}
Then:
\begin{itemize}
\item[(i)] for every \(t\geq 0\), the family $(\tilde{\varrho}_\tau(t))_{\tau>0}$ is Cauchy as $\tau \to 0$, so that its limit \(\varrho(t)\) exists\,,
\item[(ii)] the curve $t\mapsto \varrho(t)$ is the gradient flow starting from $\varphi$\,,
\item[(iii)] $\|\tilde{\varrho}_\tau(t) - \varrho(t)\|_{H}
\le 2(\sqrt{2}+1) \sqrt{\tau \, \wt\cF(\varphi)}$ for every \(\tau>0\) and every \(t\geq 0\)\,.
\end{itemize} 
\end{theorem}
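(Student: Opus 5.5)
The plan is to work entirely with the resolvent structure of the scheme \eqref{eq:jko-appendix} in the Hilbert space $H$, following the classical Crandall--Liggett/Ambrosio--Gigli--Savar\'e strategy. The whole argument rests on one-step estimates and a comparison of two different step sizes.

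\textbf{Step 1: well-posedness and the discrete variational inequality.} Since $\wt\cF$ is convex, lower semicontinuous and $\ge 0$, the functional $\psi\mapsto \wt\cF(\psi)+\frac{1}{2\tau}\|\psi-x\|_H^2$ is $\frac1\tau$-convex, lower semicontinuous and coercive, so it has a unique minimizer $J_\tau x$. Each $\tilde\varrho^{k\tau}$ is therefore well defined.

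Comparing $J_\tau x$ with the competitors $(1-\epsilon)J_\tau x+\epsilon v$ and letting $\epsilon\to0$ gives the discrete EVI: for every $v\in \mathrm{Dom}(\wt\cF)$,
\[
\tfrac{1}{2\tau}\big(\|J_\tau x-v\|^2-\|x-v\|^2\big)+\tfrac{1}{2\tau}\|J_\tau x-x\|^2\le \wt\cF(v)-\wt\cF(J_\tau x)\,.
\]

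\textbf{Step 2: a priori bounds.} Testing minimality against $\psi=\tilde\varrho^{(k-1)\tau}$ and summing over $k$ yields monotonicity of $k\mapsto\wt\cF(\tilde\varrho^{k\tau})$. It also yields the energy bound
\[
\sum_k \tfrac{1}{2\tau}\|\tilde\varrho^{k\tau}-\tilde\varrho^{(k-1)\tau}\|^2\le \wt\cF(\varphi)\,.
\]
From this one gets $\|\tilde\varrho_\tau(t)-\tilde\varrho_\tau(s)\|\lesssim \sqrt{\tau\wt\cF(\varphi)}$ when $|t-s|\le\tau$. The same bound controls the gap between the piecewise constant and the piecewise affine interpolations.

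\textbf{Step 3: Cauchy estimate between two step sizes (main obstacle).} Take two step sizes $\tau,\eta$. Apply the discrete EVI of scheme $\tau$ with test point $v$ equal to a value of scheme $\eta$, and symmetrically. Adding the two inequalities, the $\wt\cF$-terms cancel up to telescoping errors.

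This produces a differential-type inequality for $d(t,s)^2=\|\tilde\varrho_\tau(t)-\tilde\varrho_\eta(s)\|^2$ along the diagonal $t=s$. The error terms are $\tau(\wt\cF(\cdot)-\wt\cF(\cdot))$ and $\eta(\ldots)$, which sum to at most $(\tau+\eta)\wt\cF(\varphi)$ thanks to monotonicity of the energies and $\wt\cF\ge0$.

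The delicate bookkeeping is that the two time grids are not aligned. I would first handle $\eta=\tau/2^n$ or use the affine interpolants, then absorb the misalignment with Step 2. This should give
\[
\|\tilde\varrho_\tau(t)-\tilde\varrho_\eta(t)\|^2\le C(\tau+\eta)\wt\cF(\varphi)\,,
\]
where getting the sharp constant $2(\sqrt2+1)$ requires careful tracking. The sharpest route I would try is an intermediate $\sqrt{\tau}$ bound against the $\tau/2$ scheme. Summing over the dyadic sequence $\tau 2^{-n}$ gives the geometric series $\sum_n\sqrt{\tau 2^{-n}}=\sqrt\tau\,\frac{\sqrt2}{\sqrt2-1}=(2+\sqrt2)\sqrt\tau$, whose form matches the constant $2(\sqrt2+1)$.

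\textbf{Step 4: conclusion.} Step 3 gives (i): $(\tilde\varrho_\tau(t))_\tau$ is Cauchy, with a limit $\varrho(t)$ locally uniform in $t$. Letting $\eta\to0$ in the Cauchy estimate gives (iii).

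For (ii), pass to the limit in the discrete EVI integrated in time, using lower semicontinuity of $\wt\cF$. This shows $\varrho$ satisfies the continuous EVI $\frac12\frac{d}{dt}\|\varrho(t)-v\|^2\le \wt\cF(v)-\wt\cF(\varrho(t))$. That inequality characterizes the gradient flow $\varrho'\in-\partial\wt\cF(\varrho)$ in Hilbert spaces (the Br\'ezis--Komura setting used above), which identifies $\varrho$ as the gradient flow from $\varphi$.
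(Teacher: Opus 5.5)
The paper gives no proof of this statement; it quotes it from \cite[Theorem~12.5]{AmBrSe1}, so your proposal has to stand on its own. Steps~1--2 and the EVI limit argument for (ii) are sound. Step~3, however, carries all of (i) and (iii) and is only announced (``this should give'', ``requires careful tracking''), and it has a concrete hole.

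For $\eta=\tau/2$ at grid times the pairing does close, and you should write it out. Let $x_k=\tilde\varrho^{k\tau}$ and let $y_j$ be the iterates of step $\tau/2$. Add the discrete EVI for $x_k$ tested at $y_{2k}$ to the discrete EVIs for $y_{2k-1},y_{2k}$ tested at $x_{k-1}$. This gives
\[
\|x_k-y_{2k}\|^2-\|x_{k-1}-y_{2k-2}\|^2\le 2\tau\big(\wt\cF(x_{k-1})-\wt\cF(x_k)\big)+\tau\big(\wt\cF(y_{2k})-\wt\cF(y_{2k-1})\big)\le 2\tau\big(\wt\cF(x_{k-1})-\wt\cF(x_k)\big),
\]
so $\|\tilde\varrho_\tau(n\tau)-\tilde\varrho_{\tau/2}(n\tau)\|\le\sqrt{2\tau\wt\cF(\varphi)}$. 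This factor $\sqrt2$, which your proposal never identifies, is what turns your $(2+\sqrt2)$ into $\sqrt2(2+\sqrt2)=2(\sqrt2+1)$.

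The off-grid times are not covered by this. On $((n-1)\tau,(n-\frac12)\tau]$ one has $\tilde\varrho_{\tau/2}=y_{2n-1}$, so you must compare $x_n$ with $y_{2n-1}$. The same pairing (EVI for $x_n$ tested at $y_{2n-1}$, EVI for $y_{2n-1}$ tested at $x_{n-1}$) leaves the unsigned remainder $\tau\big(\wt\cF(y_{2n-1})-\wt\cF(x_{n-1})\big)$. Its crude bound $\tau\wt\cF(\varphi)$ loses the stated constant, so off-grid times need an argument you do not give.

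For non-dyadic $\eta$, Step~2 cannot ``absorb the misalignment'': it compares a scheme with itself. Comparing two schemes on incommensurable grids needs a genuine two-parameter discrete EVI argument, as in \cite{ambrosio2008gradient}, and that is absent.

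There is also a logical gap in Step~4. The dyadic route only yields, for each fixed $\tau$, a limit $\varrho^\tau(t)=\lim_n\tilde\varrho_{\tau2^{-n}}(t)$ that a priori depends on $\tau$. This gives neither (i) for the whole family $\tau\to0$ nor (iii) against a common $\varrho(t)$, so ``Step~3 gives (i)'' does not follow. The missing idea is to reverse the order:
\begin{enumerate}
\item Pass to the limit in the integrated discrete EVI along each dyadic sequence to show every $\varrho^\tau$ is an EVI solution from $\varphi$. Step~2 gives the needed equicontinuity, and Fatou plus lower semicontinuity handle $\wt\cF$.
\item Use uniqueness of EVI solutions (the EVI implies $\tfrac{d}{dt}\|u-w\|^2\le0$) to get $\varrho^\tau=\varrho$ independent of $\tau$.
\item Only then does the dyadic sum give (iii), and (i) follows from (iii) by the triangle inequality.
\end{enumerate}
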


\subsection{Numerical simulations}
In this section, we provide more detailed explanations and derivations accompanying Section~\ref{sec:numerics}.
\subsubsection*{Derivation of quadratic form}
In this subsection we derive the approximating functional $\widehat{\mathscr{F}}_{\alpha, \beta}$. We denote 
$u \approx \widehat{u} = \sum_{i=1}^{M} a_i \widehat{u}_i$.
We consider a given dataset $\{(x_i, f(x_i))\}_{i=1}^{N_D}$.
\noindent We start with the approximating risk function:
\begin{align*}
\mathscr{R} (\vec f, \vec a) &= C_D\sum_{j=1}^{N_D} \Big(f(x_j) - \sum_{i=1}^{M} a_i \int_\Omega h(\theta, x_j) \,\widehat{u}_i(\theta) \d\theta\Big)^2 = C_D\lvert \vec f - U \vec a \rvert^2\\
&= C_D\left(\vert \vec f\,\rvert^2 - 2\vec f \,^{\top} U \vec a + \lvert U \vec a\rvert^2\right) \, ,
\end{align*}
where \(C_D=\mathcal{L}^d(D)/N_D\) (here, $\mathcal{L}^d(D)$ is the Lebesgue measure of \(D\)) and
\begin{equation*}
U_{ki} = \int_\Omega h(\theta, x_k) \,\widehat{u}_i(\theta) \d\theta \, .
\end{equation*}
Now we derive the weighted Lebesgue norm $  \lVert\widehat u\rVert^2_{L^2_\omega(\Omega)}$:
\begin{flalign*}
    \lVert\widehat u&\rVert^2_{L_\omega(\Omega)} = \Big\lVert \sum_{i=1}^{M} a_u \widehat{u}_i\Big\rVert^2_{L^2_\omega(\Omega)} = \sum_{i,j}^{M} a_i a_j \langle \widehat{u}_i, \widehat{u}_j \rangle_{L^2_\omega (\Omega)} = \vec{a}^{\top} V \, \vec{a} \, ,\quad V_{ik} = \langle \widehat{u}_i, \widehat{u}_j \rangle _{L^2_\omega(\Omega)}\, .
\end{flalign*}
And finally the gradient norm $\|\nabla  \widehat{u}\|_{L^2(\Omega)}^2$:
\begin{flalign*}
\|\nabla  \widehat{u}\|_{L^2(\Omega)}^2 
&= \sum_{\ell=1}^{d+1} \| \partial_\ell  u \|_{L^2(\Omega)}^2 
= \sum_{\ell=1}^{d+1} \Big\lVert  \sum_{i=1}^{M} a_i \partial_\ell \widehat{ u}_i \Big\rVert_{L^2(\Omega)}^2 = \sum_{d=1}^{\ell+1} \sum_{i,j = 1}^{M} a_i a_j \langle \partial_\ell\widehat{u}_i, \partial_\ell\widehat{u}_j \rangle_{L^2 (\Omega)} \\
& = \vec a \cdot W \vec a \, , \quad W_{ij} = \langle \nabla \widehat{u}_i, \nabla \widehat{u}_j \rangle_{L^{2}(\Omega)} \, .
\end{flalign*}

\subsubsection*{Basis functions}
We have used the following type of basis functions $\{\widehat {u}_i \}_{i=1}^{M}$:
\begin{enumerate}
\item \textit{Polynomials} on $\Omega = B_{R}^d \times (-L,L)$ (where for $d \geq 1$ the notation $B_{R}^d$ refers  to the ball of radius $R$ and center \(0\) in \(\R^d\)):
\begin{equation} \label{eq:polynomial}
\widehat{u}_i(\theta) =  \prod_{j=1}^{d+1} \theta_j^{p^{i}_j}
\end{equation}
\item \textit{Trigonometric orthonormal basis} on $\Omega = (-R,R)\times (-L,L)$ (for $d=1$):
\begin{equation} \label{eq:harmonic}
\widehat{u}_i (\theta) = c_1 c_2 \cos \Big(\tfrac{p_1^i \pi (\theta_0 + L)}{2L} \Big)\cos \Big(\tfrac{p_2^i \pi (\theta_1 + R)}{2R} \Big)
\end{equation}
with
\begin{equation*}
c_1 = \textstyle\begin{cases}
    1/\sqrt{2L} & p_1^i >0 \,,\\
    1/\sqrt{L} & p_1^i =0\,,
    \end{cases} \qquad  \qquad 
c_2 = \textstyle\begin{cases}
    1/\sqrt{2R} & p_2^i >0\,, \\
    1/\sqrt{R} & p_2^i =0\,.
    \end{cases}    
\end{equation*}
These functions are the solutions of the eigenvalue problem for the Laplacian: $\Delta \widehat{u}_i = \lambda\widehat{u}_i$ with zero Neumann boundary condition.
\end{enumerate} 

\subsection*{\texorpdfstring{Number of basis functions $M$}{Number of basis functions M}}
Given \(s\in \mathbb{N}_0\), we consider the set of polynomials of degree  not larger than $s$ on the space $\Omega \subseteq \R^{d+1}$. The exponents of the monomials involve all possible tuples $p^i=(p^i_j)_{1\leq j\leq d+1}$ such that $\sum_{j=1}^{d+1} p^i_j \leq s$.  The number  $M$ of such tuples can be computed as follows. 
  
We define an additional element of each tuple variable
$p^i_{d+2} := s - \sum_{j=1}^{d+1} p^i_j $
and thus the condition  $\sum_{j=1}^{d+1} p^i_j \leq s$ is equivalent to $p^i_{d+2} \ge 0$.
We obtain  $\sum_{j=1}^{d+2} p^i_j = s$ with all variables satisfying $ p^i_j \in \mathbb{N}_{0}$. So the number of all such possible tuples is given by the stars and bars formula:
\begin{equation*}
\textstyle \binom{s + (d+2) - 1}{(d+2) - 1} = \binom{s + d + 1}{d + 1} \,.
\end{equation*}

\subsection*{Additional examples}

\textbf{Example 2': Discontinuous target --- sign function ($d=1$)} 

We consider the same dataset as in Example 2. We approximate by polynomial functions $\{u_i\}_{i=1}^{M}$. The minimum of $\cF^{(f)}_{\alpha, \beta}$ is approximated by a polynomial \eqref{eq:polynomial} of order $s=60$, thus
$M = 1{,}891$. We have chosen  $\Omega=(-R,R)\times (-L,L)$ with $R=L=1.5$. For the regularized functional,  we have taken $\alpha = 3.2\times 10^{-6}$ and $\beta = 2 \times 10^{-5}$. See the results on Figure~\ref{fig3}. It is worth to emphasize that the smooth minimizer of the regularized functional does not reflect the outlier.

\begin{figure}[h!] 
\centering \includegraphics[width=0.5\textwidth]{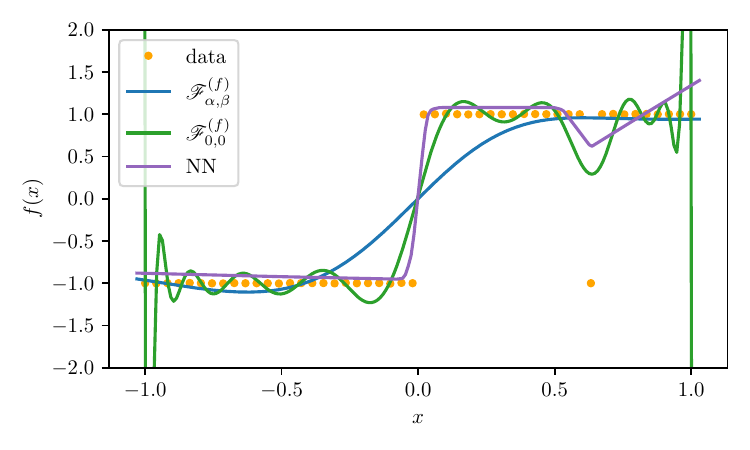} \caption{Example 2'}  
\label{fig3} 
\end{figure} 

\textbf{Example 5: Sinus of norm value for ($d=2$)} 

We generate $N_D = 100$ observations on the domain $(-1,1)^2$ of the function $f=\sin(3\lvert x\vert)$. We approximate the minimum by polynomials \eqref{eq:polynomial} of order $s=5$, so that $M=112$. We have chosen $R=L=5$ and $\alpha = 4\times 10^{-9}$ and $\beta=4\times 10^{-8}$. We note that the sparsity of the matrix $U$ is $\approx10\,\%$. See the result of approximation by the regularized functional $\widehat\cF_{\alpha,\beta}^{\ (f)}$ on Figure~\ref{fig4}. 

\begin{figure}[h!] 
\centering 
\includegraphics[width=0.5\textwidth]{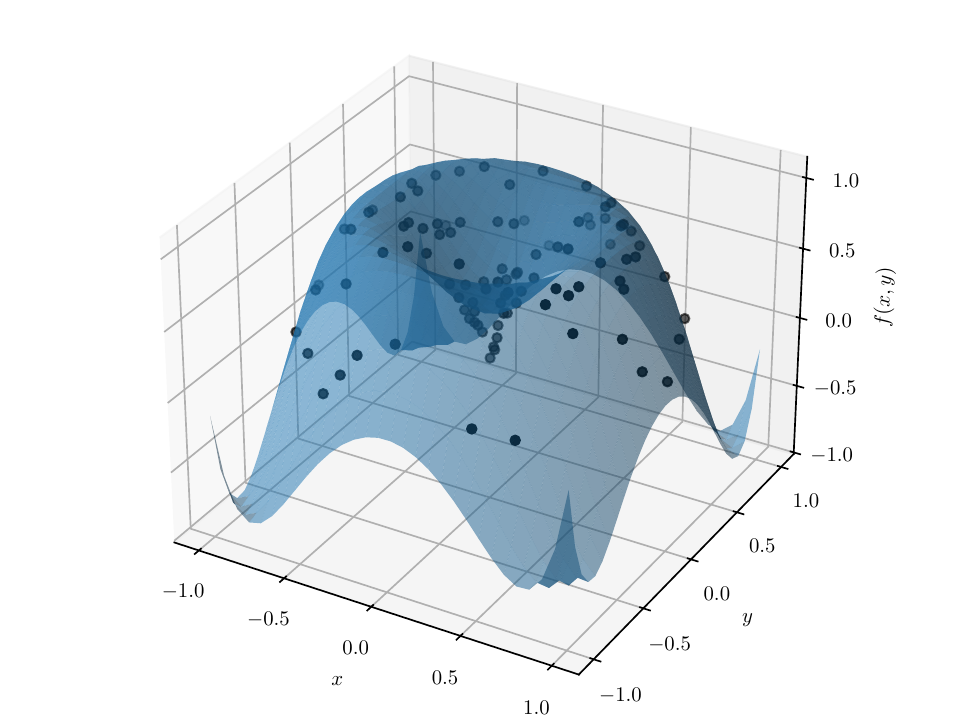} 
\caption{Example 5} 
\label{fig4} 
\end{figure}

\end{document}